\newcolumntype{x}[1]{%
>{\raggedleft\hspace{0pt}}p{#1}}%
\newtheorem{lemma}{Lemma}
\newtheorem{remark}{Remark}
\newtheorem{theorem}{Theorem}
\newtheorem{corollary}{Corollary}
\theoremstyle{definition}
\newtheorem{example}{Example}
\def\bn{\mathbb N}
\def\br{\mathbb R}
\def\bc{\mathbb C}
\begin{document}

\title{Robust Distributed Planar Formation Control for Higher-Order Holonomic and Nonholonomic Agents}

\author{Kaveh Fathian, Sleiman Safaoui, Tyler H. Summers, Nicholas R. Gans
\thanks{This work was supported by the U.S. Air Force Research Laboratory under grant FA8651-17-1-0001. The work of T. H. Summers was sponsored by the Army Research Office and was accomplished under Grant Number: W911NF-17-1-0058.}
\thanks{K. Fathian is with the Department of Aeronautics and Astronautics, Massachusetts Institute of Technology, Cambridge, MA, 02139 USA.~  E-mail: {kavehf@mit.edu}. S. Safaoui is with the Department of Electrical Engineering, T. H. Summers is with the  Department of Mechanical Engineering, University of Texas at Dallas, Richardson, TX, 75080 USA.~  E-mail: \{sleiman.safaoui, tyler.summers\}@utdallas.edu. N. R. Gans is with the UT Arlington Research Institute, University of Texas at Arlington, Arlington, TX, 76118 USA.~  E-mail: {nick.gans@uta.edu}.  }%
}%


\maketitle

\begin{abstract}
We present a distributed formation control strategy for agents with a variety of dynamics to achieve a desired planar formation.
Our approach is based on the barycentric-coordinate-based (BCB) control, 
which is fully distributed, does not require inter-agent communication or a common sense of orientation, and can be implemented using relative position measurements acquired by agents in their local coordinate frames. 
This removes the need for global positioning or alignment of local coordinate frames, which are required across several existing strategies.
We show how the BCB control for agents with the simplest dynamical model, i.e., the single-integrator dynamics, can be extended to agents with higher-order dynamics such as quadrotors, and nonholonomic agents such as unicycles and cars. 
Specifically, our extension preserves the desired convergence and robustness guarantees of the BCB approach and is provably
robust to saturations in the input and unmodeled linear actuator dynamics for unicycle and car agents.
We further show that under our proposed BCB control design, the agents can move along a rotated and scaled control direction without affecting the convergence to the desired formation. 
This observation is used to design a fully distributed collision avoidance strategy, which is often not considered in the formation control literature.
We demonstrate the proposed approach in simulations and further present a distributed robotic platform to test the strategy experimentally. Our experimental platform consists of off-the-shelf equipment that can be used to test and validate other multi-agent algorithms. The code and implementation instructions for this platform are available online. 
\end{abstract}

\begin{IEEEkeywords}
Multi-agent systems, formation control, distributed collision avoidance, distributed robotic platform.
\end{IEEEkeywords}

\IEEEdisplaynontitleabstractindextext

\IEEEpeerreviewmaketitle

\section*{Supplementary Material}

Video of paper summary, simulations, and experiments is available at 
{\href{https://youtu.be/1pfgXESMHxE}{https://youtu.be/1pfgXESMHxE}}.  Code for simulations and the distributed multi-robot platform can be download from
{\href{https://goo.gl/QH5qhw}{https://goo.gl/QH5qhw}}.

\section{Introduction}

Technological advances in recent years has made it increasingly possible to deploy a large fleet of agents to cooperatively map and monitor an environment \cite{Keller2017, Thomas2017}, deliver goods \cite{Dorling2017}, or manipulate objects \cite{Wang2016a, Saldana2017, Baehnemann2017}. 
In these applications, the ability to bring the agents to a desired geometric shape is a fundamental building block upon which more sophisticated maneuvering and navigation policies are constructed.
By assigning local control laws to individual agents, distributed formation control strategies ensure that a desired geometric shape emerge from the collective behavior of agents.
Compared to the centralized methods, distributed strategies have  better scalability, naturally parallelized computation, resilience to communication loss and hardware failure, and robustness to uncertainty and lack of global measurements.

\begin{figure}
	\begin{center}
		\includegraphics[trim =54mm 95mm 49mm 88mm, clip, width=0.5\textwidth]{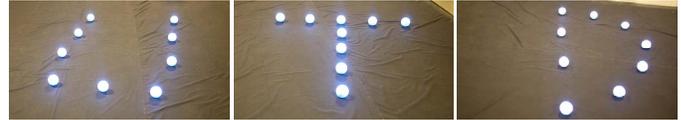}	
		\caption{The proposed formation control strategy implemented on our distributed robotic platform to form the letters UTD.}
		\label{fig:UTD}
	\end{center}
\end{figure}

In this work, we present a unified, distributed control strategy for planar formations of agents with a variety of dynamics. 
In particular, we consider agents with linear or input-to-state linearizable dynamics, and further extend the results to agents with nonholonomic unicycle and car dynamics.
Our approach is based on the barycentric-coordinate-based (BCB) control, 
which is fully distributed, does not require inter-agent communication or a common sense of orientation, and can be implemented using relative position measurements acquired by agents in their local coordinate frames. 
We start by formulating a semidefinite program (SDP) to compute the control gains needed for agents with the single-integrator model. 
Thanks to this design strategy, convergence to the desired formation is invariant to any (strictly) positive scaling of the control vector and any rotation amounting up to $\pm 90^\circ$. This observation leads to provable robustness guarantees such as robustness to saturations in the input and disturbances in the control direction.
This observation is further exploited to design a fully distributed collision avoidance strategy, which is often not considered in the formation control literature. 
The control for single-integrator agents is extended subsequently to agents with higher-order linear, feedback-to-state linearizable, nonholonomic unicycle, and nonholonomic car dynamics. 
The main challenges addressed in these extensions are to 1) ensure convergence guarantees to the desired formation are preserved; 2) ensure robustness properties are preserved (e.g., robustness to input saturations and unmodeled/unknown linear actuator dynamics); 3) have a unified design by using the same control gains computed from the SDP approach for single-integrator agents.
To vet the theoretical results, several simulations are presented for quadrotors, differential drive robots with unicycle dynamics, and cars, where it is shown that agents achieve a desired formation without collision. 
To typify the results further, the proposed control strategy is tested experimentally on a distributed differential-drive wheeled robotic platform with different numbers of robots and desired formations.

\subsection{Related Work and Contributions} \label{sec:contrib}

\phantomsection \label{p:b7} 
There exists a notable body of work on distributed formation control (see survey papers  \cite{Barca2013, Yan2013, Oh2015}). These works can be differentiated by the assumed sensing and measurements (e.g., global versus relative/local) and the use of inter-agent communication (e.g., allowed versus limited).
Examples of methods that require global measurements (e.g., GPS) are \cite{Michael2008, Hyun2016,  Motoyama2017}. 
Consensus-based methods, such as \cite{Lawton2003, Ren2007, Montijano2014, Lee2016}, or techniques based on distributed pose estimation \cite{Oh2012}, on the other hand, do not require global sensing. However, agents must communicate to peers during the mission to estimate their pose, synchronize their orientation, or register their local coordinate frames with respect to a common heading direction.

Unlike the aforementioned methods, certain class of formation control strategies are concerned with the most challenging case: when measurements are local/relative and inter-agent communication is limited or not allowed. Examples of the latter class include distance-based \cite{Olfati-Saber2002, Krick2009, Tian2013}, bearing-based \cite{Basiri2010, Deghat2015, Zhao2015, Trinh2018}, and the BCB formation control strategies \cite{Lin2016a, Bishop2013, Bishop2014, Fathian2016, Fathian2016a, Marina2017a, Han2018}.
Due to challenging nonlinear dynamics, no distance-based formation control algorithm with global convergence to the desired formation (in the general setting) is known to this day in the literature.
Moreover, bearing-based formation control methods with global convergence guarantees require alignment of local coordinate frames \cite{Zhao2019}.
Unlike the aforementioned methods, convergence guarantees of the BCB control to the desired shape are global (except for a measure zero set, which is inconsequential for the implementation).

The BCB control strategy was introduced by Lin et al. \cite{Lin2014, Lin2016}, who presented the general theory for agents with single-integrator dynamics and derived the (almost) \textit{global} convergence guarantees.
As these guarantees hold for agents with linear dynamics, it is therefore essential to extend them to agents with nonlinear and nonholonomic dynamics that are commonly encountered  in robotics applications.
In this paper, we leverage the gradient-descent control framework developed by Zhao et al. \cite{Zhao2017, Zhao2018} for agents with nonholonomic dynamics and show that for a subclass of sensing topologies that are undirected and universally rigid, the global convergence guarantees extend to agents with higher-order, input-to-state feedback linearizable, and nonholonomic unicycle and car dynamics.
We further show that under the proposed SDP design, robustness guarantees of the BCB control for single-integrator agents extend to agents with unicycle and car dynamics, and the proposed control is provably robust to saturation of the input and unmodeled linear actuator dynamics.

A contribution of this work is a fully distributed collision avoidance strategy that naturally arises from the robustness properties of the control and preserves the stability of the closed-loop system. 
Much of the distributed formation control literature do not consider collision avoidance (e.g., in the original BCB approach \cite{Lin2014, Lin2016}), and existing collision avoidance approaches are often centralized. 
Furthermore, an ad hoc augmentation of a distributed formation control strategy with collision avoidance, e.g., using potential functions, can lead to undesired behaviors or even instability (e.g., robots may drift or move in a limit cycle indefinitely).

We further present a portable and low-cost distributed robotic platform that consists of off-the-shelf components (see Fig.~\ref{fig:UTD}). 
This platform is used to validate our proposed formation control experimentally and can be used to test other multi-agent control strategies.
Since the platform is distributed, the number of robots used for an experiment is only limited by the available resources.
The code and technical implementation details related to this platform are made available online, and are accessible in the Supplementary Material section.  

In order to make the paper self-contained, this comprehensive work contains a summary of the relevant results derived in our previous papers \cite{Fathian2017, Fathian2018, Fathian2019, Fathian2018b} and subsequent extensions after the submission of this manuscript \cite{Fathian2019a, Lusk2020}.
Specifically, 
\cite{Fathian2017} studied the BCB control design under arbitrary switching sensing typologies,  
\cite{Fathian2018} presented the initial extension of the BCB control to agents with higher order linear dynamics,
\cite{Fathian2019} presented an augmentation of the BCB control to fix the formation scale with convergence guarantees, 
and \cite{Fathian2018b} presented the extension to agents with a kinematic unicycle model, particularly for fixed-wing aerial vehicles.
Contributions of this work include extension of the BCB control to agents with \textit{dynamic} unicycle and car models with convergence guarantees even in the presence of unmodeled linear actuator dynamics, and collision avoidance with stability guarantees. These contributions are accompanied by thorough simulation and experimental evaluations on an open-source robotic platform. 
Our latest extensions \cite{Fathian2019a, Lusk2020} expand the BCB control to 3D formations and leverage task assignment to mitigate gridlock scenarios that arise due to the distributed collision avoidance, respectively.

In summary, the main contributions of this paper are
\begin{itemize}
	\item A \textit{distributed}, \textit{provably convergent} and \textit{robust} formation control strategy for vehicles with a large variety of holonomic and nonholonomic dynamics, which \underline{\textit{eliminates}} the need for \textit{global position measurements}, \textit{common heading direction}, \textit{inter-agent communication}, and \textit{complete sensing graph} required in existing formation control literature.
	
	\item A fully \textit{distributed collision avoidance} algorithm naturally incorporated in the formation control strategy with \textit{stability guarantees}.
	
	\item A low-cost distributed robotic platform with off-the-shelf components for validation of formation control pipeline.
\end{itemize}

\subsection{Paper Organization}

The notation and assumptions used throughout the paper are introduced in Section \ref{sec:Notations}. 
In Section \ref{sec:SingleInt}, the control strategy for agents with 
single-integrator dynamics is introduced, the SDP gain design algorithm is presented, and robustness of the proposed approach to perturbations and saturated input is proven.
Gains designed for single-integrator agents are used in Section \ref{sec:HigherOrder} to extend the control to agents with higher order linear or linearizable holonomic dynamics such as quadrotors. 
In Sections \ref{sec:Unicycle} and \ref{sec:Car}, the control is further extended to agents with nonholonomic unicycle and car dynamics, where robustness to saturations in the input and unmodeled dynamics is shown. 
Additional topics such as collision avoidance, time-varying sensing topologies, scale of the formation, and extension to 3D case are discussed in Section \ref{sec:Extensions}. 
Lastly, in Sections \ref{sec:Simulations} and \ref{sec:Experiments} simulation and experimental results are presented to typify the proposed strategy.

\section{Notation and Assumptions}
\label{sec:Notations}

We consider a team of $n \in \bn$ agents with the inter-agent sensing topology described by an undirected graph $\mathcal{G} = (\mathcal{V},\mathcal{E})$, where $\mathcal{V} =\mathbb{N}_n := \{1,\, 2,\, \dots,\, n\}$ is the set of vertices, and $\mathcal{E} \subset \mathcal{V} \times \mathcal{V}$ is the set of edges. 
Each vertex of the graph represents an agent. An edge from vertex $i \in \mathcal{V} $ to $j \in \mathcal{V} $ indicates that agents $i$ and $j$ can measure the relative position of each other in their local coordinate frames. In such a case, agents $i$ and $j$ are called neighbors. The set of neighbors of agent $i$ is denoted by ${\mathcal{N}_i := \{j \in \mathcal{V}  \,|\, (i,j) \in \mathcal{E} \} }$. We denote by $\mathrm{eig}(A) \subset \bc$ the set of eigenvalues of matrix $A$.

Throughout this paper we assume that the desired formation and the sensing topology are such that achieving the formation is physically feasible. In particular, we assume that the sensing topology is 
undirected and universally rigid. This assumption is both necessary and sufficient \cite{Wang2013, Lin2016} for guaranteeing the existence of control gains that are computed from the proposed SDP approach.  
We further point out that by ``formation'' we imply a desired geometric shape up to a positive scale factor. To fix the scale of the formation to a desired value an augmented control is presented in Section \ref{sec:Extensions}.

\section{Formation Control for Single-integrator Dynamics} \label{sec:SingleInt}

In this section, we present the distributed formation control strategy introduced in \cite{Lin2014} for agents with single-integrator dynamics. We then propose a novel design approach for finding stabilizing control gains by formulating a convex optimization problem. The results of this section are a cornerstone for formation control of agents with more complicated dynamic models that are discussed in the subsequent sections.

\subsection{Control Strategy}
\label{sec:ControlDesign}

The single-integrator dynamics can be described as
%
\begin{gather} \label{eq:SingInt}
\dot{q}_i = u_i,
\end{gather}
%
where $q_i := [x_i,\, y_i]^\top \in \mathbb{R}^{2}$ is the coordinate of agent $i \in \mathbb{N}_n$ in a common global coordinate frame (unknown to the agent), and $u_i \in \br^2$ is the control law.
To bring the agents to a desired formation, the control law for each agent can be chosen as
%
\begin{gather} \label{eq:HolonomCtrl}
u_i := \sum_{j \in \mathcal{N}_i}{A_{ij} \, (q_j-q_i)},
\end{gather}
%
where $A_{ij} \in \mathbb{R}^{2\times2}$ are constant control gain matrices that will be designed later, and each has the form 
%
\begin{gather} \label{eq:Aij}
A_{ij} := \begin{bmatrix} 
a_{ij}& b_{ij} \\ 
-b_{ij} & a_{ij} 
\end{bmatrix}, \qquad a_{ij},\, b_{ij} \in \mathbb{R}.
\end{gather}
%
Thanks to the commutativity property of the $A_{ij}$ matrices,  the closed-loop dynamics with coordinates $q_i$ and $q_j$ expressed in agents' local coordinate frames is identical to the case that coordinates are expressed in a global coordinate frame (for more details see \cite{Fathian2017}). 
The geometric intuition behind the control strategy \eqref{eq:HolonomCtrl} is explained in the following example.

\begin{figure}
	\begin{center}
		\includegraphics[trim =95mm 55mm 80mm 65mm, clip, width=0.23\textwidth]{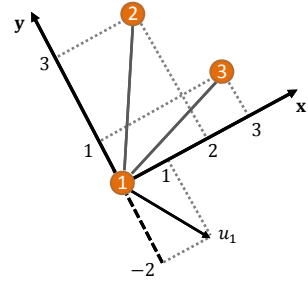}	
		\caption{Example of three agents with agents 2 and 3 neighbors of agent 1.}
		\label{fig:Example}
	\end{center}
\end{figure}

\begin{example}
Consider three agents in Fig.~\ref{fig:Example}, where agents 2 and 3 are neighbors of agent 1. Let $q_2 = [2,\, 3]^\top$ and $q_3 = [3,\, 1]^\top$ denote the position of neighbors in agent 1's local coordinate frame, and assume that control gains for agent 1 are given as
%
\begin{gather}
A_{12} =  \begin{bmatrix}
2 & -1  \\
1 & 2
\end{bmatrix}, \qquad
A_{13} = \begin{bmatrix}
-1 &  3 \\
-3 & -1
\end{bmatrix}.
\end{gather}
%
From \eqref{eq:HolonomCtrl}, the control vector for agent 1 is computed as
%
\begin{gather}
u_1 = A_{12} \, q_2 +  A_{13} \, q_3 = \begin{bmatrix}
1 \\
-2 
\end{bmatrix},
\end{gather}
%
which is shown in the figure and can be interpreted geometrically as follows. At any instance of time, agent 1 moves along the control vector with the speed equal to the vector's magnitude. Note that due to the special structure of gain matrices $A_{12},\, A_{13}$, they can be interpreted as scaled rotation matrices that rotate and scale  vectors connecting agent 1 to its neighbors.  One can see that this action is independent of agent 1's local coordinate frame position and orientation, hence, $q_1$ and $q_2$ can replaced by their coordinates in a global coordinate frame for analysis.	
\label{ex:Example}
\end{example}

Let $q := [q_1^\top,\, q_2^\top, \, \dots, q_n^\top]^\top \in \br^{2n}$ 
denote the aggregate state vector of all agents. Using this notation, the closed-loop dynamics under the control strategy \eqref{eq:HolonomCtrl} can be expressed as 
%
\begin{gather} \label{eq:SingIntClosed}
\dot{q} = A\, q,
\end{gather}
%
%
\begin{equation*} 
A = \begin{bmatrix}
-\sum_{j = 2}^{n} A_{1j} & A_{12} & \cdots & A_{1n} \\
A_{21} & -\sum_{\substack{{j = 1} \\ {j\neq2}}}^{n} A_{2j} & \cdots & A_{2n} \\
\vdots &                          & \ddots & \vdots \\
A_{n1} &       A_{n2}             & \cdots &  -\sum_{j = 1}^{n-1} A_{nj}   
\end{bmatrix} \in \mathbb{R}^{2n\times 2n},
\end{equation*}
%
where for $j \notin \mathcal{N}_i$ the $A_{ij}$ block is defined as a zero matrix. Note that the  $2\times 2$ diagonal blocks of $A$ are the negative sum of the rest of the blocks on the same row. Hence, $A$ has block Laplacian structure, and it follows that vectors
%
\begin{gather} \label{eq:ones}
\begin{aligned}
\mathbf{1} &:= [1,\, 0,\, 1,\, 0,\, \dots,\, 1,\, 0]^\top \in \br^{2n} \\
\bar{\mathbf{1}} &:=[0,\, 1,\, 0,\, 1,\, \dots,\, 0,\, 1]^\top \in \br^{2n}
\end{aligned}
\end{gather}
%
are in the kernel\footnote{If $A\in \mathbb{R}^{n\times n}$, the kernel or null space of $A$ is defined as ${ \mathrm{ker}(A) := \{ v\in \mathbb{R}^n \,|\, A \, v = 0\} }$.} of $A$.

Let $q^* \in \br^{2n}$ denote the coordinates of agents at the desired formation (the orientation, translation, and scale of the desired formation can be chosen arbitrarily). Further, let $\bar{q}^* \in \br^{2n}$ denote the coordinates of agents when the desired formation is rotated by $90$ degrees about the origin.  
The following theorem states the conditions that guarantee the convergence of agents to the desired formation.

\begin{theorem} \label{thm:PConvergence}
Consider agents with single-integrator dynamics \eqref{eq:SingInt} and control \eqref{eq:HolonomCtrl}. If the $A_{ij}$'s are chosen such that in \eqref{eq:SingIntClosed}
\begin{enumerate}[label=(\roman*)]
	\item $A$ has null vectors $\mathbf{1},\, \bar{\mathbf{1}}, \, q^*$ and $\bar{q}^*$,
	\item Other than the four zero eigenvalues associated with these null vectors, all eigenvalues of $A$ have negative real parts,
\end{enumerate}
then, agents globally converge to the desired formation.
\end{theorem}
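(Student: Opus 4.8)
The plan is to exploit that the closed-loop system~\eqref{eq:SingIntClosed} is linear and time-invariant, so that its asymptotic behavior is entirely dictated by the spectral structure of $A$. First I would argue that, under (i)--(ii), $0$ is a \emph{semisimple} eigenvalue of $A$ of multiplicity four and every other eigenvalue lies in the open left half-plane. Hypothesis (i) exhibits four eigenvectors of $A$ for the eigenvalue $0$; because the desired formation is non-degenerate (as ensured by the universal rigidity of the sensing topology assumed in Section~\ref{sec:Notations}), the vectors $\mathbf{1},\,\bar{\mathbf{1}},\,q^*,\,\bar{q}^*$ are linearly independent, so the geometric multiplicity of $0$ is at least four. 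Hypothesis (ii) states that the algebraic multiplicity of $0$ is exactly four, hence geometric and algebraic multiplicities coincide and $0$ is semisimple. Consequently $\br^{2n}$ decomposes as a direct sum $\ker A \oplus \mathcal{R}$ of $A$-invariant subspaces, with $\ker A = \operatorname{span}\{\mathbf{1},\,\bar{\mathbf{1}},\,q^*,\,\bar{q}^*\}$ and with the restriction $A|_{\mathcal{R}}$ Hurwitz (its spectrum being the nonzero eigenvalues of $A$).

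Next I would decompose the initial condition as $q(0) = q_c + q_s$ with $q_c \in \ker A$ and $q_s \in \mathcal{R}$, so that the solution of~\eqref{eq:SingIntClosed} is $q(t) = e^{At} q(0) = q_c + e^{At} q_s$. Since $A|_{\mathcal{R}}$ is Hurwitz, $e^{At} q_s \to 0$ exponentially, whence $q(t) \to q_c$ as $t \to \infty$, and it remains only to interpret the limiting configuration. Writing $q_c = \alpha\,\mathbf{1} + \beta\,\bar{\mathbf{1}} + \gamma\, q^* + \delta\, \bar{q}^*$ and recalling that $\bar{q}^*$ is $q^*$ rotated by $90^\circ$ about the origin, the block of $q_c$ corresponding to agent $i$ equals $\left(\begin{smallmatrix} \gamma & -\delta \\ \delta & \gamma \end{smallmatrix}\right) q_i^* + \left(\begin{smallmatrix} \alpha \\ \beta \end{smallmatrix}\right)$. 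When $(\gamma,\delta)\neq(0,0)$, the $2\times 2$ matrix above equals $\sqrt{\gamma^2+\delta^2}$ times a rotation matrix, so $q_c$ is a positively scaled, rotated, and translated copy of $q^*$; since its determinant $\gamma^2+\delta^2$ is strictly positive, the limit is never a reflected copy. Thus $q(t)$ converges to the desired formation in the sense of Section~\ref{sec:Notations}.

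Finally I would dispose of the exceptional case: $(\gamma,\delta)=(0,0)$ holds precisely when $q(0)$ lies in $\operatorname{span}\{\mathbf{1},\,\bar{\mathbf{1}}\}\oplus\mathcal{R}$, a subspace of dimension $2n-2$ and hence a Lebesgue-null set in $\br^{2n}$, which accounts for the ``almost global'' qualifier and the inconsequential measure-zero exception noted in Section~\ref{sec:contrib}. I do not expect any single computation to be the bottleneck; the delicate point is rather the bookkeeping that justifies semisimplicity of the zero eigenvalue and linear independence of the four prescribed null vectors, since that is exactly where the structural hypotheses on the desired shape and on the sensing graph enter — and it is precisely this argument that the later sections must reproduce, via the gradient-descent framework, for the higher-order and nonholonomic dynamics.
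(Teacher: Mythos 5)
Your proposal is correct and follows essentially the same route as the paper, which defers the details to \cite[Thm.~1]{Fathian2017} but sketches exactly your argument: trajectories of the linear system converge exponentially onto $\mathrm{ker}(A)$, which is then identified with the rotated, translated, and (non-negatively) scaled copies of $q^*$. The one small quibble is that linear independence of $\mathbf{1},\bar{\mathbf{1}},q^*,\bar{q}^*$ follows from non-degeneracy of the desired formation itself (the $q_i^*$ not all coincident), not from universal rigidity of the sensing graph; this is immaterial to the validity of the argument.
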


\begin{proof}
The formal proof can be found in our previous work \cite[Thm. 1]{Fathian2017}, and is based on the observation  that if nonzero eigenvalues of matrix $A$ have negative real parts, all trajectories of the linear system $\dot{q} = A \, q$ exponentially converge to the kernel of $A$. The kernel of $A$ is nothing but all rotations, translations, and non-negative scale factors of the desired formation.
\end{proof}

Note that in Theorem \ref{thm:PConvergence} convergence to the desired formation implies that the formation is achieved up to a rotation and translation in the global coordinate frame, and a non-negative scale factor.
As we will discuss in Section \ref{sec:Extensions}, in applications where the  scale is important, the control can be augmented  to attain the desired scale. 
We should point out that null vectors $\mathbf{1}, \bar{\mathbf{1}}$ correspond to the case where all agents coincide, which can be interpreted as the desired formation achieved with the zero scale. It can be shown that the set of initial conditions that converge to this coinciding equilibrium is  measure zero. Notice that in practice, trajectories of agents cannot remain on a measure zero set (due to noise, disturbances, etc.), thus, coinciding agents are not of practical concern. 
%

\begin{remark} \label{rem:Rigidity}
The topological conditions that guarantee the existence of a symmetric matrix $A$ satisfying the conditions of Theorem~\ref{thm:PConvergence} are studied in \cite[Thm. 3.2]{Lin2016a}, which presents the necessary and sufficient condition\footnote{To be specific, the necessary and sufficient condition is for a generic desired formation. For certain desired formations matrix $A$  exists even when the graph is not universally rigid.} that the sensing graph is undirected and universally rigid. Throughout this paper, we assume that this condition is met.
\end{remark}

\phantomsection \label{p:b12} 
\begin{remark} \label{rem:MovingFrom}
Motion of the ensemble set of agents, during and after getting into formation, can be addressed in various ways.  We have previously employed leader-follower strategies and preassigned high-level control tasks \cite{Fathian2018b}.  Such a task may require global information and is not discussed further here.
\end{remark}

\subsection{Control Gain Design } \label{sec:Optimization}

Given a desired formation for agents with a universally rigid sensing topology, we present a novel algorithm to find control gain matrices that meet the conditions of Theorem \ref{thm:PConvergence}. 
Let $N := [q^*,\, \bar{q}^*,\, \mathbf{1},\, \bar{\mathbf{1}}] \in \br^{2n\times 4}$ be the set of bases for the kernel of $A$, where $\mathbf{1},\, \bar{\mathbf{1}}$ are given in \eqref{eq:ones},  $q^* \in \br^{2n}$ is the coordinates of agents at the desired formation, and $\bar{q}^* \br^{2n}$ is the $90^\circ$ rotated coordinates about the origin. 
Let $U \, S \, V^\top = N$ be the (full) singular value decomposition (SVD) of $N$, where
%
\begin{equation} \label{eq:Q}
U = [\bar{Q}, ~ Q]  ~\in \mathbb{R}^{2n\times 2n},
\end{equation}
%
with $Q\in \mathbb{R}^{2n\times (2n-4)}$ defined as the last $2n-4$ columns of $U$.

\begin{lemma} \label{lem:Lbar}
Using $Q$ in \eqref{eq:Q}, define 
\begin{equation}
\bar{A} := Q^\top  A \, Q ~\in \mathbb{R}^{(2n-4)\times (2n-4)}.
\label{eq:Lbar}
\end{equation}
%
Matrices $A$ and $\bar{A}$ have the same set of nonzero eigenvalues. 
\end{lemma}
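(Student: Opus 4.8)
The plan is to exploit the block-Laplacian structure of $A$ together with the fact that $N$ spans the kernel of $A$, and reduce the eigenvalue comparison to an orthogonal change of basis. First I would observe that since the columns of $N$ are precisely the null vectors of $A$ guaranteed by Theorem~\ref{thm:PConvergence}(i), and $U = [\bar Q,\, Q]$ comes from the SVD of $N$, the first four columns $\bar Q$ span $\mathrm{ker}(A) = \mathrm{range}(N)$ while $Q$ spans the orthogonal complement. Because $U$ is orthogonal, $U^\top A \, U$ is block-upper-triangular (in fact block diagonal): the bottom-left block is $Q^\top A \, \bar Q = 0$ since $A\bar Q = 0$, and the top-left block $\bar Q^\top A \, \bar Q$ is also zero for the same reason, so $U^\top A U = \begin{bmatrix} 0 & \bar Q^\top A Q \\ 0 & Q^\top A Q \end{bmatrix}$.

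From this block-triangular form, the characteristic polynomial factors: $\det(\lambda I - U^\top A U) = \lambda^4 \cdot \det(\lambda I_{2n-4} - Q^\top A Q) = \lambda^4 \det(\lambda I - \bar A)$. Since $A$ and $U^\top A U$ are similar (via the orthogonal $U$), they have the same characteristic polynomial, and hence $A$ has exactly four eigenvalues equal to zero coming from the $\lambda^4$ factor, and its remaining $2n-4$ eigenvalues are exactly the eigenvalues of $\bar A$. Therefore every nonzero eigenvalue of $A$ is an eigenvalue of $\bar A$ and conversely; this is the claim. I would also note that $\bar A$ could itself have a zero eigenvalue if $A$ has a zero eigenvalue of algebraic multiplicity greater than four, but the statement only asserts equality of the \emph{nonzero} spectra, which the factorization gives directly.

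The only mild subtlety — and the step I would be most careful about — is justifying that the top-left block $\bar Q^\top A \,\bar Q$ vanishes and, more importantly, that the block structure really yields $\lambda^4$ as a clean factor rather than something obscured by non-normality; this is handled cleanly by noting $A\bar Q = 0$ kills \emph{both} left blocks of $U^\top A U$, so the matrix is genuinely block-upper-triangular with a zero $4\times 4$ top-left block, and the determinant of a block-triangular matrix is the product of the determinants of its diagonal blocks regardless of the off-diagonal block $\bar Q^\top A Q$. No symmetry of $A$ is needed for this argument, only that $\mathrm{range}(N)\subseteq \mathrm{ker}(A)$ and that $U$ is orthogonal with $\mathrm{range}(\bar Q) = \mathrm{range}(N)$.
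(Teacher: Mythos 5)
Your proof is correct and follows essentially the same route as the paper, which simply asserts that $\bar{A}$ is the restriction of $A$ onto the orthogonal complement of $\mathrm{range}(N)$ and hence drops the zero eigenvalues; your block-triangular factorization of $U^\top A\, U$ using $A\bar{Q}=0$ is exactly the rigorous elaboration of that one-sentence argument. Your added remarks — that only $\mathrm{range}(N)\subseteq\mathrm{ker}(A)$ (the constraint $A\,N=0$) is needed, not symmetry, and that the claim concerns only the nonzero spectra — are accurate and sharpen the paper's slightly loose phrasing.
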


Proof of Lemma \ref{lem:Lbar} follows by observing that $U$ is an orthogonal matrix, and $\mathrm{range}(\bar{Q}) = \mathrm{range}(N)$. Therefore $\bar{A}$ is the restriction of $A$ onto the orthogonal complement of $\mathrm{range}(N)$, which removes the zero eigenvalues of $A$.

For an undirected sensing topology, by imposing the constraints $a_{ij} = a_{ji}$, $b_{ij} = - b_{ji}$ in \eqref{eq:Aij} matrix $A$ can be designed to be symmetric.
Note that from Remark~\ref{rem:Rigidity} existence of such matrix is guaranteed.
In this case, $\bar{A}$ is symmetric, and its eigenvalues are real and can be ordered. Hence, $A$ can be computed by solving the optimization problem
%
\begin{align} \label{eq:OptimCVX}
A ~=~ \underset{a_{ij},\, b_{ij}}{\mathrm{argmax}}~~~& \quad \lambda_1 (-\bar{A})   \\
\text{subject to}& \quad A \, N = 0  \nonumber \\
& \quad \mathrm{trace}(A) = \text{constant}. \nonumber  
\end{align}
%
where $\lambda_1(\cdot)$ denote the smallest eigenvalue of a matrix and the last constraint ensures that the solution remains bounded. 
Note that \eqref{eq:OptimCVX} is a concave maximization problem \cite{Boyd2006}, and can be formulated as the SDP problem
%
\begin{align} \label{eq:OptimSDP}
A ~=~ \underset{a_{ij},\, b_{ij}, \, \gamma}{\mathrm{argmax}}~~~& \quad \gamma   \\
\text{subject to}& \quad \bar{A} + \gamma \, I \preceq 0 \nonumber \\
& \quad A \, N = 0 \nonumber  \\
& \quad \mathrm{trace}(A) = \text{constant}. \nonumber  
\end{align}
%
where the first constraint is a linear matrix inequality. 
The proposed approach for finding stabilizing gain matrix $A$ is summarized in Algorithm~\ref{alg:GainDesign}.

Several effective algorithms for solving SDPs are developed in recent years \cite{Tuetuencue2003} that can be used to solve problem \eqref{eq:OptimCVX}.  CVX \cite{cvx} is well-suited to solve \eqref{eq:OptimCVX} when the number of agents is less than 50, and it features a relatively simple interface. 
For scenarios with larger number of agents, customized and more computationally efficient solvers can be leveraged to obtain an answer. 
In  \cite{Lusk2020}, we presented an ADMM-based customized solver for \eqref{eq:OptimCVX}. 
Table~\ref{tbl:gaintimes} shows the time required to solve \eqref{eq:OptimCVX} for a random sensing topology in MATLAB using an Intel Core i7-7700K with 16GB RAM.
As it can be seen, by using the ADMM-based solver gains for formations of 100 agents can be computed in less than 11 seconds.

\begin{table}[t!]	
	\centering
	\caption{Execution time of the CVX solver used for \eqref{eq:OptimCVX} vs. our customized ADMM solver in \cite{Lusk2020} for obtaining 2D formation gains for different number of agents. Reported times are in seconds and rounded to two decimals.}
	\begin{tabularx}{\columnwidth}{@{} l rrrrr @{}} 
		\toprule
		Algorithm  &  \multicolumn{5}{c @{}}{Number of Agents}\\ 
		\cmidrule(l){2-6}
		& {5} & {20} & {50} & {100} &  {200}  \\
		\midrule
		CVX-SDP time         & 0.49 & 9.71 & 6042.96 &  \texttt{OOM} & \texttt{OOM}  \\
		ADMM time & 0.01 & 0.05 & 1.06 & 10.91 & 123.76  \\
		\bottomrule
	\end{tabularx}
	\begin{flushleft}
	{\scriptsize \texttt{OOM}: Out of memory}
	\end{flushleft}
	\label{tbl:gaintimes}
\end{table}

\begin{algorithm}[t] 
%
\DontPrintSemicolon
\SetKwData{Left}{left}\SetKwData{This}{this}\SetKwData{Up}{up}
\SetKwFunction{Union}{Union}\SetKwFunction{FindCompress}{FindCompress}
\SetKwInOut{Input}{input}\SetKwInOut{Output}{output}

\SetKwInput{StepA}{step 1}
\SetKwInput{StepB}{step 2}
\SetKwInput{StepC}{step 3}
\SetKwInput{StepD}{step 4}
\SetKwInput{Notation}{notation}

\caption{Formation control gain design.}

\Input{Desired formation coordinates $q^*$.}
\Output{Gain matrix $A$.}

\BlankLine

\StepA{Let $N := [q^*,\, \bar{q}^*,\, \mathbf{1},\, \bar{\mathbf{1}}]$.} 
\StepB{Compute SVD of $N = U\, S \, V^\top$.}
\StepC{Define $Q$ as the last $2n- 4$ columns of $U$.}
\StepD{Solve \eqref{eq:OptimCVX} using a SDP solver.}
%
%
\label{alg:GainDesign}
\end{algorithm}

It is important to note the distinction between the \textit{design} phase and \textit{implementation} in our approach.
Designing the control gains by Algorithm~\ref{alg:GainDesign} is a \textit{centralized} paradigm (which requires the knowledge of the sensing topology).
These gains are transmitted from the base station to agents to be used during the mission. The implementation of our approach is \textit{distributed}, where agents use the prescribed gains to achieve the desired formation without a need for communication and using only relative/local position measurements.
Distributed optimization techniques can solve \eqref{eq:OptimSDP} without relying on the complete knowledge of the sensing topology. An example of such distributed design can be found in \cite{Lin2016a}. However, these techniques require inter-agent communication, which we avoid in this work.

\subsection{Robustness to Perturbations} \label{sec:Robustness}

An important characteristic of the proposed design approach is that the gains found via \eqref{eq:OptimCVX} lead to significant robustness to perturbations.   
For instance, noise and disturbances can cause an agent to move in a direction that is different from the desired control vector. 
The following theorem shows that by using the gains computed from \eqref{eq:OptimCVX} positive scaling and rotation of the control vectors (up to $\pm 90^\circ$) does not affect the convergence.

\begin{theorem} \label{thm:Robustness}
Given control gain matrix $A$ designed from \eqref{eq:OptimCVX}, let $R_i \in \mathrm{SO}(2)$ denote a rotation matrix of $\alpha_i$ radians, and $c_i \in \br$ be a scalar. If $\alpha_i \in [-\frac{\pi}{2}+\epsilon,\, \frac{\pi}{2}-\epsilon]$ for an arbitrary small $\epsilon >0$, and $c_i > 0$, under the perturbed control 	
%
\begin{equation} \label{eq:PurturbedControl}
u_i := c_i \, R_i \, \sum_{j \in \mathcal{N}_i}{A_{ij} \, (q_j-q_i)}
\end{equation}
%
single-integrator agents achieve the desired formation.	 
\end{theorem}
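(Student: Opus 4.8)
The plan is to recast the perturbed closed-loop dynamics in a form that reveals a block-diagonal structure and then invoke Theorem~\ref{thm:PConvergence}. First I would write the aggregate perturbed dynamics as $\dot{q} = D\,A\,q$, where $D := \mathrm{diag}(c_1 R_1,\, c_2 R_2,\, \dots,\, c_n R_n) \in \br^{2n\times 2n}$ is block diagonal with $2\times 2$ blocks $c_i R_i$, and $A$ is the block-Laplacian gain matrix from \eqref{eq:SingIntClosed}. The key observations are: (a) each $c_i R_i$ has the same rotation-scaling form as the $A_{ij}$ blocks in \eqref{eq:Aij}, so $D$ commutes blockwise with the structure of $A$; (b) $D$ is invertible since $c_i > 0$; and (c) $\mathrm{ker}(D\,A) = \mathrm{ker}(A)$, so the four null vectors $\mathbf{1},\, \bar{\mathbf{1}},\, q^*,\, \bar{q}^*$ are preserved under the perturbed dynamics. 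Thus condition (i) of Theorem~\ref{thm:PConvergence} holds for $D\,A$, and it remains only to verify condition (ii): all other eigenvalues of $D\,A$ have negative real parts.

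For the eigenvalue condition, I would use the symmetry of $A$ produced by the SDP design \eqref{eq:OptimCVX}. Restricting to the orthogonal complement of $\mathrm{range}(N)$ via $Q$ from \eqref{eq:Q}, Lemma~\ref{lem:Lbar} says $\bar{A} = Q^\top A\, Q$ captures exactly the nonzero eigenvalues, and by the SDP, $\bar{A} \preceq -\gamma I \prec 0$ for some $\gamma > 0$; moreover $A \preceq 0$ on the whole space with kernel $\mathrm{range}(N)$. Now consider the perturbed matrix $D\,A$. Although $D\,A$ is not symmetric, I would show its nonzero eigenvalues have negative real part by a similarity/congruence argument: writing $D = C\,R$ where $C = \mathrm{diag}(c_1 I_2,\dots,c_n I_2) \succ 0$ and $R = \mathrm{diag}(R_1,\dots,R_n)$ is orthogonal, one can study $C\,R\,A$. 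For any eigenpair $(\lambda, v)$ with $\lambda \neq 0$ (so $v \notin \mathrm{ker}(A)$), from $C\,R\,A\,v = \lambda v$ one gets $R\,A\,v = \lambda\,C^{-1} v$, hence $v^* A^\top R^\top C^{-1} v \cdot \overline{\lambda}$-type manipulations; more cleanly, take the Hermitian form $w^* (R A) w$ with $w = C^{-1/2} v$ appropriately and extract $\mathrm{Re}(\lambda) = \mathrm{Re}\big(v^* A^\top R^\top v\big)/(v^* C^{-1} v)$ after symmetrizing. The crux is that the Hermitian part of $R\,A$ restricted to $\mathrm{range}(N)^\perp$ is negative definite precisely when each $R_i$ rotates by strictly less than $\pm 90^\circ$: since $A = A^\top \preceq 0$ with the right kernel, and each $R_i + R_i^\top = 2\cos\alpha_i\, I_2 \succ 0$ exactly when $|\alpha_i| < \pi/2$, the Hermitian part $\tfrac{1}{2}(RA + A^\top R^\top) = \tfrac{1}{2}(RA + AR^\top)$ inherits negative semidefiniteness with kernel still $\mathrm{range}(N)$. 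This forces $\mathrm{Re}(\lambda) < 0$ for every nonzero eigenvalue of $D\,A$, giving condition (ii).

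The main obstacle is handling the non-normality of $D\,A$ rigorously: the clean Lyapunov/Hermitian-part argument needs $R$ and $A$ to interact so that $RA + AR^\top$ is sign-definite, which relies on the block-commutativity of the rotation-scaling structure (so that $R_i$ commutes with the $2\times 2$ blocks of $A$), and one must carefully account for the scaling factors $c_i$ via the congruence by $C^{1/2}$ without disturbing the kernel. I would also need the strict inequality $|\alpha_i| \le \pi/2 - \epsilon$ (not just $< \pi/2$) to get a uniform negative bound $\cos\alpha_i \ge \sin\epsilon > 0$, ensuring the negative-real-part gap does not degenerate; the role of $\epsilon$ is exactly to keep the perturbed system uniformly stable. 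Once condition (ii) is established, Theorem~\ref{thm:PConvergence} applied to $\dot{q} = (D\,A)\,q$ yields global convergence of $q$ to $\mathrm{ker}(D\,A) = \mathrm{ker}(A)$, i.e., to the desired formation up to rotation, translation, and nonnegative scale, which completes the proof.
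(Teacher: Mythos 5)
Your overall strategy---write the perturbed dynamics as $\dot q = D A q$ with $D=\mathrm{diag}(c_1R_1,\dots,c_nR_n)$ invertible, note $\mathrm{ker}(DA)=\mathrm{ker}(A)$, and show every nonzero eigenvalue of $DA$ has negative real part using $R_i+R_i^\top=2\cos\alpha_i\,I\succ0$---is sound in outline, but the step you yourself call the crux is false. The Hermitian part $\tfrac12(RA+AR^\top)$ is \emph{not} negative semidefinite with kernel $\mathrm{range}(N)$ for all $|\alpha_i|<\pi/2$. Indeed, since a symmetric $A$ built from blocks of the form \eqref{eq:Aij} has $i$-th diagonal block equal to a scalar multiple $a\,I$ of the identity, the $(i,i)$ block of $\tfrac12(RA+AR^\top)$ is $\tfrac{a}{2}(R_i+R_i^\top)=a\cos\alpha_i\,I$, which vanishes as $\alpha_i\to\pi/2$, while the adjacent off-diagonal blocks $\tfrac12(R_iA_{ij}+A_{ij}R_j^\top)$ stay bounded away from zero; a symmetric matrix with a vanishing diagonal block next to a non-vanishing off-diagonal block has a negative $2\times2$ principal minor and hence a positive eigenvalue. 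So for $\alpha_i$ close to $\pi/2$ there exist $v\notin\mathrm{ker}(A)$ with $\mathrm{Re}(v^*RAv)>0$, and your identity $\mathrm{Re}(\lambda)=\mathrm{Re}(v^*RAv)/(v^*C^{-1}v)$ does not by itself deliver $\mathrm{Re}(\lambda)<0$. (The conclusion about the eigenvalues is still true; it is this particular symmetrization that fails.)

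The repair is to symmetrize on the other side so that the rotation never touches $A$ directly: from $DAv=\lambda v$ write $Av=\lambda D^{-1}v$ and left-multiply by $v^*$. Then $v^*Av$ is \emph{real} and strictly negative for $v\notin\mathrm{ker}(A)$, and all that is needed is $\mathrm{Re}(v^*D^{-1}v)=v^*\,\mathrm{diag}\bigl(c_i^{-1}\cos\alpha_i\,I\bigr)\,v>0$, which is exactly Lemma~\ref{lem:PosDefRotation}; then $\lambda=v^*Av/(v^*D^{-1}v)$ forces $\mathrm{Re}(\lambda)<0$. This is, in Lyapunov form, precisely what the paper does: it takes $V=-q^\top Aq$ and computes $\dot V=-(Aq)^\top(P+P^\top)(Aq)$, so the positive definite Hermitian part of the perturbation is evaluated at $Aq$ rather than between $q$ and $Aq$, and LaSalle's invariance principle gives convergence to $\mathrm{ker}(A)$. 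Your remarks about kernel preservation and about $\epsilon$ supplying the uniform bound $\cos\alpha_i\ge\sin\epsilon$ are fine, but they do not rescue the sign claim on the Hermitian part of $RA$, which is where the proof currently breaks.
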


We first present and prove the following lemma that is used in the proof of Theorem~\ref{thm:Robustness}.

\begin{lemma} \label{lem:PosDefRotation}
Let $R \in \mathrm{SO}(2)$ represent a rotation of $\alpha \in [-\pi,\, \pi)$ radians. If $|\alpha|<\frac{\pi}{2}$, then $R + R^\top$ is positive definite.	
\end{lemma}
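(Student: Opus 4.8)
The plan is to write $R$ explicitly in terms of $\alpha$ and compute $R + R^\top$ directly, since $\mathrm{SO}(2)$ matrices have a simple closed form. First I would write $R = \begin{bmatrix} \cos\alpha & -\sin\alpha \\ \sin\alpha & \cos\alpha \end{bmatrix}$, so that $R^\top = \begin{bmatrix} \cos\alpha & \sin\alpha \\ -\sin\alpha & \cos\alpha \end{bmatrix}$, and hence $R + R^\top = 2\cos\alpha \, I$. This is the crux: the sum of a planar rotation and its transpose is simply a scalar multiple of the identity, with scalar $2\cos\alpha$.

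Once that identity is established, the claim is immediate: $R + R^\top = 2\cos\alpha\, I$ is positive definite if and only if $\cos\alpha > 0$, which for $\alpha \in [-\pi,\pi)$ holds precisely when $|\alpha| < \frac{\pi}{2}$. So under the hypothesis $|\alpha| < \frac{\pi}{2}$ we get $2\cos\alpha > 0$ and therefore $R + R^\top \succ 0$. I would state this as a short chain: for any nonzero $v \in \br^2$, $v^\top (R + R^\top) v = 2\cos\alpha \, \|v\|^2 > 0$.

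There is essentially no obstacle here — the lemma is a one-line computation. The only thing to be slightly careful about is the convention for the rotation matrix (clockwise vs. counterclockwise), but since we only need $R + R^\top$, the sign of $\sin\alpha$ is irrelevant: both conventions give $2\cos\alpha\, I$. I would therefore present the proof compactly, exhibiting the matrix form of $R$, noting the cancellation of the off-diagonal terms in $R + R^\top$, and concluding with the positive-definiteness criterion applied to $2\cos\alpha\, I$.

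\begin{proof}
Writing the rotation matrix explicitly, $R = \begin{bmatrix} \cos\alpha & -\sin\alpha \\ \sin\alpha & \cos\alpha \end{bmatrix}$, so that $R^\top = \begin{bmatrix} \cos\alpha & \sin\alpha \\ -\sin\alpha & \cos\alpha \end{bmatrix}$. Adding these, the off-diagonal entries cancel and we obtain $R + R^\top = 2\cos\alpha \, I$. For any nonzero $v \in \br^2$ we then have $v^\top (R + R^\top)\, v = 2\cos\alpha \, \|v\|^2$, which is strictly positive precisely when $\cos\alpha > 0$. Since $\alpha \in [-\pi,\, \pi)$, the condition $\cos\alpha > 0$ is equivalent to $|\alpha| < \frac{\pi}{2}$. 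Hence under the hypothesis $|\alpha| < \frac{\pi}{2}$, $R + R^\top$ is positive definite.
\end{proof}
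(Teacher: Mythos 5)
Your proof is correct and follows essentially the same route as the paper: write $R$ explicitly, observe that the off-diagonal terms cancel so $R + R^\top = 2\cos(\alpha)\, I$, and conclude positive definiteness from $\cos(\alpha) > 0$ for $|\alpha| < \frac{\pi}{2}$. Your version is in fact slightly more complete, since you also exhibit the quadratic form $v^\top (R + R^\top) v = 2\cos(\alpha)\|v\|^2$ explicitly.
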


\begin{proof}
Matrix $R \in \mathrm{SO}(2)$ can be represented as $R = \begin{bmatrix}
c & -s \\
s & c \\
\end{bmatrix}$,
where $c, \, s$ are shorthand notations for $\cos(\alpha),\, \sin(\alpha)$, respectively. Hence, $R + R^\top = \begin{bmatrix}
2c & 0 \\
0 & 2c \\
\end{bmatrix}$,
which since for $|\alpha|<\frac{\pi}{2}$ we have $c > 0$, and matrix $R + R^\top$ is positive definite. 
\end{proof}

We now present the proof of Theorem~\ref{thm:Robustness}.

\begin{proof}
Under the perturbed control \eqref{eq:PurturbedControl}, the aggregate dynamics can be represented by 
%
\begin{equation} \label{eq:PurturbedDynamic}
\dot{q} := P \, A \, q
\end{equation}
%
where $P := \mathrm{diag}(c_1 R_1,\, c_2 R_2,\, \dots,\, c_n R_n) \in \br^{2n \times 2n}$ is a block diagonal matrix that contains the perturbation terms. 
Consider the Lyapunov function candidate $V := - q^\top A\, q$. Note that $V$ is positive semidefinite since by design $A$ is negative semidefinite, and $V = 0$ if and only if $q \in \mathrm{ker}(A)$. Noting that $A^\top = A$, derivative of $V$ along the trajectories of \eqref{eq:PurturbedDynamic} is
%
\begin{gather} \label{eq:}
\dot{V} = -\dot{q}^\top A \, q - q^\top A \, \dot{q}  
= - q^\top A \left(  P^\top + P  \right)\, A \, q.
\end{gather}
%
Matrix $P^\top + P$ is block diagonal and each diagonal block is given by $c_i \, (R_i^\top + R_i) \in \br^{2\times 2}$. From Lemma~\ref{lem:PosDefRotation}, we have that if $|\alpha_i| < \frac{\pi}{2}$ and $c_i > 0$ for all $i \in \{1,\, \dots,\, n\}$, then all diagonal blocks are positive definite. This implies that ${P^\top + P}$ is positive definite, and consequently $\dot{V} < 0$ for all $q \notin \mathrm{ker}(A)$. From the Lyapunov stability theory and LaSalle's invariance principle \cite{Khalil1996} it then follows that all trajectories of \eqref{eq:PurturbedDynamic} converge to the invariant set $q \in \mathrm{ker}(A)$, which shows that the desired formation is achieved.  
\end{proof}

\subsection{Robustness to Saturated Input}

In practice, the velocity of an agent cannot take arbitrary large values. Thus, any large control input will be saturated by a maximum feasible/allowed speed. This, however, does not affect convergence of agents to the desired formation. 

\begin{theorem} \label{thm:SingIntSaturated}
Consider single-integrator agents with dynamics \eqref{eq:SingInt} and assume that $u_{\max} > 0$ is a real positive scalar. If $u_i$ is saturated such that $|u_i| \leq u_{\max}$, then under the control \eqref{eq:HolonomCtrl} the desired formation is achieved globally.  
\end{theorem}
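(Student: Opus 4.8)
The plan is to recognize the saturated closed loop as a purely \emph{scaled} (no rotation) version of the perturbed dynamics \eqref{eq:PurturbedDynamic}, with the scaling now state-dependent, and then to replay almost verbatim the Lyapunov/LaSalle argument from the proof of Theorem~\ref{thm:Robustness}.

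First I would make the saturation explicit. Write $(Aq)_i \in \br^2$ for the $i$-th $2\times 2$ block-row of $A$ acting on $q$, so the unsaturated input is $u_i = (Aq)_i$. Then the saturated input is $u_i = \mathrm{sat}\big((Aq)_i\big)$, where $\mathrm{sat}(v) = v$ if $|v|\le u_{\max}$ and $\mathrm{sat}(v) = u_{\max}\, v/|v|$ otherwise. In both cases $\mathrm{sat}(v) = c(v)\, v$ with $c(v) := \min\{1,\, u_{\max}/|v|\} \in (0,1]$ (set $c(0):=1$). Hence the closed-loop system is $\dot q = D(q)\, A\, q$ with $D(q) := \mathrm{diag}\big(c_1(q) I_2,\dots,c_n(q) I_2\big)$ and $c_i(q) := c\big((Aq)_i\big)\in(0,1]$. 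This is exactly \eqref{eq:PurturbedDynamic} with $R_i = I$ and positive scalars $c_i$ that are allowed to depend on the state; the map $q\mapsto D(q)Aq$ is globally Lipschitz, so solutions are well defined. The point is that the Lyapunov computation in the proof of Theorem~\ref{thm:Robustness} only uses positivity of the $c_i$ \emph{pointwise}, so it still applies.

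Next I would reuse $V := -q^\top A\, q$, which is nonnegative since $A$ is negative semidefinite by design and vanishes exactly on $\mathrm{ker}(A)$. Using $A^\top = A$ and symmetry of $D(q)$, along trajectories
\[
\dot V = -q^\top A\big(D(q)^\top + D(q)\big) A\, q = -2\sum_{i=1}^{n} c_i(q)\,\big|(Aq)_i\big|^2 \le 0,
\]
with equality iff $(Aq)_i = 0$ for all $i$, i.e. iff $q\in\mathrm{ker}(A)$. Since $-A$ restricted to $\mathrm{range}(Q)$, the orthogonal complement of $\mathrm{ker}(A)=\mathrm{range}(N)$, is positive definite (this is precisely what Algorithm~\ref{alg:GainDesign} enforces via \eqref{eq:OptimSDP}), $V$ bounds the ``shape'' component $Q^\top q$, so the trajectory remains in a compact set modulo $\mathrm{ker}(A)$; any drift along $\mathrm{ker}(A)$ is just a translation/rotation/scaling and is immaterial. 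LaSalle's invariance principle then yields convergence to the largest invariant subset of $\{\dot V = 0\} = \mathrm{ker}(A)$, and $\mathrm{ker}(A)$ is itself invariant because there $\dot q = D(q)\cdot 0 = 0$. Hence $q(t)\to\mathrm{ker}(A)$, i.e. the desired formation is achieved, and since nothing constrains $q(0)$ this is global.

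I expect the only delicate point, relative to Theorem~\ref{thm:Robustness}, to be that $c_i(q)$ can become arbitrarily small when $|(Aq)_i|$ is large, so $\dot V$ is not bounded away from zero uniformly off $\mathrm{ker}(A)$; this is exactly why one must invoke LaSalle together with the boundedness of $Q^\top q$ furnished by $V$, rather than claim a uniform exponential-type decrease as in the unsaturated case. Everything else is a direct repetition of the computation in the proof of Theorem~\ref{thm:Robustness}.
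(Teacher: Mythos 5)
Your proof is correct and follows essentially the same route as the paper: both model the saturation as a pointwise positive scaling of the nominal control, use the quadratic function $V=-q^\top A\,q$ (the paper carries a factor $\tfrac{1}{2}$), obtain $\dot V = -\sum_i c_i\,|(Aq)_i|^2\le 0$ with equality exactly on $\ker(A)$, and conclude by LaSalle that $q$ converges to $\ker(A)$. The only difference is packaging — the paper phrases the saturated closed loop as a family of switched systems with a common Lyapunov function (via Lemma~\ref{lem:SwitchedSys} and Corollary~\ref{cor:SwitchedSysExtension}), whereas you note the saturated vector field is a single globally Lipschitz ODE and apply classical LaSalle directly, with a somewhat more careful justification of trajectory boundedness through the positive definiteness of $-Q^\top A\,Q$ on the complement of $\ker(A)$.
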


\begin{proof}

We first discuss the following Lemma and  Corollary:

\begin{lemma} \label{lem:SwitchedSys}
	\cite[Sec. 2.1.2]{Liberzon2012}
	Consider the family of switched systems $\dot{x} = f_i(x)$, with $i = 1,2, \dots, N$. Let $V: \br^n \rightarrow \br$ be a positive definite, continuously differentiable, and radially unbounded function. If $\frac{\partial V}{\partial x} f_i(x) < 0,\, \forall x \neq 0,\, \forall i$, then the switched system is globally uniformly asymptotically stable.
\end{lemma}

\begin{corollary} \label{cor:SwitchedSysExtension}
	Lemma \ref{lem:SwitchedSys} can be extended to a positive semidefinite $V$  with the zero set of $Z := \{ x \in \br^n \, : \, V(x) = 0 \}$ . In this case, if $\frac{\partial V}{\partial x} f_i(x) < 0,\, \forall x \notin  Z,\, \forall i$, then all trajectories globally uniformly asymptotically converge to $Z$.
\end{corollary}

To model the input saturation we can define the diagonal matrix $S \in \br^{n \times n}$ with diagonal elements
%
\begin{gather} \label{eq:S}
(S)_{ii} = \begin{cases}
1 & \text{if} ~ |u_i| \leq u_{\max}  \\
\frac{u_{\max}}{|u_i|} & \text{if} ~ |u_i| > u_{\max}.
\end{cases}
\end{gather}
%
As illustrated in Fig.~\ref{fig:Saturate}, diagonal elements of $S$ can be considered as functions that saturate any large input to the maximum value $u_{\max}$.
The closed-loop dynamics under the saturated input can be expressed in the vector form via
%
\begin{gather} \label{eq:SingIntSat}
\dot{q} =  S\,  A \, q 
\end{gather}
%
System \eqref{eq:SingIntSat} should be understood as a family of switched dynamical systems, for which the solution is well-defined in the Filippov sense (see Chapter 2 in \cite{Liberzon2012} for more details). To show that this system is uniformly stable, we consider 
%
\begin{gather} \label{eq:}
V := -\frac{1}{2} q^\top A \, q \geq 0
\end{gather}
%
as a common Lyapunov function candidate for all systems. Note that since $A$ is negative semidefinite, $V$ is a positive semidefinite scalar valued  function.
Time derivative of $V$ along the trajectory of \eqref{eq:SingIntSat} is
%
\begin{align} \label{eq:}	
\dot{V} &= - q^\top A \, \dot{q} \nonumber \\
&= - q^\top A \, S\, A \, q \nonumber  \\
&= - (S^{\frac{1}{2}} \,  A \, q)^\top (S^{\frac{1}{2}}\, A \, q) = - \| S^{\frac{1}{2}} \, A \, q \|^2 ~ \leq \, 0,
\end{align}
%
where $S^{\frac{1}{2}}$ is the diagonal matrix with elements given by the square root of diagonal entries of $S$. Note that all diagonal elements of $S$ are strictly positive, hence $S^{\frac{1}{2}}$ is well-defined. 
Since $V$ is a positive semidefinite, continuously differentiable, and radially unbounded function, from Lemma \ref{lem:SwitchedSys}, Corollary \ref{cor:SwitchedSysExtension}, and LaSalle's invariance principle it follows that all trajectories of \eqref{eq:SingIntSat} converge to  the zero set of $V$, which is the kernel of $A$. Thus, the desired formation is achieved. 
\end{proof}

\begin{figure}
	\begin{center}
		\includegraphics[trim = 60mm 48mm 70mm 48mm, clip, width=0.26\textwidth] {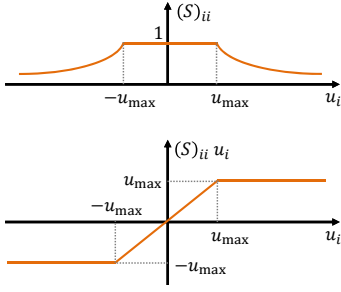}
		\caption{Top: The $i$-th diagonal entry of matrix $S$. Bottom: The effect of saturation on the control.}	
		\label{fig:Saturate}			
	\end{center}
\end{figure}

\begin{remark} \label{rem:SteadyStateError}
To reject steady state errors, the control law \eqref{eq:HolonomCtrl} can be augmented by an integrator term as 
%
\begin{gather} \label{eq:HolonomCtrlInteg}
u_i := k_0 \, \sum_{j \in \mathcal{N}_i}{A_{ij} \, (q_j-q_i)} + k_1 \, \int_{0}^{t}{\sum_{j \in \mathcal{N}_i}{A_{ij} \, (q_j-q_i)} \, d\tau},
\end{gather}
%
where $k_0, \, k_1 \in \mathbb{R}$ are scalar control gains. It can be shown that if $k_0, \, k_1 >0 $, this augmented control rejects constant input/output disturbances (see \cite[Sec. III-D]{Fathian2018} for more details).
\end{remark}

\phantomsection \label{p:b6} 
\begin{remark} \label{rem:}
The robustness properties of control \eqref{eq:HolonomCtrl}, such as robustness to positive scaling and rotations up to $\pm 90^\circ$, are similar to the properties of the first-order consensus methods. This originates from the structure of the Lyapunov analysis that is similar in both approaches. However, consensus-based methods \cite{Lawton2003, Ren2007} require alignment of agents' local coordinate frames, whereas the formation control strategy studied in this work does not have this constraint.
\end{remark}

\section{Formation Control for Agents with Higher-order Dynamics} \label{sec:HigherOrder}

In this section, we extend the single-integrator control strategy to agents with higher-order dynamics. We show how the control gains designed for single-integrator agents in Section \ref{sec:Optimization} can be used directly to control higher-order agents \textit{without having to find a new control strategy or redesign the gains by solving a new optimization problem.}  This means the same formation can be regulated for any  type of vehicle using the same gains.
We assume that the aggregate higher-order dynamics of all agents can be expressed in the controllable canonical form
%
\begin{gather} \label{eq:HigherOrderModel}
\begin{bmatrix}
\dot{q} 	  \\
\dot{q}^{(1)} \\
\vdots  	  \\
\dot{q}^{(m-1)} \\ 
\dot{q}^{(m)}   
\end{bmatrix}
= \begin{bmatrix}
0 & I & 0 & \cdots & 0  \\
0 & 0 & I &        & 0  \\
\vdots & & & \ddots & \vdots \\ 
0 & 0 & 0 &        & I \\
0 & 0 & 0 & \cdots & 0 
\end{bmatrix}
\begin{bmatrix}
q		 \\
q^{(1)}  \\
\vdots   \\
q^{(m-1)} \\ 
q^{(m)}   
\end{bmatrix} +
\begin{bmatrix}
0	  \\
0 \\
\vdots  	  \\
0 \\ 
I   
\end{bmatrix} \, u,
\end{gather}
%
where $q \in \br^{2n}$ is the aggregate position vector of all agents,  $q^{(j)} \in \br^{2n}$ denotes the $j$'th derivative of $q$, and $I \in \br^{n\times n}$ is the identity matrix.
Although at first sight \eqref{eq:HigherOrderModel} may seem restrictive, in fact, it encompasses a large class of agents. This is because by coordinate transformation techniques such as feedback linearization, or approximation techniques such as linearization and gain scheduling, dynamics of many systems can be expressed as \eqref{eq:HigherOrderModel}.

Given the gain matrix $A$ designed for agents with the single-integrator model, the control for agents with dynamics \eqref{eq:HigherOrderModel} can be chosen as
%
\begin{equation} \label{eq:HigherOrderControl}
u =  k_0 \, A \, q +  k_1 \, A \, q^{(1)} + \dots +  k_m \, A \, q^{(m)},
\end{equation}
%
where $k_0,\, k_1,\, \dots,\, k_m \in \br$ are scalar control gains, and $u := [u_1^\top,\, u_2^\top,\, \dots, u_n^\top]^\top \in \br^{2n}$ denote the aggregate control vector. Note that \eqref{eq:HigherOrderControl} can be implemented locally using only the relative measurements (due to the special structure of $A$). Under this control, the closed-loop dynamics is given by
%
\begin{gather} \label{eq:HigherOrderClosed}
\begin{bmatrix}
\dot{q} 	  \\
\dot{q}^{(1)} \\
\vdots  	  \\
\dot{q}^{(m \mbox{-} 1)} \\ 
\dot{q}^{(m)}   
\end{bmatrix}
= \underbrace{\begin{bmatrix}
	0 & I & 0 & \cdots & 0  \\
	0 & 0 & I &        & 0  \\
	\vdots & & & \ddots & \vdots \\ 
	0 & 0 & 0 &        & I \\
	k_0 A& k_1 A & k_2 A & \cdots & k_m A  
	\end{bmatrix}}_{E}
\begin{bmatrix}
q		 \\
q^{(1)}  \\
\vdots   \\
q^{(m \mbox{-} 1)} \\ 
q^{(m)}   
\end{bmatrix}.
\end{gather}
%

\begin{theorem} \label{thm:HigherOrderConvergence}
If for all nonzero $\mu \in \mathrm{eig}(A)$ roots of the polynomial equation
%
\begin{gather} \label{eq:DegreeM}
\lambda^{m+1} - k_m \, \mu \, \lambda^{m} -\dots - k_1 \, \mu \, \lambda - k_0 \, \mu = 0
\end{gather}
%
have negative real parts, then under control \eqref{eq:HigherOrderControl}, agents with dynamics \eqref{eq:HigherOrderModel} globally converge to the desired formation.
\end{theorem}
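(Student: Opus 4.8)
The plan is to factor out the kernel of $A$ --- as in Lemma~\ref{lem:Lbar} --- and reduce the $2n(m+1)$-dimensional closed loop \eqref{eq:HigherOrderClosed} to a system with a Hurwitz matrix, the key ingredient being a spectral correspondence between block companion matrices of the form appearing in \eqref{eq:HigherOrderClosed} and scalar companion systems whose characteristic polynomial is exactly \eqref{eq:DegreeM}.

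First I would establish that correspondence. Because the gain matrix $A$ from the design of Section~\ref{sec:Optimization} is real and symmetric, it has an orthonormal eigenbasis $v_1,\dots,v_{2n}$ with real eigenvalues; projecting the derivatives onto $v_k$ via $c^{(j)}_k:=v_k^{\top}q^{(j)}$ turns \eqref{eq:HigherOrderClosed} into $2n$ decoupled scalar systems, the $k$-th being $\dot c^{(p)}_k=c^{(p+1)}_k$ for $p<m$ and $\dot c^{(m)}_k=\mu_k\big(k_0 c^{(0)}_k+\cdots+k_m c^{(m)}_k\big)$, whose characteristic polynomial is precisely $\lambda^{m+1}-k_m\mu_k\lambda^{m}-\cdots-k_0\mu_k$. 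Equivalently, writing $E=(J\otimes I_{2n})+((e\,k^{\top})\otimes A)$ with $J$ the superdiagonal shift on $\br^{m+1}$, $e$ its last standard basis vector and $k=(k_0,\dots,k_m)^{\top}$, conjugation by $I_{m+1}\otimes[v_1,\dots,v_{2n}]$ together with a reordering of the coordinates block-diagonalizes $E$ into the scalar companion matrices $C(\mu_k)$. Hence $\operatorname{eig}(E)=\bigcup_{\mu\in\operatorname{eig}(A)}\{\lambda\in\bc:\lambda^{m+1}-k_m\mu\lambda^{m}-\cdots-k_0\mu=0\}$. In particular the four zero eigenvalues of $A$ (Theorem~\ref{thm:PConvergence}) contribute only $\lambda^{m+1}=0$, i.e.\ four nilpotent Jordan blocks of size $m+1$, so $E$ itself is never Hurwitz; this is precisely why one must pass to a quotient.

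So next I would replace $E$ by its restriction to the orthogonal complement of $\operatorname{ker}(A)$: with $Q$ as in \eqref{eq:Q}, set the reduced state $\bar z:=(I_{m+1}\otimes Q^{\top})[q^{\top},(q^{(1)})^{\top},\dots,(q^{(m)})^{\top}]^{\top}$, which obeys $\dot{\bar z}=\bar E\,\bar z$ with $\bar E$ the block companion matrix built from $\bar A=Q^{\top}AQ$ in place of $A$. Applying the spectral correspondence to $\bar E$ and invoking Lemma~\ref{lem:Lbar} (so that $\operatorname{eig}(\bar A)$ is exactly the set of nonzero eigenvalues of $A$), the hypothesis forces every eigenvalue of $\bar E$ to have negative real part; hence $\bar z(t)\to0$ exponentially, i.e.\ $Q^{\top}q(t)\to0$. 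Since $A=A^{\top}$ gives $\operatorname{range}(Q)^{\perp}=\operatorname{ker}(A)$, this says $q(t)$ converges to $\operatorname{ker}(A)$, which by Theorem~\ref{thm:PConvergence} is exactly the set of translations, rotations, and non-negative scalings of the desired formation --- so the agents attain the desired formation. The main obstacle is the one flagged above: the unavoidable nilpotent part coming from $\operatorname{ker}(A)$, which both forces the quotient construction and means that \emph{convergence to the formation} must be understood as convergence of $q(t)$ to $\operatorname{ker}(A)$ rather than to a fixed point --- the ensemble may drift or rescale within $\operatorname{ker}(A)$, exactly as the constant kernel component does in the single-integrator Theorem~\ref{thm:PConvergence}. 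The remaining ingredients --- identifying the characteristic polynomial of the scalar companion block with the left side of \eqref{eq:DegreeM}, the Kronecker/permutation similarity, and the structure of $\bar E$ --- are routine companion-matrix computations.
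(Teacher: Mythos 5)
Your proof is correct, and it reaches the same spectral characterization as the paper --- namely that $\operatorname{eig}(E)$ is the union over $\mu\in\operatorname{eig}(A)$ of the roots of \eqref{eq:DegreeM} --- but by a different mechanism and with an extra step the paper omits. The paper exploits the block companion form of $E$ to write its characteristic polynomial as $\det\big(\lambda^{m+1}I-k_m\lambda^m A-\cdots-k_0 A\big)$ and then factors it over $\operatorname{eig}(A)$ via the polynomial spectral mapping Lemma~\ref{lem:PolyEigens}; you instead use the orthogonal diagonalization of the symmetric $A$ to decouple \eqref{eq:HigherOrderClosed} into $2n$ scalar companion systems, which yields the same factorization and works for any diagonalizable $A$ without invoking the block-determinant identity. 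More importantly, the paper's proof stops at ``the nonzero eigenvalues of $E$ have negative real parts,'' leaving implicit how convergence to the formation follows; this is not as immediate as in Theorem~\ref{thm:PConvergence}, because the four zero eigenvalues of $A$ produce defective zero eigenvalues of $E$ (nilpotent Jordan blocks of size $m+1$), so the component of the state in $\operatorname{ker}(A)\otimes$-directions can grow polynomially rather than remain constant. Your quotient construction --- checking that $(I_{m+1}\otimes Q^{\top})E=\bar E\,(I_{m+1}\otimes Q^{\top})$ using $Q^{\top}A=\bar A\,Q^{\top}$ (which holds since $A$ is symmetric with $\operatorname{ker}(A)=\operatorname{range}(\bar Q)$), so that $\bar E$ is Hurwitz and $Q^{\top}q(t)\to0$ --- is exactly the argument needed to make the conclusion rigorous, and it correctly identifies that ``convergence to the desired formation'' must be read as convergence of $q$ to $\operatorname{ker}(A)$ while the in-kernel component may drift. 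So your route buys both a slightly more elementary derivation of the eigenvalue correspondence and a complete treatment of the degenerate modes that the paper's two-line proof glosses over.
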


Before we present the proof of Theorem~\ref{thm:HigherOrderConvergence}, we present and prove the following Lemma.

\begin{lemma} \label{lem:PolyEigens}
Let $p(\cdot)$ be a given polynomial. If $\mu$ is an eigenvalue
of matrix $A$ with $v$ as the associated eigenvector, then $p(\mu)$ is an eigenvalue of the matrix $p(A)$ with $v$ as the associated eigenvector.	
\end{lemma}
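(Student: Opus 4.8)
The plan is to prove Lemma~\ref{lem:PolyEigens} by a direct computation using the eigenvector relation $A\,v = \mu\, v$, lifted to arbitrary powers of $A$ and then extended linearly to polynomials. First I would write $p(\lambda) = \sum_{k=0}^{d} c_k \lambda^k$ for the given polynomial, so that $p(A) = \sum_{k=0}^{d} c_k A^k$, with the convention $A^0 = I$.

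Second, I would establish by induction on $k$ that $A^k\, v = \mu^k\, v$ for every integer $k \geq 0$. The base case $k=0$ is trivial since $I\,v = v = \mu^0\, v$; the inductive step uses $A^{k+1} v = A(A^k v) = A(\mu^k v) = \mu^k (A\,v) = \mu^k \mu\, v = \mu^{k+1} v$. Third, I would combine these facts linearly:
\begin{gather} \label{eq:polyeigproof}
p(A)\, v = \sum_{k=0}^{d} c_k A^k\, v = \sum_{k=0}^{d} c_k \mu^k\, v = \left( \sum_{k=0}^{d} c_k \mu^k \right) v = p(\mu)\, v.
\end{gather}
Since $v \neq 0$ (it is an eigenvector), this shows $p(\mu)$ is an eigenvalue of $p(A)$ with the same eigenvector $v$, completing the argument.

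There is essentially no obstacle here — the lemma is a standard fact about the functional calculus for a single matrix and follows from commutativity of powers of $A$ with $A$ itself. The only point worth a word of care is keeping the conventions consistent (treating the constant term of $p$ as $c_0 I$ rather than the scalar $c_0$, so that $p(A)$ is genuinely a matrix), and noting that the claim is about $p(\mu)$ being \emph{an} eigenvalue, not a statement about algebraic or geometric multiplicities. I would keep the proof to just a couple of lines, since its role is purely as a stepping stone toward analyzing the spectrum of the block companion-type matrix $E$ in \eqref{eq:HigherOrderClosed} via the scalar polynomial \eqref{eq:DegreeM}.
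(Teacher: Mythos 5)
Your proposal is correct and follows essentially the same route as the paper's proof: expand $p(A)$ into powers of $A$, show $A^k v = \mu^k v$ (the paper does this by an inline telescoping computation, you by a formally stated induction — the same argument), and then combine linearly to obtain $p(A)\,v = p(\mu)\,v$. The added care about the $A^0 = I$ convention and $v \neq 0$ is fine but does not change the substance.
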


\begin{proof}
Let $p(\cdot)$ be a polynomial of degree $k$, and consider 
%
\begin{gather} \label{eq:polyA}
p(A)\, v = a_k A^k v + a_{k-1} A^{k-1} v + \dots + a_1 A\, v + a_0 \, v,
\end{gather}
%
where $a_j$'s,  $j = 0, \dots, k$,  are coefficients of the polynomial. Since $v$ is an eigenvector, we have $ A^j v = A^{j-1} (A\,v) = A^{j-1} (\mu \, v) = \mu (A^{j-1} v) = \dots = \mu^j v$. Thus, from \eqref{eq:polyA} we get
%
\begin{gather*} \label{eq:}
p(A)\, v = (a_k \mu^k  + a_{k-1} \mu^{k-1}  + \dots + a_1 \mu\,  + a_0) \, v = p(\mu)\, v,
\end{gather*}
%
which concludes the proof.
\end{proof}

We now present the proof of Theorem~\ref{thm:HigherOrderConvergence}.

\begin{proof}
The closed-loop state matrix $E$, defined in \eqref{eq:HigherOrderClosed}, is in the (block) controllable canonical form. From this observation and Lemma \ref{lem:PolyEigens}, the characteristic equation of $E$ is given by
%
\begin{align} \label{eq:}
& \det( \lambda^{m+1}\, I - k_m \, \lambda^{m} \, A -\dots - k_1 \, \lambda\, A - k_0 \, A )  \nonumber \\
& =  \prod_{\mu \in \mathrm{eig}(A)}{( \lambda^{m+1}\, I - k_m \, \lambda^{m} \, \mu -\dots - k_1 \, \lambda\, \mu - k_0 \, \mu )} = 0,
\end{align}
%
which from the assumption of the theorem implies that the nonzero eigenvalues of $E$ have negative real parts.
\end{proof}

To find gains $k_0,\, k_1,\, \dots, k_m$ that satisfy the condition of Theorem \ref{thm:HigherOrderConvergence}, the Routh-Hurwitz criterion can be used.

\begin{remark} \label{rem:HigherOrder}
In the above analysis, the control can alternatively be chosen as
%
\begin{gather} \label{eq:HigherOrderControl2}
u =  k_0 \, A \, q +  k_1  \, q^{(1)} + \dots +  k_m  \, q^{(m)}.
\end{gather}
%
In this case, agents do not need measurements of states $q^{(1)}, \, \dots,\, q^{(m)}$ for their neighbors (since $A$ is replaced by the identity matrix). Note that \eqref{eq:HigherOrderControl2} can also be implemented using only the local relative measurements. 
%
%
\end{remark}

\phantomsection \label{p:b11}
\begin{remark} \label{rem:Sensing}
There are several methods that can be used to determine the relative position of agents; e.g., our previous work \cite{Fathian2018a} used vision sensors. LIDAR could be used to augment vision for accurate displacement measurements. In what was discussed above, relative velocity, acceleration, etc., can be computed by taking time derivatives of the measured relative position and using appropriate filtering when measurements are noisy. 
\end{remark}

\begin{example}\textbf{(Quadrotor dynamics)\\} \label{ex:Quadrotor}
Quadrotor dynamics can be described as \cite{Sabatino2015}
%
\begin{subequations} \label{eq:QuadNonlinDynam}
\begin{align} 
\begin{bmatrix}
\ddot{x} \\
\ddot{y} \\
\ddot{z} \\
\end{bmatrix} &= 
R \, 
\begin{bmatrix}
0 \\
0 \\
u^a \\
\end{bmatrix} -
\begin{bmatrix}
0 \\
0 \\
g \\
\end{bmatrix},
\\
\begin{bmatrix}
\dot{\varphi} \\
\dot{\theta} \\
\dot{\psi} \\
\end{bmatrix} &= T \,
\,
\begin{bmatrix}
\omega_x \\
\omega_y \\
\omega_z \\
\end{bmatrix},
\\
\begin{bmatrix}
\dot{\omega}_x \\
\dot{\omega}_y \\
\dot{\omega}_z \\
\end{bmatrix} &= 
J^{-1} \,
\begin{bmatrix}
u^x \\
u^y \\
u^z \\
\end{bmatrix} - 
J^{-1} \, \left(
\begin{bmatrix}
\omega_x \\
\omega_y \\
\omega_z \\
\end{bmatrix} \, \times J \, 
\begin{bmatrix}
\omega_x \\
\omega_y \\
\omega_z \\
\end{bmatrix}
 \right),
\end{align}
\end{subequations}
%
where, as illustrated in Fig.~\ref{fig:Quad}, $x,\, y,\, z \in \br$ are coordinates of the quadrotor's center of mass in the world frame,
$\varphi, \, \theta, \, \psi$ are roll, pitch, yaw angles that describe the orientation of the quadrotor body frame in the world frame, 
$\omega_x,\, \omega_y,\, \omega_z$ are the angular body rates about  associated body axes, 
$g$ is the gravitational constant, 
$u^a$ is a mass-normalized thrust input,
and $u^x, u^y, u^z$ are moment inputs applied to the airframe about corresponding body axes. 
Further,  $J \in \br^{3\times 3}$ is the mass moment of inertia matrix,
$R \in \mathrm{SO}(3)$ is the rotation matrix parameterized in terms of $z$-$x$-$y$ Euler angles as
%
\begin{gather} \label{eq:R_Euler} 
R := \begin{bmatrix}
c_\psi \, c_\theta - s_\varphi \, s_\psi \, s_\theta & - c_\varphi \, s_\psi & c_\psi \, s_\theta + c_\theta \, s_\varphi \, s_\psi   \\
c_\theta \, s_\psi + c_\psi \, s_\varphi \, s_\theta & c_\varphi \, c_\psi & s_\psi \, s_\theta - c_\psi \, c_\theta \, s_\varphi \\
-c_\varphi \, s_\theta & s_\varphi & c_\varphi \, c_\theta \\
\end{bmatrix},
\end{gather}
%
where $c,\, s$ are respectively shorthand notations for $\cos(\cdot), \, \sin(\cdot)$ functions, and
%
\begin{gather} \label{eq:Transform} 
T := \frac{1}{c_\varphi} \,
\begin{bmatrix}
c_\varphi \, c_\theta & 0 & -c_\theta  \\
s_\varphi \, s_\theta & c_\varphi & -c_\theta \, s_\varphi \\
-s_\theta & 0 & c_\theta \\
\end{bmatrix} \in \br^{3\times 3}
\end{gather}
%
is the transformation matrix that relates the roll, pitch, yaw derivatives to the angular velocities in the body frame.

\begin{figure}
	\begin{center}
		\includegraphics[trim =85mm 72mm 95mm 75mm, clip, width=0.25\textwidth]{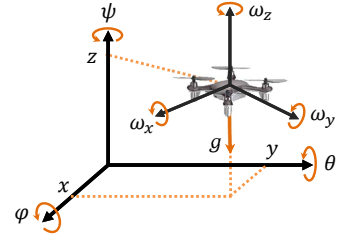}	
		\caption{Illustration of a quadrotor's body frame in the world frame.}
		\label{fig:Quad}
	\end{center}
\end{figure}

Linearizing dynamics \eqref{eq:QuadNonlinDynam} about the hover point $x = y = z = \dot{x} = \dot{y} = \dot{z} = 0$, $\omega_x = \omega_y = \omega_z = 0$, $u^x = u^y = u^z = 0$, and $u^a = g$ gives the quadrotor linearized dynamics 
%
\begin{gather} \label{eq:QuadLinDynamOrig}
\begin{matrix}
\delta \ddot{x} = ~g\, \delta \theta ~\,& \delta \ddot{\theta} = u^y \\
\delta \ddot{y} = -g\, \delta \varphi & \delta \ddot{\varphi} = u^x  \\
\delta \ddot{z} = ~u^a ~~~~& \delta \ddot{\psi} = u^z  \\
\end{matrix}
\end{gather}
%
where $\delta$ represents a small displacement about the equilibrium/linearization point.
Since we are interested in 2D formations, we only consider the lateral dynamics along the $x$-$y$ axes, and separately control the quadrotor's altitude by setting $u^a = \frac{g}{c_\varphi \, c_\theta}$ to stabilize it at a constant altitude.

To represent  the dynamics in the canonical form \eqref{eq:HigherOrderModel}, we define 
%
\begin{gather*} \label{eq:}
\delta \bar{\theta}_i := g\, \delta \theta_i, ~\quad \delta \bar{\varphi}_i := -g\, \delta \varphi_i, ~\quad \bar{u}^y_i := g\, u^y_i, ~\quad  \bar{u}^x_i := -g\, u^x_i ,
\end{gather*}
%
where subscript $i$ is used to distinguish agents.  Using this notation,  \eqref{eq:QuadLinDynamOrig} can be described in the vector form as
%
\begin{gather} \label{eq:QuadLinDynam}
\dot{p}_i = 
\begin{bmatrix}
0 & I & 0 & 0   \\
0 & 0 & I & 0   \\
0 & 0 & 0 & I   \\ 
0 & 0 & 0 & 0   \\ 
\end{bmatrix}
p_i +
\begin{bmatrix}
0   \\
0   \\
0   \\ 
I   \\ 
\end{bmatrix}
u_i
\end{gather}
%
where 
%
\begin{gather} \label{eq:QuadState}
\begin{aligned}
p_i := &{} [\delta x_i,\, \delta y_i,\, \delta \dot{x}_i,\, \delta \dot{y}_i,\, \delta \bar{\theta}_i,\, \delta \bar{\varphi}_i,\,  \delta \dot{\bar{\theta}}_i,\, \delta \dot{\bar{\varphi}}_i ]^\top ,\\
u_i := &{}[\bar{u}_i^y, \, \bar{u}_i^x]^\top ,
\end{aligned}
\end{gather}
%
are respectively the state and control vectors, and $I \in \br^{2\times 2}$ is the identity matrix. 
Note that by defining the aggregate position vector as $q = [\delta x_1,\, \delta y_1,\, \dots,\, \delta x_n,\, \delta y_n]^\top$, dynamics of agents can be expressed in the form \eqref{eq:HigherOrderModel}.  This model will be used in the Simulations section to achieve a desired formation.
\end{example}

\section{Formation Control for Agents with Unicycle Dynamics} \label{sec:Unicycle}

Motion profile of many vehicles, e.g., differential drive robots or fixed-wing aerial vehicles, can be described via the unicycle model.
In this section, we introduce the unicycle model and propose a formation control strategy to achieve the desired formation using the control gains that were designed for single-integrator agents. 
We then show that the desired formation is achieved even if the input is saturated, and the control strategy is robust to unknown dynamics that are not considered in the kinematic unicycle model.  
We assume henceforth that a symmetric negative semidefinite gain matrix $A$ is designed for the desired formation by solving the optimization problem \eqref{eq:OptimCVX}.

\subsection{Unicycle Dynamics} \label{sec:UnicycleDynam}

\phantomsection \label{p:b8} 
Consider a unicycle agent located at position $[x_i,\, y_i]^\top \in \br^2$ in a global coordinate frame (unknown to the agent), and assume that the unicycle's heading direction makes angle $\theta_i \in [0,\, 2\pi)$ with the $x$-axis of the global coordinate frame. This scenario is illustrated in Fig.~\ref{fig:control}. 
The unicycle dynamics can be described in the global coordinate frame by
%
\begin{gather} \label{eq:kinemUnicycle}
\begin{aligned}
\dot{x}_i &= v_i \, \cos{(\theta_i)} \\
\dot{y}_i &= v_i \, \sin{(\theta_i)} \\
\dot{\theta}_i &= \omega_i \\
\end{aligned}
\end{gather}
%
where scalars $v_i,\, \omega_i \in \mathbb{R}$ are respectively the linear and angular velocities of the agent.
In the unicycle kinematic model, it is assumed that $v_i$ and $\omega_i$ are control variables and can be changed instantaneously. 

In the global coordinate frame, the unit norm heading vector of the unicycle, $h_i \in \mathbb{R}^2$, and its perpendicular vector $h_i^\perp \in \mathbb{R}^2$, are given by
%
\begin{gather} \label{eq:hdef}
h_i := \begin{bmatrix}
\cos{(\theta_i)} \\
\sin{(\theta_i)}
\end{bmatrix},\qquad
h_i^\perp := \begin{bmatrix}
-\sin{(\theta_i)} \\
\cos{(\theta_i)}
\end{bmatrix}.
\end{gather}
%
Seeing that $\dot{h}_i = h_i^\perp \, \dot{\theta}_i$, \eqref{eq:kinemUnicycle} can be equivalently described by
%
\begin{gather} \label{eq:kinemUnicycle2}
\begin{aligned}
\dot{q}_i &=  h_i \, v_i \\
\dot{h}_i &= h_i^\perp \, \omega_i .
\end{aligned}
\end{gather}
%
%
Let $q := [q_1^\top,\, q_2^\top,\, \dots, \, q_n^\top]^\top \in \br^{2n}$ be the aggregate position vector of all agents, and similarly let $h \in \br^{2n}$, $v \in \br^{n}$, $\omega \in \br^{n}$ be the aggregate heading, linear velocity, and angular velocity vectors, respectively. 
Using this notation, the motion of all agents can be collectively expressed as
%
\begin{gather} \label{eq:KinemDynamVec}
\begin{aligned}
\dot{q} &= H \, v \\
\dot{h} &= H^\perp  \omega.
\end{aligned}
\end{gather}
%
where matrices ${H,\,  H^\perp \in \mathbb{R}^{2n\times n}}$ are defined as
%
\begin{gather} \label{eq:H}
H := \begin{bmatrix}
h_1 & 0 & \cdots & 0 \\
0  & h_2 &       & 0 \\
\vdots  &  & \ddots & \vdots \\
0  &  0 &  \cdots  & h_n  
\end{bmatrix}, \quad
H^\perp := \begin{bmatrix}
h_1^\perp & 0 & \cdots & 0 \\
0  & h_2^\perp &       & 0 \\
\vdots  &  & \ddots & \vdots \\
0  &  0 &  \cdots  & h_n^\perp  
\end{bmatrix}.
\end{gather}
%

\begin{figure} 
	\begin{center}	
		\includegraphics[trim = 90mm 75mm 80mm 60mm, clip, width=0.26\textwidth] {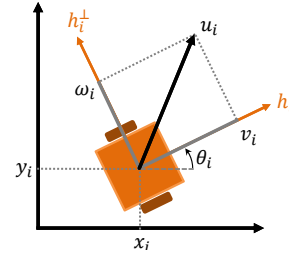}
		\caption{An agent with unicycle dynamics at position $(x_i,y_i)$ in the global coordinate frame. The agent's heading is denoted by $h_i$, and makes the angle $\theta_i$ with the global coordinate frame's $x$-axis. Scalars $v_i$ and $\omega_i$ are defined as the length of the control vector $u_i$ projected on $h_i$ and $h_i^\perp$, respectively.}	
		\label{fig:control}			
	\end{center}
\end{figure}

\subsection{Control Strategy}

Consider a team of $n$ unicycle agents with dynamics \eqref{eq:kinemUnicycle2}. We seek to assign controls $v_i$ and $\omega_i$ such that agents autonomously achieve a desired formation.  
Let $A \in \br^{2n \times 2n}$ be a symmetric gain matrix designed in Section \ref{sec:Optimization} for agents with single-integrator model to achieve the desired formation. 
Further, let $u_i$ given in \eqref{eq:HolonomCtrl} be the desired holonomic control direction for agent $i$.
The proposed control strategy is as follows. Each agent computes the control vector $u_i$ and projects it on its local heading and perpendicular heading directions. The projected vectors are then used as the linear and angular velocity commands. In the global coordinate frame, which is unknown to agent, this strategy can be described by
%
\begin{gather} \label{eq:kinemCtrl}
\begin{aligned}
v_i &:= h_i^\top \, u_i \\
\omega_i &:= h_i^{\perp \top} u_i ,
\end{aligned}
\end{gather}
%
as illustrated in Fig.~\ref{fig:control}.
\phantomsection \label{p:b9} Implementation of \eqref{eq:kinemCtrl} does not rely on a global coordinate system. 
This is because $h_i$ is a unit vector along the direction of vehicle, which is known by the agent locally, and $u_i$ is the single-integrator control given in the agent's local coordinate frame.

\begin{theorem} \label{thm:kinemP}
Let $A$ be a symmetric gain matrix designed for single-integrator agents. Under the control \eqref{eq:kinemCtrl}, unicycle agents globally converge to the desired formation.  
\end{theorem}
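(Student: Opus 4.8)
The plan is to follow the ``gradient descent along the instantaneously feasible direction'' viewpoint that underlies the nonholonomic extension. First I would write the closed loop explicitly: substituting \eqref{eq:kinemCtrl} into \eqref{eq:kinemUnicycle2} gives $\dot{q}_i = (h_i h_i^\top)\,u_i$ and $\dot{h}_i = (h_i^\perp h_i^{\perp\top})\,u_i$, where, by the block-Laplacian structure of $A$, the aggregate single-integrator control is $u = A q$ (equivalently $u_i = (Aq)_i = \sum_{j\in\mathcal{N}_i} A_{ij}(q_j-q_i)$). Geometrically, $h_i h_i^\top u_i$ is the orthogonal projection of the single-integrator velocity $u_i$ onto the agent's heading line, so the unicycle executes the single-integrator descent restricted to the direction it can currently move along, while the heading equation $\dot{\theta}_i = \omega_i = h_i^{\perp\top} u_i$ acts as a proportional steering law that rotates $h_i$ toward $u_i$. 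I would also remark that $h_i h_i^\top u_i = \cos\beta_i\, R(-\beta_i)\, u_i$, with $\beta_i$ the angle from $h_i$ to $u_i$; this is \emph{not} of the scaled/rotated form admitted by Theorem~\ref{thm:Robustness} once $|\beta_i|\ge \pi/2$, so a direct reduction to that theorem is unavailable and the heading dynamics must be brought in.

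Next I would reuse the Lyapunov function of the single-integrator case, $V := -\tfrac{1}{2}\, q^\top A\, q$, which is well defined and nonnegative because $A$ is symmetric and negative semidefinite by design, with $V=0$ iff $q\in\ker A$. Since $\|h_i\|$ is conserved along \eqref{eq:kinemUnicycle2} and $V$ is nonincreasing, the relevant (shape) state --- $q$ modulo $\ker A$, together with $h$ --- stays in a compact positively invariant set, exactly as handled in the proofs of Theorems~\ref{thm:Robustness} and \ref{thm:SingIntSaturated}, so LaSalle's invariance principle applies. Differentiating and using $A=A^\top$ and $u=Aq$ gives $\dot{V} = -q^\top A\, \dot{q} = -(Aq)^\top \dot{q} = -\sum_i u_i^\top (h_i v_i) = -\sum_i v_i^2 \le 0$, since $v_i = h_i^\top u_i$.

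The crux, and the step I expect to be the main obstacle, is the invariant-set analysis. On the largest invariant set contained in $\{\dot{V}=0\} = \{v_i = 0\ \forall i\}$ one has $\dot{q}_i = h_i v_i = 0$, so every $q_i$, and hence every $u_i = (Aq)_i$, is constant along such a trajectory. Differentiating the identity $v_i\equiv 0$ then yields $0 = \dot{v}_i = \dot{h}_i^\top u_i + h_i^\top \dot{u}_i = (h_i^\perp \omega_i)^\top u_i = \omega_i\,(h_i^{\perp\top} u_i) = \omega_i^2$, using $\dot{u}_i = \sum_{j\in\mathcal{N}_i} A_{ij}(\dot q_j - \dot q_i)=0$, so $\omega_i\equiv 0$ as well. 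Since $\{h_i, h_i^\perp\}$ is an orthonormal basis of $\br^2$ and both components $h_i^\top u_i$ and $h_i^{\perp\top} u_i$ vanish, we get $u_i = 0$ for all $i$, i.e. $A q = 0$, i.e. $q\in\ker A$. By the characterization in Theorem~\ref{thm:PConvergence}, $\ker A$ consists exactly of the translations, rotations, and nonnegative scalings of the desired formation, so every trajectory converges to the desired formation (the only such equilibrium that is measure-zero-attracting being the coinciding configuration), which proves the theorem. The delicate bookkeeping is verifying the compactness/invariance needed for LaSalle despite $V$ being only positive semidefinite with the $\ker A$-component of $q$ unconstrained, but this is identical to the single-integrator treatment and does not affect the shape-convergence conclusion.
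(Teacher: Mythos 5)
Your proposal is correct and follows essentially the same route as the paper: the same Lyapunov function $V=-\tfrac{1}{2}q^\top A q$, the same computation $\dot V=-\|H^\top A q\|^2=-\sum_i v_i^2$, and the same LaSalle invariant-set analysis. The only (cosmetic) difference is how the non-formation case is excluded --- you differentiate $v_i\equiv 0$ to get $\omega_i^2=0$ and hence $u_i=0$, while the paper reaches the same contradiction by noting $H^\perp$ has full column rank so $\dot h\neq 0$ while $h$ must be constant.
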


\begin{proof}
By replacing the control \eqref{eq:kinemCtrl} in  \eqref{eq:kinemUnicycle2}, the closed-loop dynamics can be expressed in the vector form by
%
\begin{gather} \label{eq:KinemDynam}
\begin{aligned}
\dot{q} &= H \, H^\top  A \, q \\
\dot{h} &= H^\perp  H^{\perp \top}   A \, q.
\end{aligned}
\end{gather}
%
Since $A$ is symmetric and negative semidefinite, we can consider 
%
\begin{gather} \label{eq:}
V := -\frac{1}{2} q^\top A \, q \geq 0
\end{gather}
%
as a Lyapunov function candidate. Time derivative of $V$ along the trajectory of \eqref{eq:KinemDynam} is
%
\begin{align} \label{eq:LyapAnalysis}	
\dot{V} &= - q^\top A \, \dot{q} \nonumber \\
&= - q^\top A \, H \, H^\top A \, q \nonumber  \\
&= - (H^\top A \, q)^\top (H^\top A \, q) = - \| H^\top A \, q \|^2 ~ \leq \, 0,
\end{align}
%
which implies that the system is stable. 
To show convergence to the desired formation we use the LaSalle's invariance principle and show that $q$ converges to the kernel of $A$. 
Since $\dot{V} = 0$ implies that $H^\top A\, q = 0$, by LaSalle's invariance principle $q$ converges to the largest invariant set in $\{q \in \br^{2n} \, |\, H^\top A\, q \equiv 0\}$. Thus, one of the following cases must hold:
\begin{enumerate}[label=(\roman*)]
	\item $A\, q \equiv 0$
	\item $A\, q \neq 0$, $H^\top A\, q \equiv 0$ 
\end{enumerate}	
Case (i) implies that the desired formation is achieved. In case (ii),  $H^\top A\, q \equiv 0$ implies that there exists constants $c_1,\, c_2,\, \dots,\, c_n \in \br$, with at least one $c_i \neq 0$, such that 
%
\begin{gather} \label{eq:Aq}	
A\, q = \begin{bmatrix}
c_1 h_1^\perp \\
c_2 h_2^\perp \\
\vdots \\
c_n h_n^\perp
\end{bmatrix} \neq 0.
\end{gather}
%
Since $H^\top A\, q \equiv 0$, from \eqref{eq:KinemDynam} we get $\dot{q} \equiv 0$. Thus, $q$ and $A\, q$ are constant, and from \eqref{eq:Aq} we conclude that $h_i^\perp$ (and thus $h_i$) is constant for all nonzero $c_i$. 
From the definition of $H^\perp$ in \eqref{eq:H}, one can see that $H^\perp$ has full column rank. Therefore, it does not have a right null vector, and from \eqref{eq:Aq} we have $H^{\perp \top} A \, q \neq0$. This shows $H^\perp H^{\perp \top} A \, q \neq 0$, and consequently from \eqref{eq:KinemDynam} we get $\dot{h} \neq 0$. This implies that the heading vectors are not fixed and rotating, which is a contradiction and shows that case (ii) cannot happen. 
\end{proof}

\begin{remark} 
From the closed-loop dynamics \eqref{eq:KinemDynam} one can see that when agents are at the desired formation, i.e., $A \, q = 0$, we have $\dot{h} = 0$ and hence the heading directions do not vary.  This implies that the controller drives agents to the desired formation, however their heading at the desired formation is not controlled and can take an arbitrary value. If desired, a supplementary control can be added to regulate heading angles after convergence.
\end{remark}

\begin{remark}
It is worth pointing out that the control \eqref{eq:kinemCtrl} can drive unicycle agents with a cart attached to the desired formation. In this case the position and orientation of the attached carts are not controlled.
The dynamics of a unicycle agent with cart attached is similar to the dynamics of a car, which is studied in the next section.
\end{remark}

\subsection{Robustness to Saturated Input}

In practice, the linear and angular velocities that an agent can execute are often limited to a certain range.  We show that under a saturated input, convergence of agents to the desired formation is not affected. 

\begin{theorem} \label{thm:UnicycleSaturated}
Consider the unicycle model \eqref{eq:kinemUnicycle2} and assume that $v_{\max},\, \omega_{\max} > 0$ are two real positive scalars. If $v_i$ and $\omega_i$ are saturated such that $|v_i| \leq v_{\max}$ , $|\omega_i| \leq \omega_{\max}$, then under the control \eqref{eq:kinemCtrl} the desired formation is achieved globally.  
\end{theorem}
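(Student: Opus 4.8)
The plan is to mirror the Lyapunov/LaSalle argument used in the proof of Theorem~\ref{thm:kinemP}, inserting diagonal scaling matrices to capture the saturation and invoking the switched-system version of LaSalle's principle (Lemma~\ref{lem:SwitchedSys} and Corollary~\ref{cor:SwitchedSysExtension}) exactly as in the proof of Theorem~\ref{thm:SingIntSaturated}. Concretely, I would first model the saturation by two diagonal matrices $S_v,\, S_\omega \in \br^{n\times n}$ whose $i$-th diagonal entries equal $1$ when $|v_i| \le v_{\max}$ (resp. $|\omega_i| \le \omega_{\max}$) and equal $v_{\max}/|v_i|$ (resp. $\omega_{\max}/|\omega_i|$) otherwise, where $v_i = h_i^\top u_i$ and $\omega_i = h_i^{\perp\top} u_i$ are the unsaturated commands from \eqref{eq:kinemCtrl}. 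Writing the aggregate single-integrator control as $u = A\,q$ (cf. \eqref{eq:SingIntClosed}), the saturated closed loop becomes the switched/Filippov system
\begin{gather*}
\dot{q} = H\, S_v\, H^\top A\, q, \qquad \dot{h} = H^\perp S_\omega\, H^{\perp\top} A\, q.
\end{gather*}

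Next I would reuse the candidate $V := -\tfrac{1}{2} q^\top A\, q \ge 0$, which is positive semidefinite because $A$ is symmetric negative semidefinite, and compute
\begin{gather*}
\dot{V} = -q^\top A\,\dot{q} = -(S_v^{1/2} H^\top A\, q)^\top (S_v^{1/2} H^\top A\, q) = -\| S_v^{1/2} H^\top A\, q \|^2 \le 0,
\end{gather*}
using that every diagonal entry of $S_v$ is strictly positive, so $S_v^{1/2}$ is well-defined and invertible. Since this bound holds for every mode of the switched system, Corollary~\ref{cor:SwitchedSysExtension} together with LaSalle's invariance principle guarantees that trajectories converge to the largest invariant set contained in $\{\, q : S_v^{1/2} H^\top A\, q \equiv 0 \,\} = \{\, q : H^\top A\, q \equiv 0 \,\}$.

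The remaining step is the invariant-set analysis, which is identical in spirit to that in the proof of Theorem~\ref{thm:kinemP}: either $A\,q \equiv 0$, in which case the desired formation is achieved, or $A\,q \ne 0$ with $H^\top A\, q \equiv 0$, which forces $A\,q = [\,c_1 h_1^\perp;\, \dots;\, c_n h_n^\perp\,]$ for scalars $c_i$ not all zero. In that case $\dot{q} = H S_v H^\top A\, q \equiv 0$, so $q$ and hence $A\,q$ are constant, which makes $h_i^\perp$ (and thus $h_i$) constant for every index with $c_i \ne 0$. But then $H^{\perp\top} A\, q = [c_1,\dots,c_n]^\top \ne 0$, and since $S_\omega$ has strictly positive diagonal, $S_\omega H^{\perp\top} A\, q$ retains the same nonzero pattern; as $H^\perp$ has full column rank, $\dot{h} = H^\perp S_\omega H^{\perp\top} A\, q$ has a nonzero $i$-th block whenever $c_i \ne 0$, contradicting constancy of those $h_i$. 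Hence only the first case survives, and the desired formation is reached.

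I expect the main subtlety — rather than the algebra, which is routine — to be the care needed in treating the saturated dynamics as a switched/Filippov system so that the nonsmooth LaSalle argument of Corollary~\ref{cor:SwitchedSysExtension} applies; once that framework is fixed, the positive-definite diagonal factors $S_v,\, S_\omega$ change nothing essential relative to the unsaturated proof of Theorem~\ref{thm:kinemP}.
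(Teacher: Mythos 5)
Your proposal is correct and follows essentially the same route as the paper: the same diagonal saturation matrices (the paper calls them $S$ and $E$), the same common Lyapunov function $V = -\tfrac{1}{2}q^\top A\,q$, the same computation $\dot{V} = -\|S^{1/2}H^\top A\,q\|^2 \le 0$, and the same appeal to Lemma~\ref{lem:SwitchedSys}, Corollary~\ref{cor:SwitchedSysExtension}, and LaSalle. If anything, you are slightly more complete than the paper, which jumps from ``trajectories converge to the zero set of $V$'' to the conclusion without explicitly ruling out the invariant set with $A\,q \ne 0$ and $H^\top A\,q \equiv 0$; your explicit case analysis (carried over from Theorem~\ref{thm:kinemP} with the harmless positive diagonal factor $S_\omega$) closes that gap cleanly.
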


\begin{proof}
To model the input saturation we can define the diagonal matrices $S,\, E  \in \br^{n \times n}$ with diagonal elements
%
\begin{gather} \label{eq:S}
(S)_{ii} = \begin{cases}
1 & \text{if} ~ |v_i| \leq v_{\max}  \\
\frac{v_{\max}}{|v_i|} & \text{if} ~ |v_i| > v_{\max}
\end{cases}
\end{gather}
%
and 
%
\begin{gather} \label{eq:E}
(E)_{ii} = \begin{cases}
1 & \text{if} ~ |\omega_i| \leq \omega_{\max}  \\
\frac{\omega_{\max}}{|\omega_i|} & \text{if} ~ |\omega_i| > \omega_{\max}.
\end{cases}
\end{gather}
%
Elements of $S,\, E$ can be considered as functions that saturate any large input to the maximum allowed values $v_{\max}, \, \omega_{max}$ (cf. Fig.~\ref{fig:Saturate} for saturated single-integrator control).
The closed-loop dynamics under the saturated input can be expressed in the vector form via
%
\begin{gather} \label{eq:UnicycleKinemSat}
\begin{aligned}
\dot{q} &= H \, S\, H^\top  A \, q \\
\dot{h} &= H^\perp E\, H^{\perp \top} \,  A \, q.
\end{aligned}
\end{gather}
%
System \eqref{eq:UnicycleKinemSat} should be understood as a family of switched dynamical systems, for which we choose $V := -\frac{1}{2} q^\top A \, q \geq 0$ as a common Lyapunov function candidate. Time derivative of $V$ along the trajectory of \eqref{eq:UnicycleKinemSat} is
%
\begin{align} \label{eq:VdotUnicycle}	
\dot{V} &= - q^\top A \, \dot{q} \nonumber \\
&= - q^\top A \, H \, S\,  H^\top A \, q \nonumber  \\
&= - (S^{\frac{1}{2}} \, H^\top A \, q)^\top (S^{\frac{1}{2}}\, H^\top A \, q) = - \| S^{\frac{1}{2}} \, H^\top A \, q \|^2 ~ \leq \, 0.
\end{align}
%
Thus, $V$ satisfies conditions of Lemma \ref{lem:SwitchedSys} and Corollary \ref{cor:SwitchedSysExtension}, and from LaSalle's invariance principle it follows that all trajectories of \eqref{eq:UnicycleKinemSat} converge to the zero set of $V$, which is the set of all desired formations. 
\end{proof}

\subsection{Robustness to Unmodeled Linear Actuator Dynamics} 

In practice, the linear and angular velocities of a vehicle cannot change instantaneously. The dynamic behavior of these velocities, which is not accounted for in the unicycle model \eqref{eq:kinemUnicycle}, can be modeled by
%
\begin{gather} \label{eq:DynamUnicycle}
\begin{aligned}
\dot{x}_i &= v_i \, \cos{(\theta_i)} \\
\dot{y}_i &= v_i \, \sin{(\theta_i)} \\
\dot{v}_i &= - a \,v_i + b \, s_i \\
\dot{\theta}_i &= \omega_i \\
\dot{\omega}_i &= - c \,\omega_i + d \, r_i 
\end{aligned}
\end{gather}
%
where $s_i,\, r_i \in \mathbb{R}$ are controls to adjust the linear and angular velocities, and $ a,b,c,d \in \mathbb{R}$ are strictly positive scalars, which depend on the vehicle's inertia, motor dynamics, friction, etc., and are in general unknown.   
We show that unmodeled velocity dynamics does not affect the convergence of the unicycle control strategy \eqref{eq:kinemCtrl}. That is, applying the control  
%
\begin{gather} \label{eq:DynamCtrl}
\begin{aligned}
s_i &:= h_i^\top \, u_i \\
r_i &:= h_i^{\perp \top} \, u_i
\end{aligned}
\end{gather}
%
in \eqref{eq:DynamUnicycle} results in the desired formation.

\begin{theorem} \label{thm:DynamUnicycle}
Let $A$ be a symmetric gain matrix designed for single-integrator agents.
Under the control \eqref{eq:DynamCtrl} agents with dynamics \eqref{eq:DynamUnicycle} globally converge to the desired formation.  
\end{theorem}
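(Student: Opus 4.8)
The plan is to reproduce the Lyapunov/LaSalle argument of Theorem~\ref{thm:kinemP}, enlarged so as to carry the two extra actuator states $v$ and $\omega$ along. First I would put the closed loop in aggregate form: with aggregate control $u = A q$ and the controls \eqref{eq:DynamCtrl}, the stacked version of \eqref{eq:DynamUnicycle} reads
\begin{gather*}
\dot q = H\,v, \qquad \dot v = -a\,v + b\,H^\top A\, q, \qquad \dot h = H^\perp \omega, \qquad \dot\omega = -c\,\omega + d\,H^{\perp\top} A\, q,
\end{gather*}
with $H,\,H^\perp$ as in \eqref{eq:H}. The difficulty relative to the kinematic case is that $v$ now tracks the projected control $H^\top A q$ only through a first-order lag, so $-\tfrac12 q^\top A q$ alone no longer has a sign-definite derivative. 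The remedy is the weighted candidate
\begin{gather*}
V := -\tfrac{b}{2}\, q^\top A\, q + \tfrac12\, v^\top v \;\geq\; 0,
\end{gather*}
which is positive semidefinite because $A \preceq 0$ and $b>0$ ($V$ is only an analysis tool and need not be known to the agents); the weight $b$ is picked exactly so that the position--velocity cross term cancels.

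Next I would differentiate: using $A^\top = A$ and the scalar identity $q^\top A H v = v^\top H^\top A q$ one obtains $\dot V = -a\,\|v\|^2 \le 0$, so the heading blocks $H,\,H^\perp$ drop out of $\dot V$ entirely. Invoking LaSalle's invariance principle \cite{Khalil1996} (the headings are unit-norm and $\|A q\|$ stays bounded along trajectories, which keeps $\omega$ bounded as well) then gives convergence to the largest invariant set inside $\{v \equiv 0\}$. On that set $v \equiv 0$ forces $\dot v \equiv 0$ and hence $H^\top A q \equiv 0$, while $\dot q = H v \equiv 0$ makes $q$, and therefore $A q$, constant --- precisely the configuration treated at the end of the proof of Theorem~\ref{thm:kinemP}.

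To conclude I would rule out the spurious equilibria just as there. If $A q \neq 0$, then $H^\top A q \equiv 0$ forces $(A q)_i = c_i\, h_i^\perp$ for scalars $c_i$, with $c_k \neq 0$ for some $k$. Since $(A q)_k$ is a nonzero constant vector, $h_k^\perp$ is constant, so $0 = \tfrac{d}{dt} h_k^\perp = -h_k\,\omega_k$ gives $\omega_k \equiv 0$; substituting into $\dot\omega_k = -c\,\omega_k + d\, r_k$ with $r_k = h_k^{\perp\top}(A q)_k = c_k$ yields $d\,c_k = 0$, i.e. $c_k = 0$, a contradiction. Hence $A q = 0$ on the invariant set, so the desired formation is reached. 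I expect the real work to be this last step --- propagating $v \equiv 0$ through the unmodeled velocity and heading dynamics to pin the invariant set down to $\{A q = 0\}$; note in particular that one cannot simply append a term $\tfrac12\|\omega\|^2$ to $V$, since the resulting contribution $\omega^\top H^{\perp\top} A q$ to $\dot V$ is indefinite, so the $\omega$-channel must be handled through LaSalle rather than through the Lyapunov function itself.
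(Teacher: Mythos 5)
Your proposal is correct and takes essentially the same route as the paper's proof: the same aggregate closed loop, the same weighted Lyapunov function $V = -\tfrac{b}{2}\,q^\top A\,q + \tfrac12\,v^\top v$ with $\dot V = -a\,v^\top v$, and the same LaSalle case split on $A q \equiv 0$ versus $Aq \neq 0$ with $H^\top A q \equiv 0$. The only cosmetic difference is in killing the second case --- you fix a component $k$ with $c_k \neq 0$, deduce $\omega_k \equiv 0$ from constancy of $h_k^\perp$, and contradict it via $\dot\omega_k = d\,c_k \neq 0$, whereas the paper argues $H^{\perp\top} A q \not\equiv 0 \Rightarrow \dot\omega \not\equiv 0 \Rightarrow \omega \not\equiv 0 \Rightarrow \dot h \not\equiv 0$, contradicting constancy of $h$; the two are equivalent.
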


\begin{proof}
Substituting \eqref{eq:DynamCtrl} in \eqref{eq:DynamUnicycle} gives the closed-loop dynamics in the vector form as
%
\begin{gather} \label{eq:DynamDynam}
\begin{aligned}
\dot{q} &= H \, v \\
\dot{v} &= b\, H^\top A \, q - a \, v \\
\dot{h} &= H^\perp \, \omega\\
\dot{\omega} &= d \, H^{\perp \top} A \, q - c\, \omega
\end{aligned}
\end{gather}
%
where $v := [v_1,\, v_2,\, \dots v_n]^\top \in \mathbb{R}^n$ and $\omega := [\omega_1,\, \omega_2,\, \dots \omega_n]^\top \in \mathbb{R}^n$ are aggregate linear and angular velocity vectors, respectively. 
Consider the Lyapunov function candidate
%
\begin{gather} \label{eq:}
V := -\frac{b}{2} q^\top A \, q  + \frac{1}{2} v^\top v ~\geq~ 0.
\end{gather}
%
Time derivative of $V$ along the trajectory of \eqref{eq:DynamDynam} is
%
\begin{align} \label{eq:Vdot}	
\dot{V} &= - b\, q^\top A \, \dot{q}  +  \dot{v}^\top v \nonumber \\
&= - b\, q^\top A \, H \, v + b\, q^\top A \, H \, v - a \, v^\top v \nonumber  \\
&= - a \, v^\top v ~\leq ~ 0.
\end{align}
%
Similar to the proof of Theorem \ref{thm:kinemP}, we use  LaSalle's invariance principle and show that the largest invariant set consists of the desired formations. 
By setting $\dot{V} \equiv 0$ to find the invariant sets, from \eqref{eq:Vdot}	we get $v \equiv 0$, which implies that $\dot{v} \equiv 0$. Consequently, from \eqref{eq:DynamDynam} we should have that $b \, H^\top A \, q \equiv 0$, which implies one of the following two cases:
\begin{enumerate}[label=(\roman*)]
	\item $A\, q \equiv 0$
	\item $A\, q \neq 0$, $H^\top A\, q \equiv 0$.
\end{enumerate}	

Case (i) implies that the desired formation is achieved, where by replacing $A\, q \equiv 0$ in \eqref{eq:DynamDynam} the dynamics reduces to
%
\begin{gather} \label{eq:h}	
\begin{aligned}
\dot{h} &= H^\perp \omega \\
\dot{\omega} &= - c\, \omega.
\end{aligned}
\end{gather}
%
This shows $\omega,\, \dot{h} \rightarrow 0$, and therefore $\omega$ converges to zero and $h$ converges to a constant value. 
Thus, the set $\left\{ [q^\top,\, v^\top,\, g^\top,\, \omega^\top]^\top \in \mathbb{R}^{6n} \,:\, A\, q = 0,\, v = 0 \right\}$,
which consists of the desired formations, is an invariant set.

We now show that case (ii) cannot be an invariant set. Using a similar reasoning to the proof of Theorem \ref{thm:kinemP}, from $v \equiv 0$, $H^\top A\, q \equiv 0$, and dynamics \eqref{eq:DynamDynam} one can conclude that in this case $q$, $A\, q$, and $h$ are all constant and nonidentical to zero. Further,  $H^{\perp \top} A \, q \not\equiv 0$, 
which from \eqref{eq:DynamDynam} implies that $\dot{\omega} \not\equiv 0$ and hence $\omega \not\equiv 0$. This, together with $H^\perp$ having full column rank implies that $\dot{h} \not\equiv 0$, which is a contradiction to $h$ being constant. This shows that case (ii) is not an invariant set, which concludes the proof.
\end{proof}

\begin{remark} \label{rem:IdenticalParams}
In \eqref{eq:DynamUnicycle}, we assumed that $a,\, b,\, c,\, d$ have the same value for all agents. This assumption was made to simplify the notation and does not affect the generality of the results. One can assign a different value to these parameters for each agent and use the same analysis to prove the convergence.  
\end{remark}

\begin{remark} \label{rem:UnmodeledDynam}
In \eqref{eq:DynamUnicycle}, the assumption $a,\, c > 0$ implies that agents are zero-input stable, which often holds in practice. However, for $a,\, c < 0$ the control can be modified using the velocity feedback as
%
\begin{gather} \label{eq:DynamCtrl2}
\begin{aligned}
s_i &:= -k_s \, (v_i - h_i^\top \, u_i) \\
r_i &:= h_i^{\perp \top} \, u_i,
\end{aligned}
\end{gather}
%
where $k_s \in \mathbb{R}$ is a positive control gain. Using similar analysis to the proof of Theorem \ref{thm:DynamUnicycle}, one can show that if $k_s$ is chosen such that $a + k_s \, b > 0$, the agents converge to the desired formation. 
Lastly, with multiplying $s_i, \, r_i$ by the sign of $b,\, d$, respectively, the assumption $b,\, d > 0$ can be relaxed to only knowing the sign of these parameters. 
\end{remark}

\if 0
	By replacing \eqref{eq:DynamCtrl} in \eqref{eq:DynamUnicycle}, the closed-loop dynamics can be expressed in the vector form
	%
	\begin{gather} \label{eq:}
	\begin{aligned}
	\dot{q} &= H \, v \\
	\dot{v} &= b\,k_s \, H^\top A \, q - (a+b\,k_s) \, v \\
	\dot{h} &= H^\perp \, \omega\\
	\dot{\omega} &= d \, H^{\perp \top} A \, q - c\, \omega
	\end{aligned}
	\end{gather}
	%
	where $v := [v_1,\, v_2,\, \dots v_n]^\top \in \mathbb{R}^n$ and $\omega := [\omega_1,\, \omega_2,\, \dots \omega_n]^\top \in \mathbb{R}^n$ are aggregate linear and angular velocity vectors, respectively. 
	Consider the Lyapunov function candidate
	%
	\begin{gather} \label{eq:}
	V := -\frac{b\,k_s}{2} q^\top A \, q  + \frac{1}{2} v^\top v ~\geq~ 0.
	\end{gather}
	%
	Time derivative of $V$ along the trajectory of \eqref{eq:DynamDynam} is
	%
	\begin{align} \label{eq:}	
	\dot{V} &= - (b\,k_s) q^\top A \, \dot{q}  +  \dot{v}^\top v \nonumber \\
	&= - (b\,k_s) q^\top A \, H \, v + (b\,k_s) q^\top A \, H \, v - (a+b\,k_s) v^\top v \nonumber  \\
	&= - (a+b\,k_s) v^\top v ~\leq ~ 0.
	\end{align}
	%
\fi

\section{Formation Control for Agents with Car Dynamics} \label{sec:Car}

Cars are another common platform for which attaining a desired formation is often of interest (e.g., in intelligent transportation systems). In this section, we present a control strategy for agents with both front and rear-wheel drive car model. We then show that the convergence is not affected when the input is saturated, and the control is robust to unmodeled dynamics. 
Similar to previous section, henceforth we assume that a symmetric negative semi-definite control gain matrix $A$ is designed by solving the optimization problem \eqref{eq:OptimCVX}.

\begin{figure} 
\begin{center}	
	\includegraphics[trim = 90mm 70mm 85mm 60mm, clip, width=0.24\textwidth] {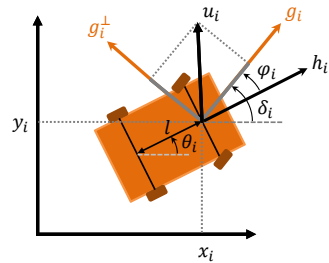}
	\caption{A car at position $(x_i,y_i)$ in the global coordinate frame. The agent's heading is denoted by $h_i$, and makes the angle $\theta_i$ with the global coordinate frame's $x$-axis. The front wheels' steering direction is along the vector $g_i$, which makes the angle $\delta_i$ with the $x$-axis.}
	\label{fig:CarModel}			
\end{center}
\end{figure}

\subsection{Control Strategy for Front-Wheel Drive Car}

Consider an agent with the front-wheel drive car model as illustrated in Fig.~\ref{fig:CarModel}. The motion of this agent can be described by the dynamics
%
\begin{gather} \label{eq:KinemCarFrontOriginal}
\begin{aligned}
\dot{x}_i &= v_i \, \cos(\theta_i + \varphi_i)\\
\dot{y}_i &= v_i \, \sin(\theta_i + \varphi_i)\\
\dot{\theta}_i &= \frac{v_i}{l} \, \sin(\varphi_i)  \\
\dot{\varphi}_i &= \omega_i
\end{aligned}
\end{gather}
%
where $x_i,\, y_i \in \br^2$ are the coordinates of the front axle's center, $v_i \in \br$ is the driving velocity, $\theta_i \in [0,\, 2\pi)$ is the heading angle, $\varphi_i \in [0,\, 2\pi)$ is the steering angle, $\omega_i$ is the steering velocity, and $l \in \br$ is the wheelbase. In this kinematic model, it is assumed that $v_i$ and $\omega_i$ are inputs and can be controlled directly. By defining 
%
\begin{gather} \label{eq:mu}
\delta_i := \theta_i + \varphi_i,
\end{gather}
%
one can alternatively write \eqref{eq:KinemCarFrontOriginal} as \cite{DeLuca1998}
%
\begin{gather} \label{eq:KinemCarFront}
\begin{aligned}
\dot{x}_i &= v_i \, \cos(\delta_i) \\
\dot{y}_i &= v_i \, \sin(\delta_i) \\
\dot{\theta}_i &= \frac{v_i}{l} \, \sin(\delta_i- \theta_i) \\
\dot{\delta}_i &= \frac{v_i}{l} \, \sin(\delta_i- \theta_i) + \omega_i.
\end{aligned}
\end{gather}
%
Note that to simplify the notation, we have assumed that $l$ is identical for all agents. This does not affect the generality of the following results, and one can carry the following analysis with a different $l$ for each agent.

To derive an alternative formulation for \eqref{eq:KinemCarFront} that is more suitable for the control design, we define the steering vector $g_i \in \br^2$ and its perpendicular $g_i^\perp \in \br^2$, and heading vector $h_i \in \br^2$ and its perpendicular $h_i^\perp \in \br^2$ as
%
\begin{gather} \label{eq:hdef2}
\begin{aligned}
g_i &:= \begin{bmatrix}
\cos(\delta_i)\\
\sin(\delta_i)
\end{bmatrix}, \qquad
g_i^\perp := \begin{bmatrix}
-\sin(\delta_i) \\
\cos(\delta_i)
\end{bmatrix}, \\
h_i &:= \begin{bmatrix}
\cos(\theta_i)\\
\sin(\theta_i)
\end{bmatrix}, \qquad
h_i^\perp := \begin{bmatrix}
-\sin(\theta_i) \\
\cos(\theta_i)
\end{bmatrix}.
\end{aligned}
\end{gather}
%
Seeing that $\dot{g}_i = g_i^\perp \dot{\delta}_i$, $\dot{h}_i = h_i^\perp \dot{\theta}_i$, and $\sin(\delta_i - \theta_i) = \sin(\delta_i) \cos(\theta_i) - \cos(\delta_i) \sin(\theta_i) = h_i^{\perp \top} g_i$, we can describe \eqref{eq:KinemCarFront} equivalently by
%
\begin{gather} \label{eq:KinemCarFrontVector}
\begin{aligned}
\dot{q}_i &= g_i \, v_i  \\
\dot{g}_i &= g_i^\perp  \, \left(\frac{1}{l}\, h_i^{\perp \top} g_i \, v_i + \omega_i \right)\\
\dot{h}_i &=  h_i^\perp \left( \frac{1}{l}\, h_i^{\perp \top} g_i \, v_i \right).
\end{aligned}
\end{gather}
%
From \eqref{eq:KinemCarFrontVector}, the dynamics of all agents can be collectively expressed in the vector form
%
\begin{gather} \label{eq:KinemCarFrontMatrix}
\begin{aligned}
\dot{q} &= G\, v \\
\dot{g} &= \frac{1}{l} G^\perp H^{\perp \top} G  \, v + G^\perp \, \omega  \\
\dot{h} &= \frac{1}{l} H^\perp H^{\perp \top} G  \, v.
\end{aligned}
\end{gather}
%
where $q,\, g,\, h \in \br^{2n}$ are aggregate state, steering, and heading vectors, and $v,\, \omega \in \br^n$ are aggregate control vectors. Further, $H,\, H^\perp$ are defined according to \eqref{eq:H}, and $G, \, G^\perp$ are defined by replacing $h_i$'s by $g_i$'s in \eqref{eq:H}.

Using a similar strategy to the unicycle agents in Section \ref{sec:Unicycle}, we define the driving and steering velocity controls as the projections of the holonomic control vector along the steering direction and its perpendicular by
%
\begin{gather} \label{eq:CarKinemControl}
\begin{aligned}
v_i &:= g_i^\top \, u_i  \\
\omega_i &:= g_i^{\perp \top}  \, u_i,
\end{aligned}
\end{gather}
%
where $u_i$ is given in \eqref{eq:HolonomCtrl}. We emphasize that \eqref{eq:CarKinemControl} can be implemented using only the local relative position measurements.

\begin{theorem} \label{thm:CarFrontP}
Let $A$ be a symmetric gain matrix designed for single-integrator agents. Under the control \eqref{eq:CarKinemControl}, agents with front-wheel drive car dynamics globally converges to the desired formation.  
\end{theorem}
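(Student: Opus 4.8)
The plan is to mirror the Lyapunov/LaSalle argument in the proof of Theorem~\ref{thm:kinemP}, since the front-wheel drive model \eqref{eq:KinemCarFrontMatrix} shares the same ``project the holonomic command onto a body vector'' structure, with the steering vector $g_i$ now playing the role the heading $h_i$ played for the unicycle. First I would substitute the control \eqref{eq:CarKinemControl} into \eqref{eq:KinemCarFrontMatrix}. Writing $u = A\,q$ for the aggregate holonomic command gives $v = G^\top A\, q$ and $\omega = G^{\perp\top} A\, q$, so the closed loop reads
\begin{gather*}
\dot{q} = G\,G^\top A\, q, \\
\dot{g} = \tfrac{1}{l}\, G^\perp H^{\perp\top} G\,G^\top A\, q + G^\perp G^{\perp\top} A\, q, \\
\dot{h} = \tfrac{1}{l}\, H^\perp H^{\perp\top} G\,G^\top A\, q .
\end{gather*}

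Next I would take $V := -\tfrac{1}{2}\, q^\top A\, q \ge 0$ (a valid candidate since $A$ is symmetric negative semidefinite, with $V = 0$ exactly on $\mathrm{ker}(A)$). Using $A^\top = A$ and the $\dot{q}$ equation, $\dot{V} = -q^\top A\, G\,G^\top A\, q = -\|G^\top A\, q\|^2 \le 0$, so the closed loop is stable and, by LaSalle's invariance principle, $q$ converges to the largest invariant set contained in $\{\, G^\top A\, q \equiv 0 \,\}$.

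The remaining work, and the only part that needs care, is the case analysis on that invariant set, carried out exactly as in Theorem~\ref{thm:kinemP}. On the set, $\dot{q} = G\,(G^\top A\, q) \equiv 0$, so $q$, hence $A\,q$, and the driving speeds $v = G^\top A\, q \equiv 0$ are all constant. Either (i) $A\,q \equiv 0$, which is the desired formation; or (ii) $A\,q \ne 0$ with $G^\top A\, q \equiv 0$, which forces $(A\,q)_i = c_i\, g_i^\perp$ for scalars $c_i$ not all zero. Since $A\,q$ is constant, each $c_i\, g_i^\perp$ is constant, so $g_i$ is constant for every $i$ with $c_i \ne 0$; but $\omega_i = g_i^{\perp\top}(c_i g_i^\perp) = c_i$, and with $v \equiv 0$ killing the cross term the steering dynamics reduce to $\dot{g}_i = g_i^\perp\, \omega_i = c_i\, g_i^\perp$, which is nonzero precisely for those same agents---a contradiction (equivalently, $G^\perp$ has full column rank, so $\dot{g} = G^\perp\omega \ne 0$ whenever $\omega \ne 0$, contradicting that $g$ is constant on the agents with $c_i \ne 0$). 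Hence (ii) cannot be invariant and only (i) survives, establishing global convergence.

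I expect the main obstacle to be bookkeeping rather than anything conceptual: the front-wheel model carries two coupled body frames and the extra cross-term $\tfrac{1}{l}\, G^\perp H^{\perp\top} G\, v$ in $\dot{g}$, so I must verify that on the LaSalle set $v \equiv 0$ annihilates exactly this term, leaving $\dot{g} = G^\perp \omega$ and decoupling the heading equation entirely. As in the remark following Theorem~\ref{thm:kinemP}, I would also note that the heading $h_i$ is never regulated: once $A\,q \to 0$ the $\dot{h}$ equation vanishes, so headings simply coast to arbitrary constants.
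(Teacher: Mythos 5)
Your proposal is correct and follows essentially the same route as the paper's proof: the same substitution into \eqref{eq:KinemCarFrontMatrix}, the same Lyapunov function $V = -\tfrac{1}{2}q^\top A\,q$ with $\dot{V} = -\|G^\top A\,q\|^2$, and the same LaSalle case analysis in which the residual case $A\,q \neq 0$, $G^\top A\,q \equiv 0$ is ruled out because $v \equiv 0$ kills the cross term and reduces $\dot{g}$ to $G^\perp G^{\perp\top} A\,q$, exactly the unicycle heading dynamics. You merely spell out the contradiction in case (ii) more explicitly than the paper, which delegates it to the proof of Theorem~\ref{thm:kinemP}.
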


\begin{proof}
The proof follows from similar analysis to the proof of Theorem \ref{thm:kinemP}. By substituting \eqref{eq:CarKinemControl} in \eqref{eq:KinemCarFrontMatrix}, the closed-loop dynamics is given in the vector form as
%
\begin{gather} \label{eq:KinemCarFrontClosedLoop}
\begin{aligned}
\dot{q} &= G\, G^\top  A \, q \\
\dot{g} &=  \frac{1}{l} G^\perp H^{\perp \top} G \, G^\top  A \, q + G^\perp \, G^{\perp \top}  A \, q  \\
\dot{h} &= \frac{1}{l} H^\perp H^{\perp \top} G \, G^\top  A \, q.
\end{aligned}
\end{gather}
%
Using $V := -\frac{1}{2} q^\top A \, q \geq 0$ as a Lyapunov function candidate, one can show that the time derivative of $V$ along the trajectory of \eqref{eq:KinemCarFrontClosedLoop} is $\dot{V} = - \| G^\top A \, q \|^2 ~ \leq \, 0$,
which implies the stability of system. Convergence to the desired formation follows from the LaSalle's invariance principle. In particular, in the case that $A \,q \neq 0$ but $G^\top A \, q \equiv 0$, dynamics of $g$ in \eqref{eq:KinemCarFrontClosedLoop} reduces to $\dot{g} =  G^\perp \, G^{\perp \top}  A \, q $, which is the same as dynamics for $h$ in the unicycle model \eqref{eq:KinemDynam}. By the same token, this case cannot be a invariant set, and the only possibility is $A\, q \equiv 0$, which indicates that the desired formation is achieved. 
\end{proof}

\begin{remark} 
Similar to the unicycle agents, the final heading and steering angles of agents with car dynamics at the the desired formation are not controlled and can take arbitrary values.
\end{remark}

\subsection{Control Strategy for Rear-Wheel Drive Car}

The dynamics of a rear-wheel drive car is identical to the front-wheel drive car except that the front wheels' driving velocity $v_i$ is indirectly controlled via the rear wheels' driving velocity $v_i^r$. The relation between the front and rear wheels' driving velocities is given by
%
\begin{gather} \label{eq:RearWheelSpeed}	
v_i = \frac{1}{\cos(\varphi_i)} \, v_i^r.
\end{gather}
%
To set $v_i$ to the desired value defined in \eqref{eq:CarKinemControl}, from \eqref{eq:RearWheelSpeed} we have that the rear wheels' driving velocity should be  
%
\begin{gather} \label{eq:CarRearSpeedControl}
v_i^r := \cos(\varphi_i) \,  g_i^\top u_i.
\end{gather}
%
The main difference between the rear and front-wheel drive car is that when $\varphi_i = \pm \frac{\pi}{2}$, from \eqref{eq:CarRearSpeedControl}  $v_i^r$, and hence $v_i$, become zero. On the contrary, $v_i$ in a front-wheel drive car can take any desired value in this case (one can interpret this as the car pivoting about its rear wheels).

\begin{theorem} \label{thm:CarRearP}
Under the conditions of Theorem \ref{thm:CarFrontP} with driving velocity control \eqref{eq:CarRearSpeedControl}, agents with rear-wheel drive car dynamics almost globally converge to the desired formation.
\end{theorem}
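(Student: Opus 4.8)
The plan is to reduce the rear-wheel car to the front-wheel car of Theorem~\ref{thm:CarFrontP} everywhere except on a measure-zero ``singular'' set, and then to argue that this set makes at most a measure-zero set of initial conditions fail to converge, which is exactly what ``almost global'' means. First I would note that whenever $\cos(\varphi_i)\neq 0$, the rear-wheel command \eqref{eq:CarRearSpeedControl} together with \eqref{eq:RearWheelSpeed} produces the front-wheel driving velocity $v_i = v_i^r/\cos(\varphi_i) = g_i^\top u_i$, i.e.\ exactly the control \eqref{eq:CarKinemControl}. Hence on the open, dense set $\mathcal{D} := \{\cos(\varphi_i)\neq 0 ~\text{for all}~ i\}$ the closed loop coincides \emph{verbatim} with the front-wheel closed loop \eqref{eq:KinemCarFrontClosedLoop}; the two differ only on the singular set $\Sigma := \bigcup_i \{\varphi_i = \pm\tfrac{\pi}{2}\}$, a finite union of codimension-one submanifolds (hence Lebesgue-null), on which car $i$ is forced to $v_i = 0$ so that its position and heading freeze while its steering angle may still evolve via $\omega_i$.

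Next I would run the Lyapunov/LaSalle argument of Theorems~\ref{thm:kinemP} and \ref{thm:CarFrontP}, reading the discontinuous closed loop as a switched/Filippov system just as was done for saturated inputs. Taking $V := -\tfrac{1}{2}\,q^\top A\,q \geq 0$, on $\mathcal{D}$ one obtains $\dot V = -\|G^\top A\,q\|^2 \le 0$ as before, and on $\Sigma$ the frozen agents drop out of $\dot V$, so $\dot V \le 0$ still holds along every trajectory; thus $V$ is nonincreasing and LaSalle applies. Every trajectory therefore converges to the largest invariant set contained in $\{\dot V = 0\}$. On $\mathcal{D}$ the analysis of Theorem~\ref{thm:CarFrontP} (which in turn invokes the unicycle argument of Theorem~\ref{thm:kinemP}: full column rank of $G^\perp$ excludes the case $Aq\neq 0$, $G^\top Aq \equiv 0$ by forcing the steering vectors to keep rotating) identifies this invariant set with the set of desired formations $\{Aq = 0\}$.

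The new ingredient is the part of the invariant set sitting on $\Sigma$. If a trajectory remains on $\Sigma$ with some car $i$ pinned at $\varphi_i = \pm\tfrac{\pi}{2}$, then invariance forces $\omega_i = g_i^{\perp\top}(Aq)_i = \mp h_i^\top (Aq)_i \equiv 0$, so $(Aq)_i$ is parallel to $g_i = \pm h_i^\perp$ and car $i$ is completely frozen ($\dot q_i = \dot\theta_i = \dot h_i = \dot g_i = 0$). When all cars are of this type, $Aq$ is block-parallel to the (constant) steering vectors --- precisely the ``case (ii)'' vector that was \emph{excluded} for unicycles and front-wheel cars because it would force $\dot h \neq 0$, and which survives here only because a frozen rear-wheel car keeps its heading. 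These stuck configurations are genuine invariant sets with $Aq\neq 0$, so global convergence is impossible; it remains to show their basin of attraction is Lebesgue-null. I would argue this by non-attractivity: perturbing $\varphi_i$ off $\pm\tfrac{\pi}{2}$ turns on the generically-nonzero velocity $v_i = g_i^\top (Aq)_i$ and drives the state off the stuck set, so the set of initial conditions asymptotic to such configurations lies in a lower-dimensional, stable-manifold-type subset; equivalently, along almost every trajectory the steering angles pass through $\pm\tfrac{\pi}{2}$ transversally rather than getting pinned. Combining this with the previous paragraph yields convergence to the desired formation from almost every initial condition.

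I expect this last step --- showing the stuck configurations at $\varphi_i = \pm\tfrac{\pi}{2}$ have a measure-zero basin --- to be the main obstacle, because the closed-loop vector field is only piecewise smooth across $\Sigma$, so one cannot directly quote the smooth stable-manifold theorem, and because one must also dispose of mixed invariant sets in which some cars are frozen on $\Sigma$ while the others move on $\mathcal{D}$.
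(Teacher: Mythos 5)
Your proposal follows the paper's route almost step for step: you reduce the rear-wheel loop to the front-wheel loop \eqref{eq:KinemCarFrontClosedLoop} away from the singular set, reuse $V = -\tfrac{1}{2}q^\top A\,q$ on the resulting switched system, and isolate exactly the invariant set the paper isolates in its case (ii) --- all steering angles at $\pm\tfrac{\pi}{2}$ with $G^{\perp\top}A\,q = 0$, so that every car is frozen with $A\,q$ block-parallel to the steering vectors. The paper encodes the singularity with a diagonal matrix $\Gamma$ (zero entries where $\varphi_i = \pm\tfrac{\pi}{2}$) so that $v = \Gamma\,G^\top A\,q$, which is just a notational variant of your set $\Sigma$. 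Your handling of the mixed configurations (some cars frozen, some not) via $\dot g = G^\perp G^{\perp\top}A\,q$ forcing the pinned steering vectors to rotate is also the paper's case (i) argument.

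The divergence is in the last step, and you have correctly identified it as the crux. Where you propose a non-attractivity / stable-manifold argument and concede you cannot complete it for a piecewise-smooth field, the paper instead invokes Picard--Lindel\"of: by uniqueness of solutions, the only trajectories that ever sit on the stuck set are the (equilibrium) trajectories themselves, of which there are $2^n$ sign-pattern families, and these are declared measure zero in the space of all trajectories. This is a forward-reachability argument --- no trajectory starting off the stuck set hits it in finite time --- and it does not, strictly speaking, rule out asymptotic convergence to the stuck equilibria from a positive-measure set of initial conditions, which is precisely the obstruction you flagged. So your admitted gap is genuine, but it is the same gap the paper papers over with a shorter argument; if anything, your write-up locates the weak point more honestly. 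To match the paper you could simply replace your stable-manifold paragraph with the uniqueness-of-solutions count of stuck trajectories, and note (as the paper does in the subsequent remark) that in practice the steering angle is bounded away from $\pm\tfrac{\pi}{2}$, which eliminates case (ii) altogether.
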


\begin{proof}
\phantomsection \label{p:b10} Under the control \eqref{eq:CarRearSpeedControl}, the closed-loop dynamics is similar to \eqref{eq:KinemCarFrontClosedLoop}, except when the steering angles are $\pm \frac{\pi}{2}$, in which case the driving velocity is zero. By defining the diagonal matrix $\Gamma \in \br^{n\times n}$ with diagonal entries 
%
\begin{gather} \label{eq:}
(\Gamma)_{ii} = \begin{cases}
1 & \text{if} ~ \varphi_i \neq \pm \frac{\pi}{2} \\
0 & \text{if} ~ \varphi_i = \pm \frac{\pi}{2} 
\end{cases}
\end{gather}
%
driving velocity can be expressed as ${v = \Gamma\, G^\top A \, q}$, and from \eqref{eq:KinemCarFrontMatrix} the closed-loop dynamics of a rear-wheel drive car is given by
%
\begin{gather} \label{eq:KinemCarRearClosedLoop}
\begin{aligned}
\dot{q} &= G\, \Gamma\, G^\top  A \, q \\
\dot{g} &= \frac{1}{l}  G^\perp  H^{\perp \top} \, G \, \Gamma\, G^\top  A \, q + G^\perp \, G^{\perp \top}  A \, q  \\
\dot{h} &= \frac{1}{l} H^\perp H^{\perp \top} \, G \, \Gamma\, G^\top  A \, q.
\end{aligned}
\end{gather}
%
We use $V := -\frac{1}{2} q^\top A \, q \geq 0$ as a common Lyapunov function candidate for the switched system \eqref{eq:KinemCarRearClosedLoop} to prove stability and convergence in a manner similar to Theorem \ref{thm:CarFrontP}. By direct calculation, derivative of $V$ along the trajectory of \eqref{eq:KinemCarRearClosedLoop} is $ \dot{V} = \| \Gamma\, G^\top A \, q \|^2 ~ \leq \, 0$. 
When the diagonal elements of $\Gamma$ are all ones, i.e., no heading angle is equal to $\pm \frac{\pi}{2}$, the dynamics \eqref{eq:KinemCarRearClosedLoop} is identical to \eqref{eq:KinemCarFrontClosedLoop} and convergence follows from the proof of Theorem \ref{thm:CarFrontP}. Thus, we only need to analyze instances where $\varphi_i = \pm \frac{\pi}{2}$. At such instances, one of the following cases hold
\begin{enumerate}[label=(\roman*)]
	\item $\exists i,  \varphi_i \neq \pm \frac{\pi}{2}$ or $G^{\perp \top} A\, q \neq 0$
	\item $\forall i, \varphi_i = \pm \frac{\pi}{2}$ and $G^{\perp \top} A\, q = 0$.
\end{enumerate}	
If agents are not at the desired formation, i.e., $A\, q \neq 0$, case (i) cannot be an invariant set. This is because $\dot{V} \equiv 0$ implies $\Gamma\, G^\top A \, q \equiv 0$, and hence from \eqref{eq:KinemCarRearClosedLoop} we get $\dot{g} = G^\perp \, G^{\perp \top}  A \, q $, which shows $g$ is varying and the heading angles cannot remain at $\pm \frac{\pi}{2}$. 
On the other hand, case (ii) is an invariant set at which the agents stop moving without reaching the desired formation. From the Picard-Lindelof theorem on the existence and uniqueness of solutions, only one trajectory of system \eqref{eq:KinemCarFrontClosedLoop} passes through the point where all $\varphi_i$'s are $\frac{\pi}{2}$. Thus, the number of trajectories at which  all heading angles are either $\frac{\pi}{2}$ or $-\frac{\pi}{2}$ is $2^n$. In the space of all trajectories, these trajectories are a measure zero set (i.e., they have zero volume). 
This shows almost global convergence of system \eqref{eq:KinemCarRearClosedLoop} to the desired formation. 
\end{proof}

\begin{remark} 
Due to noise, unmodeled dynamics, disturbances, etc., in practice agents cannot stay on a measure zero set of trajectories. Furthermore, as we will show subsequently, the steering angle of a car can be bounded to remain less than $\pm \frac{\pi}{2}$ and avoid the undesired case (ii) in the proof.
Consequently, the ``almost" global convergence of rear-wheel drive car in Theorem \ref{thm:CarRearP} does not affect the applicability of the control strategy. 
\end{remark}

Due to the similarity of the dynamics and analysis for the front and rear-wheel drive car models, we only consider the front-wheel drive car model throughout the rest of this section.

\subsection{Robustness to Saturated Input and Bounded Steering Angle}

The steering angle and the driving and steering velocities of a car often cannot take arbitrary values and must be bounded by practical limits. This, however, does not affect the convergence of the agents to the desired formation.

\begin{theorem} \label{thm:CarPSaturated}
Consider car dynamics \eqref{eq:KinemCarFrontOriginal}, and assume that $v_{\max}, \, \omega_{\max}, \, \varphi_{\max} > 0$ are real positive scalars. If $v_i$ and $\omega_i$ are saturated such that $|v_i| \leq v_{\max}$ , $|\omega_i| \leq \omega_{\max}$, and the steering angle is bounded by $|\varphi_i| \leq \varphi_{\max}$, then under the control \eqref{eq:CarKinemControl} the desired formation is achieved globally. 
\end{theorem}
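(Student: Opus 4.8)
The plan is to mirror the Lyapunov/LaSalle argument of Theorems~\ref{thm:CarFrontP} and~\ref{thm:UnicycleSaturated}, treating the three limits (driving-velocity saturation, steering-velocity saturation, and the steering-angle bound) as structured perturbations that leave the \emph{position} descent property intact. First I would encode the two velocity saturations by diagonal matrices $S,E\in\br^{n\times n}$ with strictly positive entries, defined exactly as in \eqref{eq:S}--\eqref{eq:E} but with $v_i,\omega_i$ replaced by their commanded values $g_i^\top u_i$ and $g_i^{\perp\top}u_i$ from \eqref{eq:CarKinemControl}, and encode the steering-angle bound by a diagonal indicator $\Lambda$ with $(\Lambda)_{ii}=0$ exactly when $\varphi_i=\pm\varphi_{\max}$ and the commanded steering drives $|\varphi_i|$ outward, and $(\Lambda)_{ii}=1$ otherwise. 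From \eqref{eq:KinemCarFrontMatrix} the closed loop then becomes $\dot q=G\,S\,G^\top A\,q$, $\dot h=\tfrac{1}{l}H^\perp H^{\perp\top}G\,S\,G^\top A\,q$, while the steering dynamics pick up the extra factor $\Lambda E$ in the $G^\perp(\cdot)$ term. The key structural observation is that neither $E$ nor $\Lambda$ appears in $\dot q$.

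Second, I would take the common Lyapunov candidate $V:=-\tfrac12 q^\top A\,q\ge 0$ for this switched/Filippov system and compute, exactly as in \eqref{eq:VdotUnicycle},
\begin{gather*}
\dot V=-q^\top A\,\dot q=-\big\|S^{1/2}G^\top A\,q\big\|^2\le 0 .
\end{gather*}
Because this depends only on the saturated driving velocity, the descent is untouched by the steering saturation and the steering-angle bound, and since $S^{1/2}$ has strictly positive diagonal, $\dot V=0$ iff $G^\top A\,q=0$. Invoking Lemma~\ref{lem:SwitchedSys}, Corollary~\ref{cor:SwitchedSysExtension} and LaSalle's principle, every trajectory approaches the largest invariant set inside $\{q:G^\top A\,q=0\}$; there $v=S\,G^\top A\,q=0$, hence $\dot q\equiv 0$ and so $q$, $A\,q$ and every heading $\theta_i$ are constant.

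Third comes the case split. If $A\,q=0$ the formation is achieved. The crux is to exclude the alternative $A\,q\neq 0$ with $G^\top A\,q\equiv 0$. On this set every misplaced agent has $g_i\perp(A q)_i$, hence $g_i^\perp\parallel(A q)_i$ and the commanded steering $g_i^{\perp\top}(A q)_i=\pm\|(A q)_i\|\neq 0$; since $G^\perp$ has full column rank, the same reasoning as in Theorem~\ref{thm:CarFrontP} forces the steering vectors to rotate, which would instantly violate $g_i^\top(A q)_i\equiv 0$ --- a contradiction --- \emph{unless} the rotation is frozen by the bound, i.e.\ $(\Lambda)_{ii}=0$. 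Ruling this frozen configuration out is exactly where the bound is decisive: a spurious equilibrium of this type is the direct analogue of the $\varphi_i=\pm\tfrac{\pi}{2}$ configurations that cost the rear-wheel car its global guarantee in Theorem~\ref{thm:CarRearP}.

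The hard part will therefore be showing that the clamp contributes \emph{no} attracting invariant set away from the formation, so that the conclusion is genuinely global rather than almost-global. My argument would exploit the front-wheel structure: the commanded steering attains its maximal magnitude $\|(A q)_i\|$ precisely in the perpendicular configuration, so a clamped equilibrium requires the simultaneous coincidence $\varphi_i=\pm\varphi_{\max}$ \emph{and} $v_i=g_i^\top(A q)_i=0$ for every misplaced agent, together with $G^\top A\,q=0$. Away from this coincidence one still has $v_i\neq 0$, and with $\varphi_i$ pinned at the bound the driving term gives $\dot\theta_i=\tfrac{v_i}{l}\sin(\varphi_{\max})\neq 0$, so the heading---and hence $\delta_i$ and $g_i$---keep rotating the car toward alignment even while the steering is frozen; this is the realignment mechanism unavailable to the rear-wheel car, whose driving velocity vanishes at the bound. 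I would then establish that the coincidence locus is non-invariant and repelling: by Picard--Lindel\"of a single trajectory passes through each such point, and the driving-induced heading rotation pushes every neighboring trajectory off it, so no trajectory can remain there. Combined with the LaSalle reduction, this forces $A\,q\to 0$ from every initial condition, yielding the global convergence claimed. Verifying rigorously that this coincidence set carries no stable directions---rather than merely being measure zero---is the delicate step on which the global (as opposed to almost-global) statement rests.
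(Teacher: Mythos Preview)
Your approach---encoding the two velocity saturations and the steering-angle bound by diagonal matrices $S,E,\Lambda$ (the paper writes $\Gamma$ for your $\Lambda$, with the slightly looser definition $(\Gamma)_{ii}=0$ iff $|\varphi_i|>\varphi_{\max}$), writing the switched closed loop, taking $V=-\tfrac12 q^\top A q$ as a common Lyapunov function, computing $\dot V=-\|S^{1/2}G^\top A q\|^2\le 0$, and invoking Lemma~\ref{lem:SwitchedSys}, Corollary~\ref{cor:SwitchedSysExtension} and LaSalle---is exactly the paper's argument.

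Where you diverge is in your third step: the paper's proof \emph{stops} after the $\dot V$ computation and simply asserts that LaSalle yields the desired formation, without performing any case split on $\{Aq=0\}$ versus $\{Aq\neq 0,\ G^\top Aq\equiv 0\}$ and without analyzing whether the steering clamp can freeze an agent at a spurious rest point with $g_i\perp (Aq)_i$ and $\varphi_i=\pm\varphi_{\max}$. Your identification of this as the ``hard part'' is accurate---the mechanism you describe (driving-induced heading rotation when the steering is pinned) and the rear-wheel analogy to Theorem~\ref{thm:CarRearP} are sound observations, but none of this appears in the paper's proof. In short: your proposal matches the paper through the Lyapunov computation; your extended invariance analysis and your honest caveat about the global-versus-almost-global distinction go beyond what the paper supplies, so you should not expect to find the resolution of that ``delicate step'' there.
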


\begin{proof}
To model the input saturation, we consider the diagonal matrices $S,\, E \in \br^{n\times n}$ defined in \eqref{eq:S}, \eqref{eq:E}. Further, to model the bounded steering angel we define the diagonal matrix $\Gamma \in \br^{n\times n}$ via
%
\begin{gather} \label{eq:}
(\Gamma)_{ii} = \begin{cases}
1 & \text{if} ~ |\varphi_i| \leq \varphi_{\max}  \\
0 & \text{if} ~ |\varphi_i| > \varphi_{\max}
\end{cases}
\end{gather}
%
The closed-loop dynamics under the saturated input and bounded steering angle can be expressed in  vector form as
%
\begin{gather} \label{eq:CarSaturated}
\begin{aligned}
\dot{q} &= G\, S\, G^\top  A \, q \\
\dot{g} &= \frac{1}{l} G^\perp H^{\perp \top} G \, S\, G^\top  A \, q + G^\perp \, \Gamma \, E\, G^{\perp \top} A \, q  \\
\dot{h} &= \frac{1}{l} H^\perp H^{\perp \top} G \, S\, G^\top  A \, q.
\end{aligned}
\end{gather}
%
The solutions of switched system \eqref{eq:CarSaturated} are well-defined in the Filippov sense. Similar to the proof of Theorem \ref{thm:UnicycleSaturated}, by considering $V := -\frac{1}{2} q^\top A \, q \geq 0$ as a common Lyapunov function candidate, the time derivative of $V$ along the trajectory of \eqref{eq:KinemCarRearClosedLoop} is  $\dot{V} = - \| S^{\frac{1}{2}} \, G^\top A \, q \|^2 ~ \leq \, 0$.
The Lyapunov function $V$ satisfies the conditions of Lemma~\ref{lem:SwitchedSys}, and from Corollary \ref{cor:SwitchedSysExtension} and LaSalle's invariance principle, it follows that the desired formation is achieved.
\end{proof}

\subsection{Robustness to Unmodeled Linear Actuator Dynamics}

Since in practice the driving and steering velocities of a car
cannot change instantaneously, the car dynamics \eqref{eq:KinemCarFrontOriginal} can be modified as
%
\begin{gather} \label{eq:DynamCarFrontOriginal}
\begin{aligned}
\dot{x}_i &= v_i \, \cos(\theta_i + \varphi_i)\\
\dot{y}_i &= v_i \, \sin(\theta_i + \varphi_i)\\
\dot{v}_i &= - a \,v_i + b \, s_i \\
\dot{\theta}_i &= \frac{v_i}{l} \, \sin(\varphi_i)  \\
\dot{\varphi}_i &= \omega_i \\
\dot{\omega}_i &= - c \,\omega_i + d \, r_i
\end{aligned}
\end{gather}
%
to incorporate the dynamics of these velocities.
In \eqref{eq:DynamCarFrontOriginal},  $s_i, \, r_i \in \mathbb{R}$ are control inputs to adjust the driving and steering velocities. Further, we assume that $ a,b,c,d \in \mathbb{R}$ are strictly positive, but unknown. 
The following theorem shows that the unmodeled velocity dynamics does not affect the convergence of the control strategy \eqref{eq:CarKinemControl}.
That is, applying the control
%
\begin{gather} \label{eq:DynamCarCtrl}
\begin{aligned}
s_i &:= g_i^\top \, u_i \\
r_i &:= g_i^{\perp  \top} \, u_i
\end{aligned}
\end{gather}
%
in \eqref{eq:DynamCarFrontOriginal} results in the desired formation.

\begin{theorem} \label{thm:CarFrontDynam} 
Let $A$ be a symmetric gain matrix designed for single-integrator agents.
Under the control \eqref{eq:DynamCarCtrl}, agents with dynamics \eqref{eq:DynamCarFrontOriginal} globally converge to the desired formation.  
\end{theorem}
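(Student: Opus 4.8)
The plan is to mirror the Lyapunov/LaSalle argument used for the dynamic unicycle in Theorem~\ref{thm:DynamUnicycle}, with the steering vector $g$ now playing the role that the heading vector $h$ played there, and the steering-velocity subsystem providing the contradiction that rules out spurious equilibria. First I would substitute the controls \eqref{eq:DynamCarCtrl}, i.e.\ $s = G^\top u$ and $r = G^{\perp\top} u$ with $u = A q$ from \eqref{eq:HolonomCtrl}, into \eqref{eq:DynamCarFrontOriginal} to obtain the aggregate closed-loop system
\begin{align*}
\dot q &= G\, v, \\
\dot v &= b\, G^\top A\, q - a\, v, \\
\dot g &= \tfrac{1}{l} G^\perp H^{\perp\top} G\, v + G^\perp \omega, \\
\dot h &= \tfrac{1}{l} H^\perp H^{\perp\top} G\, v, \\
\dot\omega &= d\, G^{\perp\top} A\, q - c\, \omega,
\end{align*}
where $G,\, G^\perp$ are built from the steering vectors exactly as in \eqref{eq:KinemCarFrontMatrix}. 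The structural facts to carry along are $A = A^\top \preceq 0$, $a,b,c,d > 0$, and that $G$ and $G^\perp$ have full column rank (each $g_i$, $g_i^\perp$ being a unit vector), just as used for $H^\perp$ in Theorem~\ref{thm:kinemP}.

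Next I would take the Lyapunov candidate $V := -\tfrac{b}{2} q^\top A\, q + \tfrac{1}{2} v^\top v \ge 0$, which is positive semidefinite because $b>0$ and $A$ is negative semidefinite, and vanishes exactly when $q \in \ker(A)$ and $v = 0$. Differentiating along the closed loop and using $A = A^\top$, the cross terms telescope exactly as in \eqref{eq:Vdot}: $\dot V = -b\, q^\top A\, \dot q + v^\top \dot v = -b\, q^\top A G v + v^\top(b G^\top A q - a v) = -a\, v^\top v \le 0$. By LaSalle's invariance principle \cite{Khalil1996} every trajectory converges to the largest invariant set contained in $\{\dot V = 0\} = \{v \equiv 0\}$. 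On that set $v \equiv 0$ forces $\dot v \equiv 0$, hence $b\, G^\top A q \equiv 0$, i.e.\ $G^\top A q \equiv 0$, and $\dot q = G v \equiv 0$, so $q$ and $A q$ are constant.

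The remaining and main work is to rule out the degenerate case $A q \neq 0$ with $G^\top A q \equiv 0$, the same obstacle encountered in Theorems~\ref{thm:kinemP} and~\ref{thm:CarFrontP}. Since $G^\top A q = 0$ and $G,\, G^\perp$ together span, each $2$-block satisfies $(A q)_i = c_i\, g_i^\perp$ for scalars $c_i$, at least one of which is nonzero because $A q \neq 0$. For any such $i$, constancy of $A q$ forces $g_i^\perp$ (hence $g_i$) to be constant; but with $v \equiv 0$ we have $\dot g_i = g_i^\perp \omega_i$, so $\omega_i \equiv 0$, hence $\dot\omega_i \equiv 0$, whereas the $\omega$-equation gives $\dot\omega_i = d\, g_i^{\perp\top}(A q)_i - c\, \omega_i = d\, c_i \neq 0$ --- a contradiction. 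Therefore the only invariant set inside $\{v \equiv 0\}$ satisfies $A q \equiv 0$; there $\dot v = -a v$, $\dot\omega = -c\omega$, and the position dynamics freeze, so the set $\{(q,v,g,h,\omega) : A q = 0,\ v = 0\}$ is invariant and is precisely the set of desired formations, giving global convergence.

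I expect the case with $G^\top A q \equiv 0$, $A q \neq 0$ to be the crux; the rest is a routine adaptation of Theorem~\ref{thm:DynamUnicycle}, and I would close with a remark (paralleling Remarks~\ref{rem:IdenticalParams} and~\ref{rem:UnmodeledDynam}) that distinct per-agent parameters and sign-only knowledge of $b_i,\, d_i$ are handled by the identical computation after the obvious rescalings.
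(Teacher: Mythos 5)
Your proposal is correct and follows essentially the same route as the paper's proof: the same closed-loop aggregate system, the same Lyapunov candidate $V = -\tfrac{b}{2}q^\top A q + \tfrac{1}{2}v^\top v$ with $\dot V = -a\,v^\top v$, the same LaSalle case split, and the same contradiction via the $\omega$-subsystem to exclude the set where $G^\top A q \equiv 0$ but $A q \neq 0$. Your blockwise handling of case (ii) (deducing $\omega_i \equiv 0$ from constancy of $g_i$ and then contradicting $\dot\omega_i = d\,c_i \neq 0$) is just the paper's contradiction run in the opposite direction, and if anything is stated a bit more carefully.
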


\begin{proof}     
By substituting \eqref{eq:DynamCarCtrl} in \eqref{eq:DynamCarFrontOriginal}, the closed-loop dynamics is given in the vector from by 
%
\begin{gather} \label{eq:DynamCarFront}
\begin{aligned}
\dot{q} &= G\, v \\
\dot{v} &=  b\,  G^\top A\, q - a \, v   \\
\dot{g} &= \frac{1}{l} G^\perp H^{\perp \top} G  \, v + G^\perp \, \omega  \\
\dot{h} &= \frac{1}{l} H^\perp H^{\perp \top} G  \, v \\
\dot{\omega} &= d\, G^{\perp \top} A\, q -c \, \omega \\
\end{aligned}
\end{gather}
%
where $v, \, \omega$ are the aggregate driving and steering velocity vectors.
Similar to the proof of Theorem \ref{thm:DynamUnicycle}, we consider $V := -\frac{b}{2} q^\top A \, q  + \frac{1}{2} v^\top v ~ \geq 0$ as a Lyapunov function candidate. After simplifications, the time derivative of $V$ along the trajectory of \eqref{eq:DynamCarFront} is given by $- a \, v^\top v \leq 0$. 
To show convergence using LaSalle's invariance principle, we set $\dot{V} \equiv 0$ to find the largest invariant set. This implies that $v \equiv 0$, and therefore $\dot{v} \equiv 0$. 
Consequently, from \eqref{eq:DynamCarFront} we should have
that $b \, G^\top A \, q \equiv 0$, which implies one of the following two cases:
\begin{enumerate}[label=(\roman*)]
	\item $A\, q \equiv 0$
	\item $A\, q \neq 0$, $G^\top A\, q \equiv 0$.
\end{enumerate}	

Case (i) implies that the desired formation is achieved, where by replacing $v \equiv 0$ and $A\, q \equiv 0$ in \eqref{eq:DynamCarFront} the dynamics reduces to
%
\begin{gather} \label{eq:h}	
\begin{aligned}
\dot{g} &= G^\perp \omega \\
\dot{\omega} &= - c\, \omega.
\end{aligned}
\end{gather}
%
This shows $\omega,\, \dot{g} \rightarrow 0$, and therefore $\omega$ converges to zero and $g$ converges to a constant value. 
Thus, the set $\left\{ [q^\top,\, v^\top,\, g^\top,\, \omega^\top]^\top \in \mathbb{R}^{6n} \,:\, A\, q = 0,\, v = 0 \right\}$,
which consists of the desired formations, is an invariant set.

We now show that case (ii) cannot be an invariant set. Using a similar reasoning to the proof of Theorem \ref{thm:DynamUnicycle}, from $v \equiv 0$, $G^\top A\, q \equiv 0$, and dynamics \eqref{eq:DynamCarFront} one can conclude that in this case $q$, $A\, q$, and $g$ are all constant and nonidentical to zero. Further,  $G^{\perp \top} A \, q \not\equiv 0$, 
which from \eqref{eq:DynamCarFront} implies that $\dot{\omega} \not\equiv 0$ and hence $\omega \not\equiv 0$. This, together with $G^\perp$ having full column rank implies that $\dot{g} \not\equiv 0$, which is a contradiction to $g$ being constant. This shows that case (ii) is not an invariant set, which concludes the proof.
\end{proof}

\begin{remark} 
On a similar note to Remarks \ref{rem:IdenticalParams} and \ref{rem:UnmodeledDynam}, in \eqref{eq:DynamCarFrontOriginal} parameters $a,\, b,\, c,\, d$ can take different values for each agent.  Further, if $k_s \in \br$ is chosen such that $a + k_s \, b >  0$, the modified control
%
\begin{gather} \label{eq:DynamCarCtrl2}
\begin{aligned}
s_i &:= -k_s (v_i - g_i^\top \, u_i) \\
r_i &:= g_i^{\perp  \top} \, u_i
\end{aligned}
\end{gather}
%
can bring agents with $a, \, c <0$ to the desired formation.
\end{remark}

\section{Extensions and Variations} \label{sec:Extensions}

In this section, we briefly address additional topics such as collision avoidance, stability under a time-varying sensing topology, and formation scale adjustment, which are important in a practical implementation.

\subsection{Collision Avoidance} \label{sec:ColAvoid}

As discussed in Section \ref{sec:Robustness}, by using the control gains computed from Algorithm~\ref{alg:GainDesign}, positive scaling and rotation of the the control vectors up to $\pm 90^\circ$ does not affect convergence of agents to the desired formation. 
This observation can be used to implement a distributed collision avoidance strategy. 
Fig.~\ref{fig:ColAvoidance} illustrates a scenario where the desired control direction of agent $i$ can potentially cause collision with an adjacent agent. 
Tangent lines from agent $i$ to circles of radius $r \in \br$ centered at the adjacent agents define collision avoidance cones, where by rotating the control vector to a direction outside of these cones the collision is prevented. 
To preserve the stability properties, the rotation is limited to $\pm 90^\circ$ of the original control direction, e.g., in Fig.~\ref{fig:ColAvoidance} the control $u_i$ cannot be rotated below the solid black line. In a case where there is no possible direction of motion within the allowed rotation range the control is set to zero, and the agent stops until a feasible control direction is available. 
To alter the control direction as little as possible, one can define a distance threshold $d_c \in \br$ such that the collision avoidance strategy is triggered only when the distance to an adjacent agent is less than this threshold. 
The above collision avoidance strategy is outlined in Algorithm~\ref{alg:ColAvoidance}. Note that this strategy is distributed and does not required inter-agent communication.

If the initial inter-agent distances are greater than $r$, it is straightforward to show that collision avoidance is guaranteed for \textit{single-integrator} agents under Algorithm~\ref{alg:ColAvoidance} 
(this follows by showing that inter-agent distances cannot become less than $r$). 
\phantomsection \label{p:b4} For agents with \textit{higher-order} dynamics, $r$ should be chosen large enough to accommodate for the maximum braking distance.
We should point out that convergence to the desired formation under the proposed strategy is heuristic and not always guaranteed. In particular, one can construct counter examples where agents are caught in a gridlock due to unavailability of feasible control direction.
However, in our simulation and experimental studies we observed that if agents are initially spaced far apart, they can resolve gridlocks and converge to the desired formation. 
We are not aware of any existing collision avoidance strategy that is distributed, does not require communication, and can guarantee convergence of agents to the desired formation. Commonly used distributed strategies such as safety barrier functions \cite{Wang2017} and traffic circles \cite{Jin2015} have similar gridlock situations.
We point out that in scenarios where inter-agent communication is possible, distributed task assignment techniques can be leveraged to resolve gridlocks (see our recent work \cite{Lusk2020}).

\phantomsection \label{p:b3} 
On the other hand, under the proposed strategy, stability of the overall system is guaranteed by Theorem \ref{thm:Robustness}. 
This distinguishes the proposed strategy from an ad hoc augmentation of the control to avoid collision, e.g., via  potential functions. Such augmentations may lead to undesired behavior or instability of the overall system. For example, they may cause the robots to drift along a direction or circle in a limit cycle indefinitely. Such behaviors are not present in the proposed approach, and \textit{if} the robots do not go to a \textit{gridlock}, convergence to the desired shape is guaranteed.

\begin{figure} 
	\begin{center}	
		\includegraphics[trim = 84mm 70mm 95mm 60mm, clip, width=0.22\textwidth] {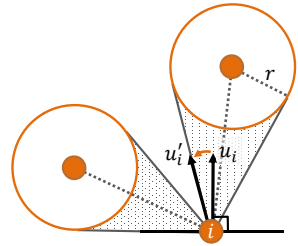}
		\caption{Control vector of agent $i$ rotated outside of the collision cones.}	
		\label{fig:ColAvoidance}			
	\end{center}
\end{figure}

\begin{algorithm}[t] 
%
\DontPrintSemicolon
\SetKwData{Left}{left}\SetKwData{This}{this}\SetKwData{Up}{up}
\SetKwFunction{Union}{Union}\SetKwFunction{FindCompress}{FindCompress}
\SetKwInOut{Input}{input}\SetKwInOut{Output}{output}

\SetKwInput{StepA}{step 1}
\SetKwInput{StepB}{step 2}
\SetKwInput{StepC}{step 3}
\SetKwInput{StepD}{step 4}
\SetKwInput{Notation}{notation}

\caption{Distributed collision avoidance.}

\Input{Desired control direction $u_i \in \br^2$\\
Collision circle radius $r  \in \br$ \\
Activation threshold $d_c  \in \br$}
\Output{Modified control direction $u'_i  \in \br^2$}

\BlankLine

\StepA{Construct collision cones with circles of radius $r$ centered at agents closer than $d_c$.} 
\StepB{Find rotation $R(\theta) \in \mathrm{SO}(2)$ with minimum $|\theta|$ such that $R \, u_i$ is outside of collision cones.}
\StepC{If step~2 is infeasible or $|\theta| \geq 90^\circ$ set $u'_i = 0$, otherwise set $u'_i = R(\theta) \, u_i$.}
%
\label{alg:ColAvoidance}
\end{algorithm}

\subsection{Time-Varying Sensing Topology}

In a time-varying or switching sensing topology, the agents can lose or acquire sensing capability of other agents in the group. For example, if a vision sensor is used to provide position measurements, sensing capability is lost when a neighbor agent is obstructed by another agent and acquired when an agent moves in the line of sight.
The following theorem shows that by using the proposed gain design approach a switching sensing topology does not affect the convergence of single-integrator agents to the desired formation.

\begin{theorem} \label{thm:Switching} 
Let $\mathcal{G} := \{\mathcal{G}^1, \mathcal{G}^2, \dots, \mathcal{G}^m \}$ denote a finite set of undirected and universally rigid sensing topologies, with associated gain matrices $A^1,\,  A^2,\, \dots,\, A^m \in \br^{2n \times 2n}$ computed from Algorithm~\ref{alg:GainDesign}. 
If single-integrator agents use the associated gains for each topology,
under control \eqref{eq:HolonomCtrl} the agents globally converge to the desired formation while the sensing graph can switch in $\mathcal{G}$ arbitrarily. 
\end{theorem}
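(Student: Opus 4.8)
The plan is to build a single common Lyapunov function for the switched closed-loop system and then invoke the switched-system LaSalle argument of Lemma~\ref{lem:SwitchedSys} and Corollary~\ref{cor:SwitchedSysExtension}. Under \eqref{eq:HolonomCtrl} with the gains of the active topology, the aggregate closed loop is the switched linear system $\dot q = A^{\sigma(t)} q$, with $\sigma:[0,\infty)\to\{1,\dots,m\}$ an arbitrary switching signal; its solutions are well defined (piecewise-$C^1$, or in the Filippov sense) exactly as in the proof of Theorem~\ref{thm:SingIntSaturated}.

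First I would establish the structural fact that all the gain matrices share the \emph{same} kernel. Each $A^k$ is produced by Algorithm~\ref{alg:GainDesign} for the \emph{fixed} desired formation $q^*$, hence it is symmetric, negative semidefinite, and annihilates the four columns of $N := [q^*,\, \bar q^*,\, \mathbf 1,\, \bar{\mathbf 1}]$ from \eqref{eq:ones} and \eqref{eq:Q}. Because condition (ii) of Theorem~\ref{thm:PConvergence} forces exactly four zero eigenvalues, $\ker A^k = \mathrm{range}(N) =: \mathcal N$ for \emph{every} $k$, and the key point is that $\mathcal N$ depends only on $q^*$, not on the topology $\mathcal G^k$. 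By symmetry, $\mathrm{range}(A^k) = \mathcal N^\perp = \mathrm{range}(Q)$ for all $k$, with $Q$ as in \eqref{eq:Q}.

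Next I would propose the common Lyapunov function $V(q) := \tfrac12\|Q^\top q\|^2 = \tfrac12\, q^\top Q Q^\top q \ge 0$, whose zero set is $Z = \mathcal N$. Since $Q Q^\top$ is the orthogonal projector onto $\mathrm{range}(A^k)$, we have $Q Q^\top A^k = A^k$ for every $k$, so along every mode
\[
\dot V = q^\top Q Q^\top A^{\sigma} q = q^\top A^{\sigma} q \le 0,
\]
with equality iff $q \in \ker A^{\sigma} = \mathcal N = Z$; in particular $\dot V < 0$ for all $q \notin Z$ and for \emph{all} modes. Trajectories stay bounded because, decomposing $q = q_\parallel + q_\perp$ along $\mathcal N \oplus \mathcal N^\perp$, one has $\dot q_\parallel = 0$ (the range of $A^{\sigma}$ lies in $\mathcal N^\perp$) while $V$ controls $q_\perp$. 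Since $V$ is $C^1$, positive semidefinite, and $V(q)\to\infty$ as $\mathrm{dist}(q,Z)\to\infty$, Corollary~\ref{cor:SwitchedSysExtension} yields that every trajectory converges to $Z = \mathrm{span}\{q^*,\,\bar q^*,\,\mathbf 1,\,\bar{\mathbf 1}\}$, uniformly in $\sigma$; combined with $q_\parallel$ being constant, $q(t)$ in fact approaches a fixed point of $\mathcal N$, i.e.\ a rotation, translation, and scaling of $q^*$ — the desired formation (the coincident configuration corresponding to the zero scale being, as in Theorem~\ref{thm:PConvergence}, a measure-zero exception).

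The hard part is the first step: verifying that the gain matrices of the different topologies all share one and the same four-dimensional kernel $\mathcal N$. Once that is in place, a quadratic common Lyapunov function is available and the remainder is routine. Had the kernels differed (as they would if the topologies encoded different target shapes), no such common Lyapunov function would exist and arbitrary switching could fail to converge.
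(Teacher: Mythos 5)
Your proposal is correct and follows the same overall strategy as the paper---a common Lyapunov function for the switched system $\dot q = A^{\sigma(t)}q$, closed with Lemma~\ref{lem:SwitchedSys} and Corollary~\ref{cor:SwitchedSysExtension}---but your choice of Lyapunov function is genuinely different and, in fact, tighter. The paper takes $V = q^\top q$, whose zero set is $\{0\}$, and obtains only $\dot V = q^\top A^{\sigma} q \le 0$; since $\dot V$ vanishes on all of $\ker A^{\sigma}$ and not just at the origin, the strict-decrease hypothesis of Corollary~\ref{cor:SwitchedSysExtension} is not literally met there, and one must implicitly fall back on a LaSalle-type argument for arbitrarily switched systems (which the paper does not spell out). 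Your $V = \tfrac12\|Q^\top q\|^2$ repairs this: because every $A^k$ produced by Algorithm~\ref{alg:GainDesign} for the \emph{same} $q^*$ satisfies $A^k N = 0$ and (under the standing universal-rigidity assumption) has exactly four zero eigenvalues, all kernels coincide with $\mathrm{range}(N)$, so the zero set of your $V$ equals the common kernel and $\dot V = q^\top A^{\sigma} q < 0$ strictly for every mode off that set---exactly the hypothesis the corollary needs. You also make explicit two points the paper leaves implicit: that the shared kernel is the crux (without it no common quadratic Lyapunov function of this form exists and arbitrary switching could fail), and that the $\mathcal N$-component of $q$ is conserved, so the trajectory converges to a single point of $\mathcal N$ rather than merely to the subspace. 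In short, your argument buys a rigorous application of the cited corollary at the cost of the small extra step of identifying $\mathrm{range}(A^k) = \mathcal N^\perp$; the paper's is shorter but glosses over the gap between $\dot V \le 0$ and the strict inequality its own lemma requires.
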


\begin{proof}     
The closed-loop dynamics under the proposed control strategy is given by $\dot{q} = A^i \, q$, where $i \in {1,\, 2,\, \dots, m}$ denote the index of the sensing topology. By considering  $V := q^\top \, q \, \geq 0$ as a common Lyapunov function candidate for the this family of switched systems, we derive $\dot{V} = q^\top \, A^i \, q$. Since $A^i$ is negative semidefinite by design for every $i$, it follows that $\dot{V} \leq 0$. Hence, from Lemma~\ref{lem:SwitchedSys} and Corollary~\ref{cor:SwitchedSysExtension} we have that the desired formation is achieved under an arbitrary switching among topologies. 
\end{proof}

Theorem~\ref{thm:Switching} ensures convergence under an arbitrary switching of sensing topologies provided that stabilizing gain matrices are computed for each topology.  
To ensure that the formation control strategy is applicable in a switching scenario \textit{without}  inter-agent communication, additional constraints can be enforced to obtain gains that jointly stabilize all sensing topologies. 
To elaborate this point, consider the example of four sensing topologies illustrated in Fig.~\ref{fig:Graphs}. In topologies numbered as (1) and (3), agent 1 has the same set of neighbors, namely agents 2 and 4. Since agent 1 is not aware of the overall sensing topology, from its point of view topologies (1) and (3) are indistinguishable. Consequently, the control gains for agent 1 in matrices $A^1$ and $A^3$ should be identical to ensure they jointly stabilize both topologies. 

To find gain matrices that jointly stabilize switching sensing topologies, the optimization problem \eqref{eq:OptimCVX} can be modified as follows. 
We define the block diagonal matrix $\Lambda \in \br^{2nm \times 2nm}$ as ${\Lambda := \mathrm{diag}(\bar{A}^1,\, \bar{A}^2,\, \dots, \, \bar{A}^m)}$, where $\bar{A}^k$ is defined according to \eqref{eq:Lbar}. The gain matrices that jointly stabilize the topologies are found by solving
%
\begin{gather} \label{eq:OptimCVXJoint}
\begin{array}{lll}
\underset{a^k_{ij},\, b^k_{ij}}{\mathrm{maximize}} & \lambda_{1} (-\Lambda)  & \\
\text{subject to} &  A^k \, N = 0 & \forall_{k\in \bn_m} \\
& \mathrm{trace}(\Lambda) = \text{constant} & \\
& \mathcal{A}(\Lambda) = 0 &
\end{array}
\end{gather}
%
Here, $a^k_{ij},\, b^k_{ij}$ are entries of $A^k$, the first constraint ensures that $N$ is the kernel of all gain matrices, and the second constraint ensures that the problem is bounded.  The expression $\mathcal{A}(\Lambda) = 0$ encapsulates the constraints that enforce the block diagonal structure of $\Lambda$ and ensure agents with identical set of neighbors in two (or more) topologies have the same set of gains.

In a manner similar to problem~\eqref{eq:OptimCVX}, the objective of \eqref{eq:OptimCVXJoint} aims to minimize the largest eigenvalue of all gain matrices (note that eigenvalues of a block diagonal matrix consist of the eigenvalues of each diagonal block). 
While universal rigidity of the sensing graph is necessary and sufficient to ensure Algorithm~\ref{alg:GainDesign} results in a stabilizing gain matrix,  
to ensure a group of gain matrices are jointly stabilizing additional sensing is often required. A sufficient conditions under which joint stabilizability is guaranteed is provided in our prior work \cite[see Thm. 4]{Fathian2017}, which depends on the number and topology of the sensing graphs.

While the focus of this work is formation control without inter-agent communication, we point out that in scenarios where communication is possible, other techniques can be leveraged to handle switching topologies. 
For example, the gains can be found online for each topology via solving \eqref{eq:OptimCVX} using a distributed ADMM techniques \cite{Boyd2011}, which requires inter-agent communication to converge. 
Ultimately, the appropriate method for handling a switching scenario depends on the hardware and communication constraints.

\phantomsection \label{p:b1} 
Lastly, we emphasize that the result of Theorem~\ref{thm:Switching} are based on the single-integrator dynamics. 
Due to the convergence properties of control for unicycle and car dynamics in a fixed topology, under suitable assumptions that switching is slow enough (i.e., large dwell time), convergence of unicycles and cars to the desired formation in the switching case can be expected.
Deriving a lower bound for the dwell time will be a topic of future work.

\subsection{Scale Adjustment} \label{sec:Scale}

To fix the scale of the final formation, control law  \eqref{eq:HolonomCtrl} can be augmented by a bounded smooth map $f: \mathbb{R} \rightarrow \mathbb{R}$ as
%
\begin{equation} \label{eq:HolonomCtrlAugment}
u_i = \sum_{j \in \mathcal{N}_i} A_{ij} \, (q_j-q_i)  + f(d_{ij} - d_{ij}^*)\, (q_j-q_i),
\end{equation}
%
where $d_{ij} := \|q_j - q_i\|$ denote the distance between agent $i$ and $j$,  $d_{ij}^* \in \mathbb{R}$ is its desired value, and  $f$ is chosen such that  $x \, f(x) > 0$ for ${x \neq 0}$, and $f(0) = 0$. Possible choices for $f$ are $f: x \mapsto \frac{1}{k}\arctan(x)$ or $f: x \mapsto \frac{1}{k}\tanh(x)$, where $k > 0$ is an arbitrary constant. The role of $f$ in \eqref{eq:HolonomCtrlAugment} is to pull agents toward their neighbors when the distance between them is larger than the desired value, and vice versa. 
For agents with single-integrator dynamics, we have shown that agents almost \textit{globally} converge to the desired formation \cite{Fathian2019}.
The study of global asymptotic stability for agents with higher order dynamics is a topic of future research.

\subsection{3D Formations} \label{sec:3dform}

\phantomsection \label{p:b5} 
The proposed control approach, together with the convergence and robustness properties, can be extended to 3D formations. This extension has been done in our recent work \cite{Fathian2019a}, where experimental validations on a fleet of Crazyflie quadrotors are performed to demonstrate the strategy.

\section{Simulations} \label{sec:Simulations}

To validate the proposed approach, we present several simulations for planar formation of quadrotors, unicycles, and cars. 
Links to simulation code and videos are provided in the Supplementary Material section.

\begin{figure}
	\begin{center}
		\includegraphics[trim =25mm 75mm 23mm 87mm, clip, width=0.45\textwidth]{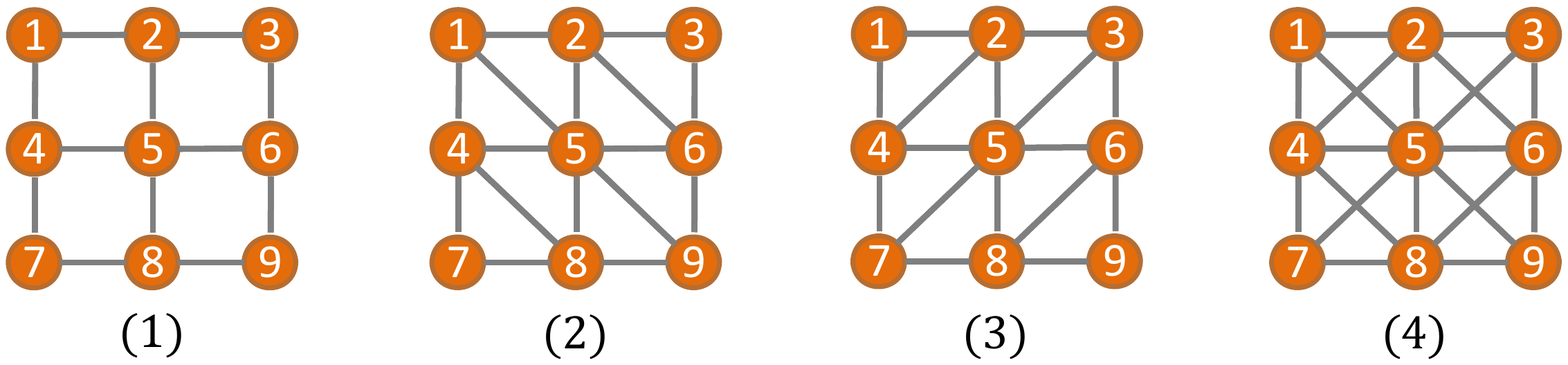}	
		\includegraphics[trim =5mm 0mm 5mm 2mm, clip, width=0.42\textwidth]{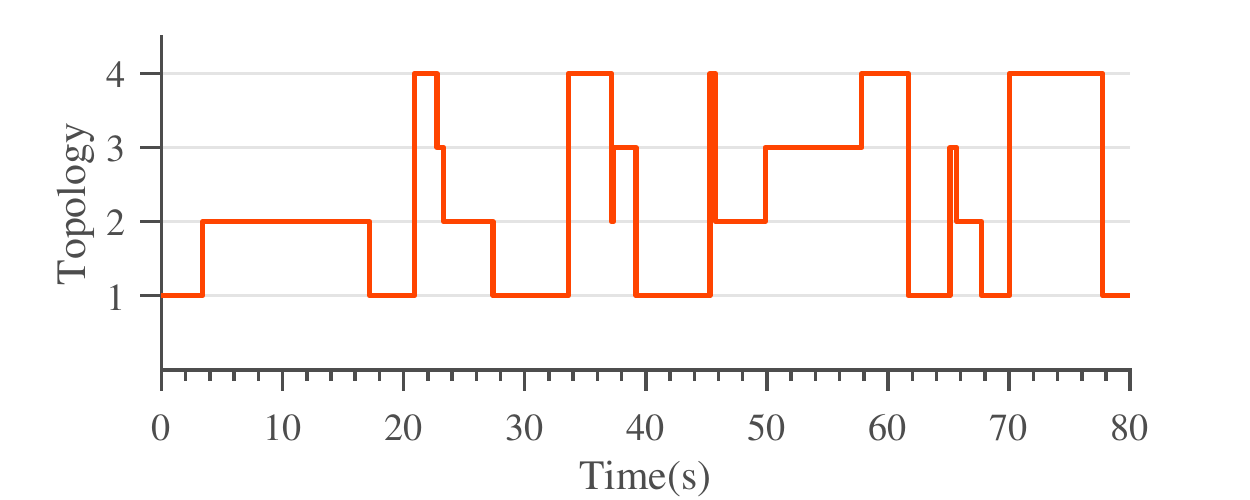}
		\caption{Top: Four sensing topologies. Bottom: Switching among the topologies vs. time.}
		\label{fig:Graphs}
	\end{center}
\end{figure}

\begin{figure*}[]
	\centering
	\begin{subfigure}[b]{.2\textwidth}
		\includegraphics[trim = 1mm 1mm 1mm 1mm, clip, width=0.97\textwidth] {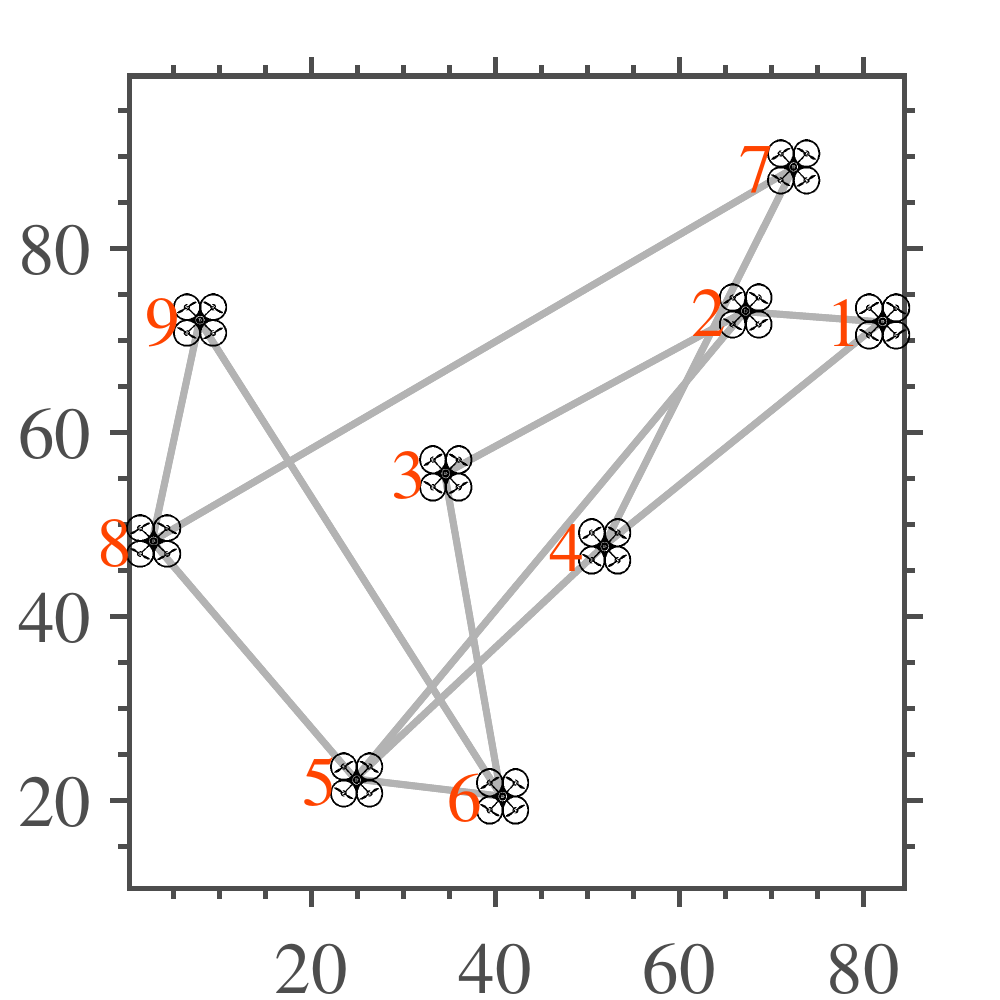}
		\caption{}
	\end{subfigure}%
	\begin{subfigure}[b]{.2\textwidth}
		\includegraphics[trim = 1mm 1mm 1mm 1mm, clip, width=0.97\textwidth] {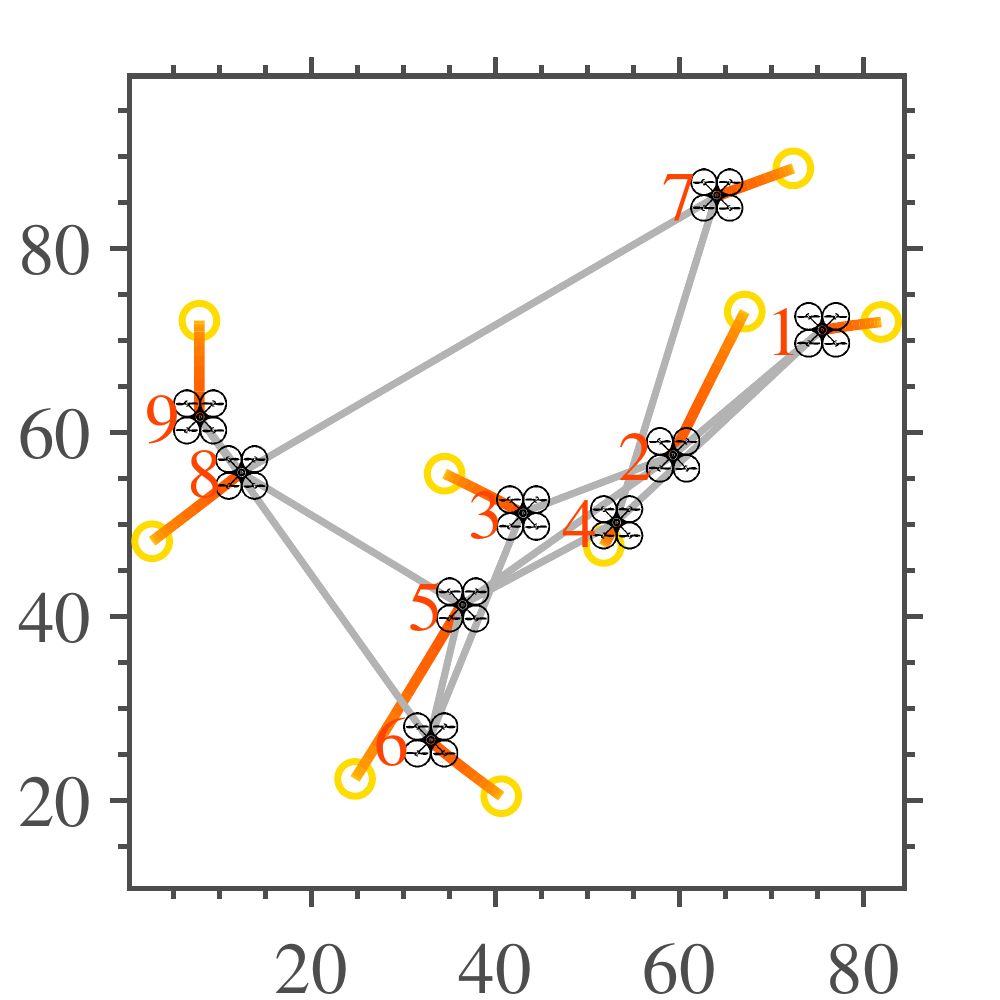}
		\caption{}
	\end{subfigure}%
	\begin{subfigure}[b]{.2\textwidth}
		\includegraphics[trim = 1mm 1mm 1mm 1mm, clip, width=0.97\textwidth] {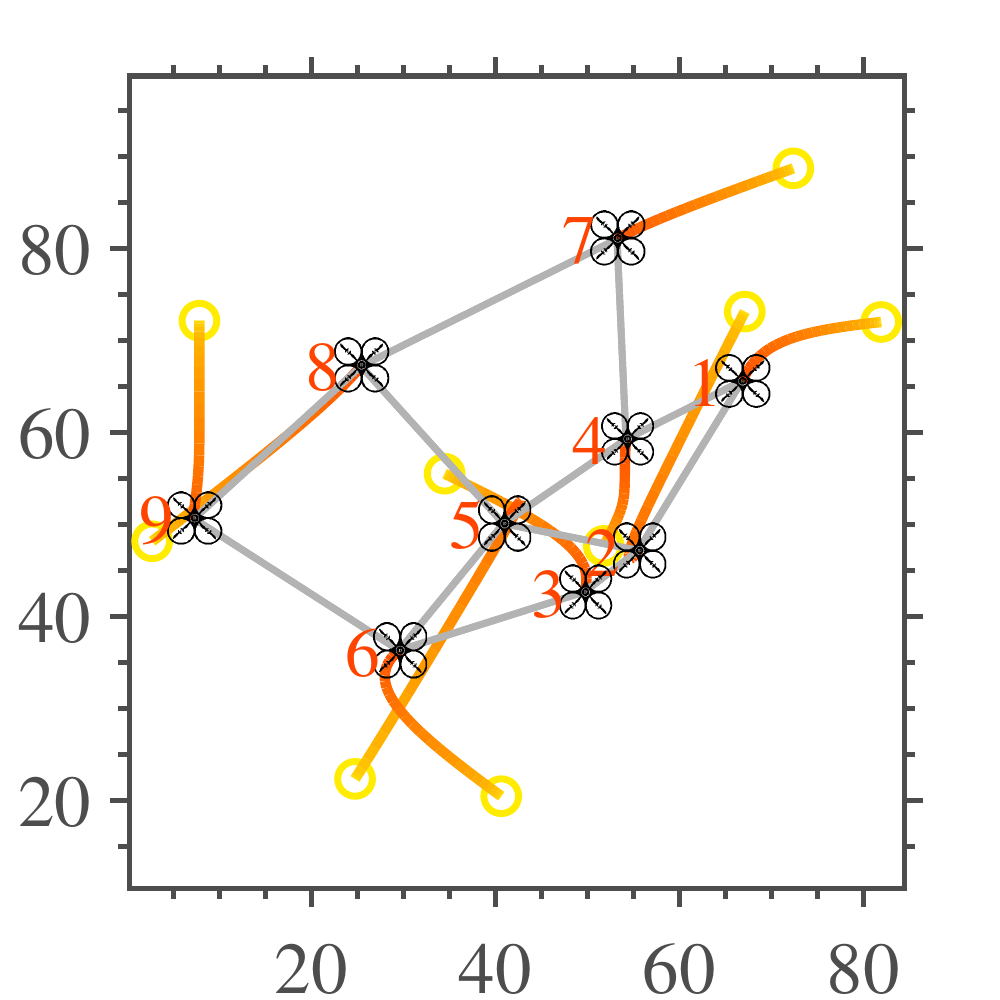}
		\caption{}
	\end{subfigure}%
	\begin{subfigure}[b]{.2\textwidth}
		\includegraphics[trim = 1mm 1mm 1mm 1mm, clip, width=0.97\textwidth] {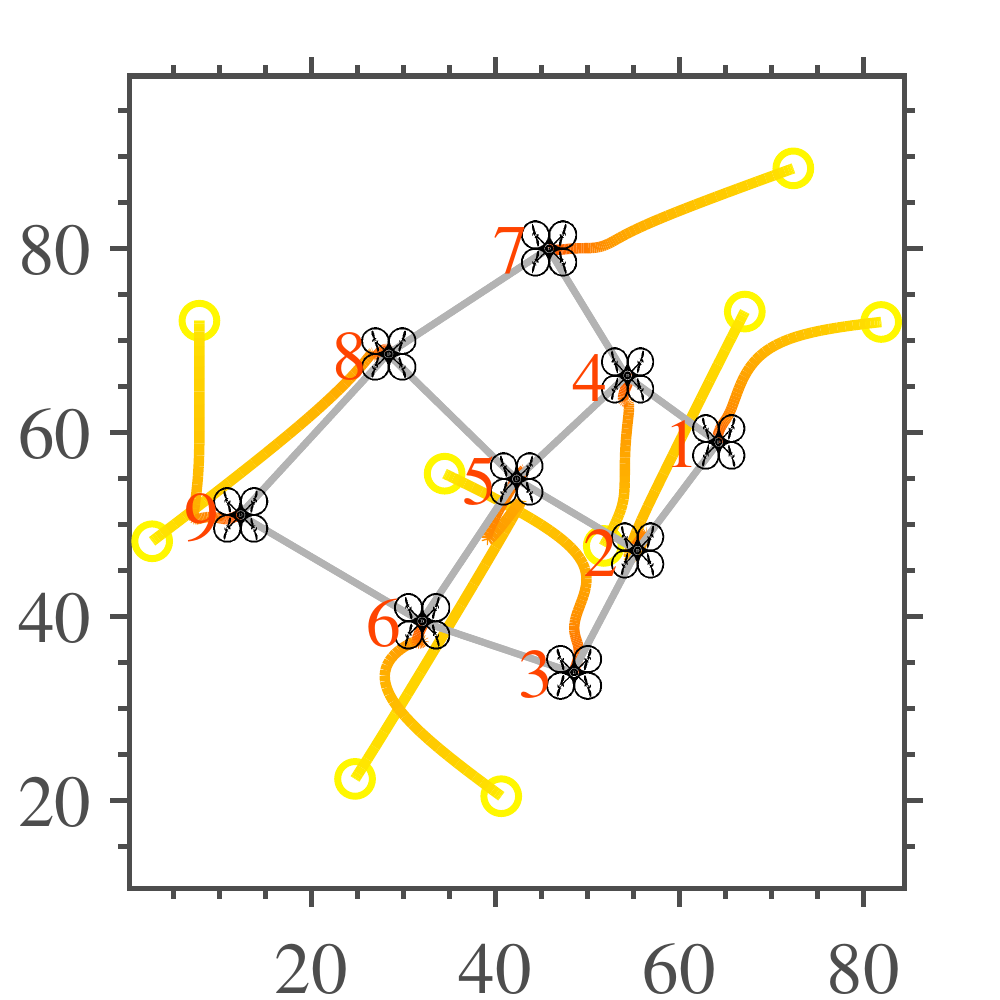}
		\caption{}
	\end{subfigure}%
	\begin{subfigure}[b]{.2\textwidth}
		\includegraphics[trim = 1mm 0mm 1mm 1mm, clip, width=0.97\textwidth] {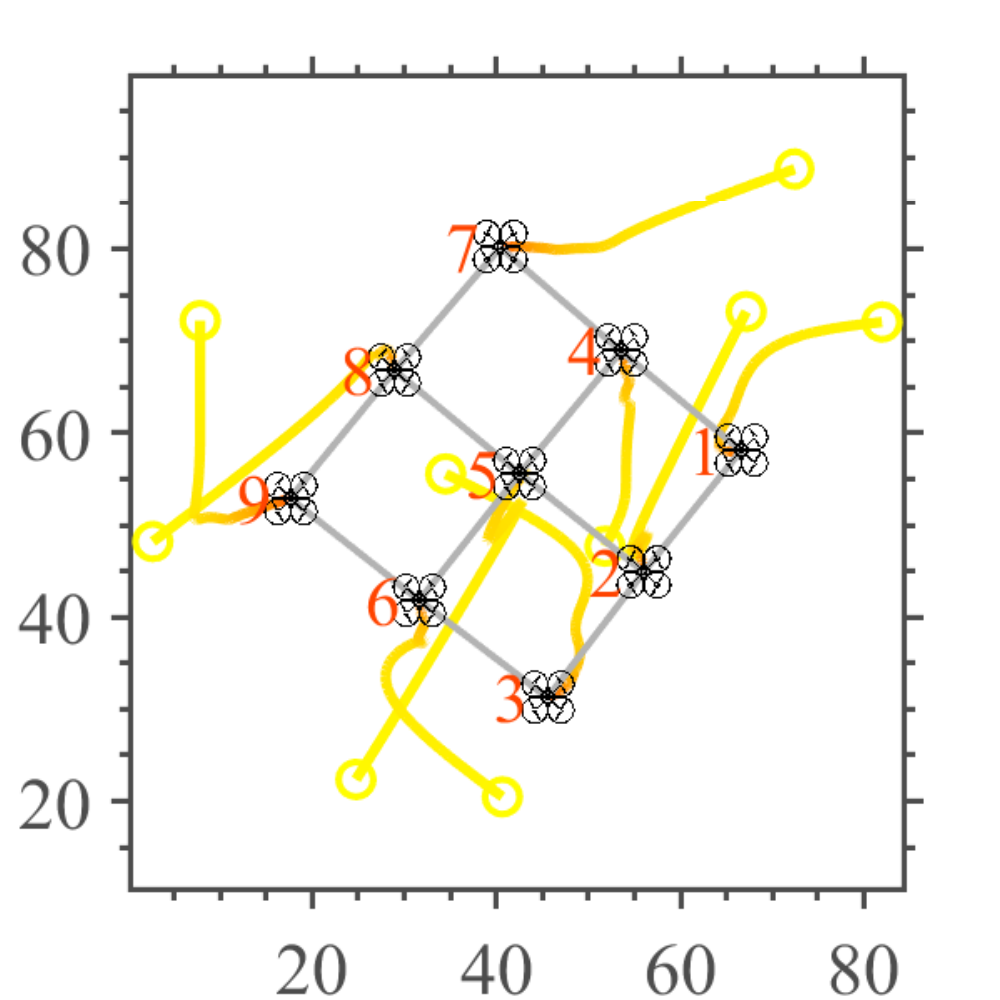}
		\caption{}
	\end{subfigure}%
	\caption{Simulation of 9 quadrotors with a square grid desired formation (actual size of vehicles increased by a factor of 1.5 for better visibility). (a) Top view at $t =  0$s. (b) $t = 4$s. (c) $t = 8$s. (d) $t = 21$s. (e) $t = 80$s.}
	\label{fig:QuadSimul}	
\end{figure*}

\begin{figure*}[]
	\centering
	\begin{subfigure}[b]{.2\textwidth}
		\includegraphics[trim = 1mm 1mm 1mm 1mm, clip, width=0.97\textwidth] {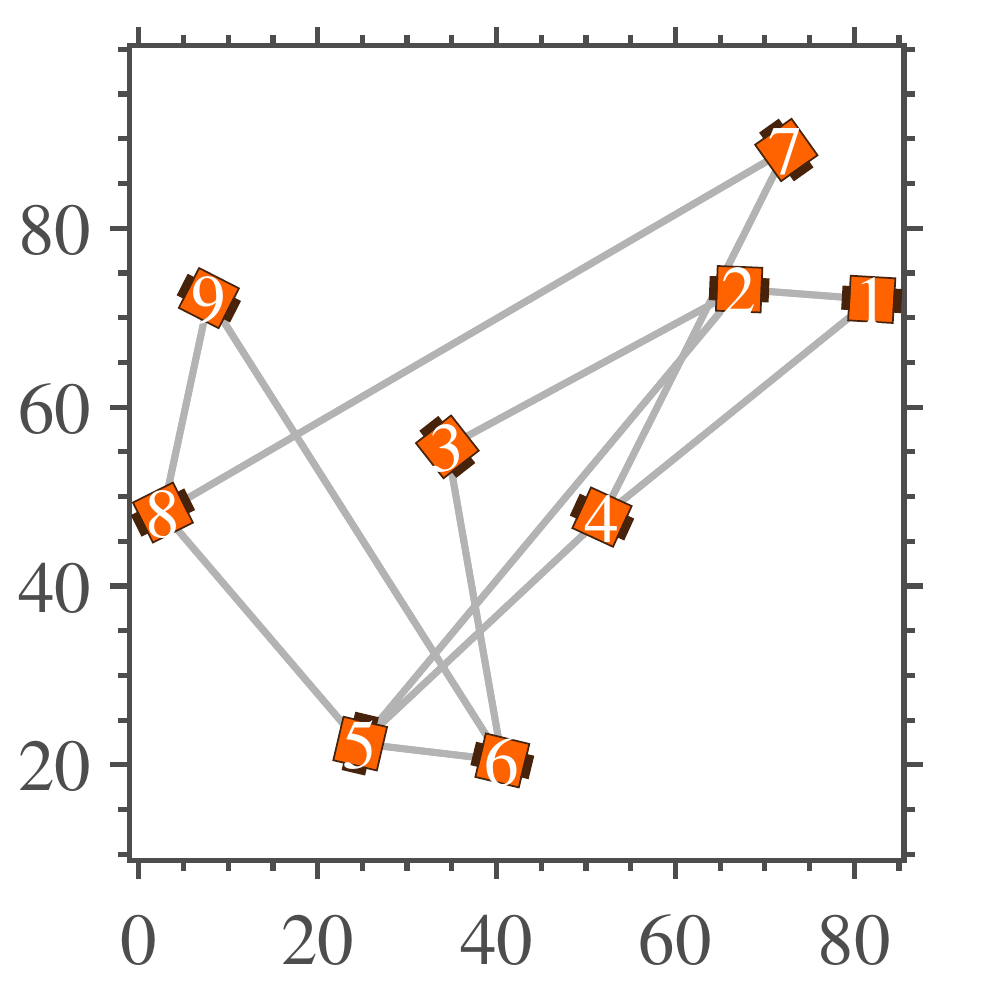}
		\caption{}
	\end{subfigure}%
	\begin{subfigure}[b]{.2\textwidth}
		\includegraphics[trim = 1mm 1mm 1mm 1mm, clip, width=0.97\textwidth] {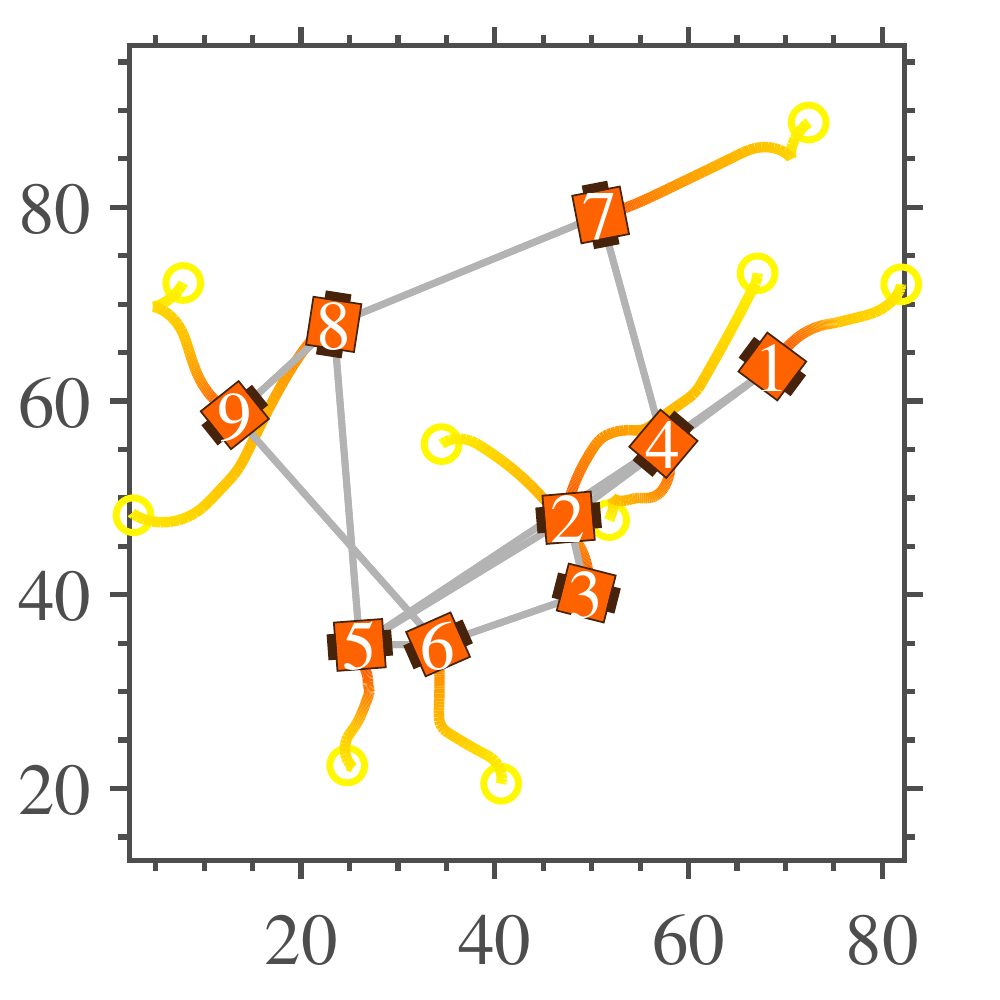}
		\caption{}
	\end{subfigure}%
	\begin{subfigure}[b]{.2\textwidth}
		\includegraphics[trim = 1mm 1mm 1mm 1mm, clip, width=0.97\textwidth] {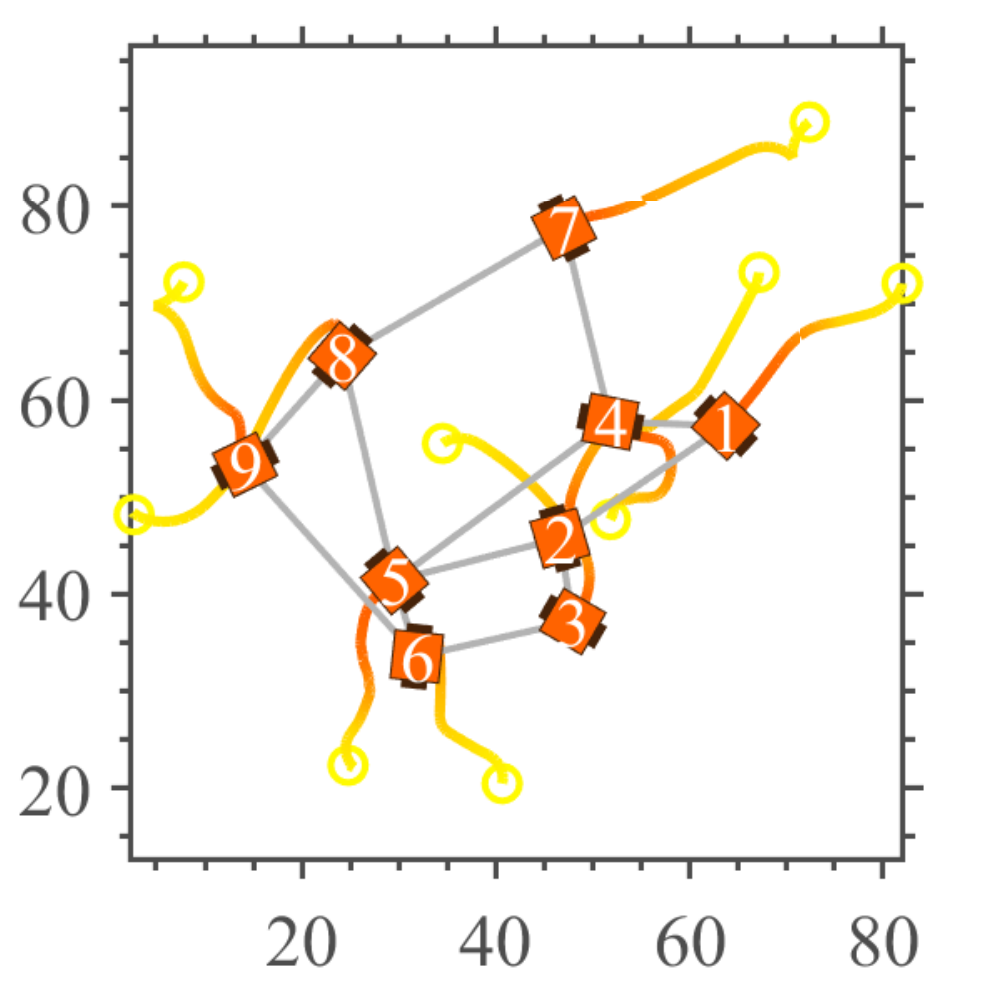}
		\caption{}
	\end{subfigure}%
	\begin{subfigure}[b]{.2\textwidth}
		\includegraphics[trim = 1mm 1mm 1mm 1mm, clip, width=0.97\textwidth] {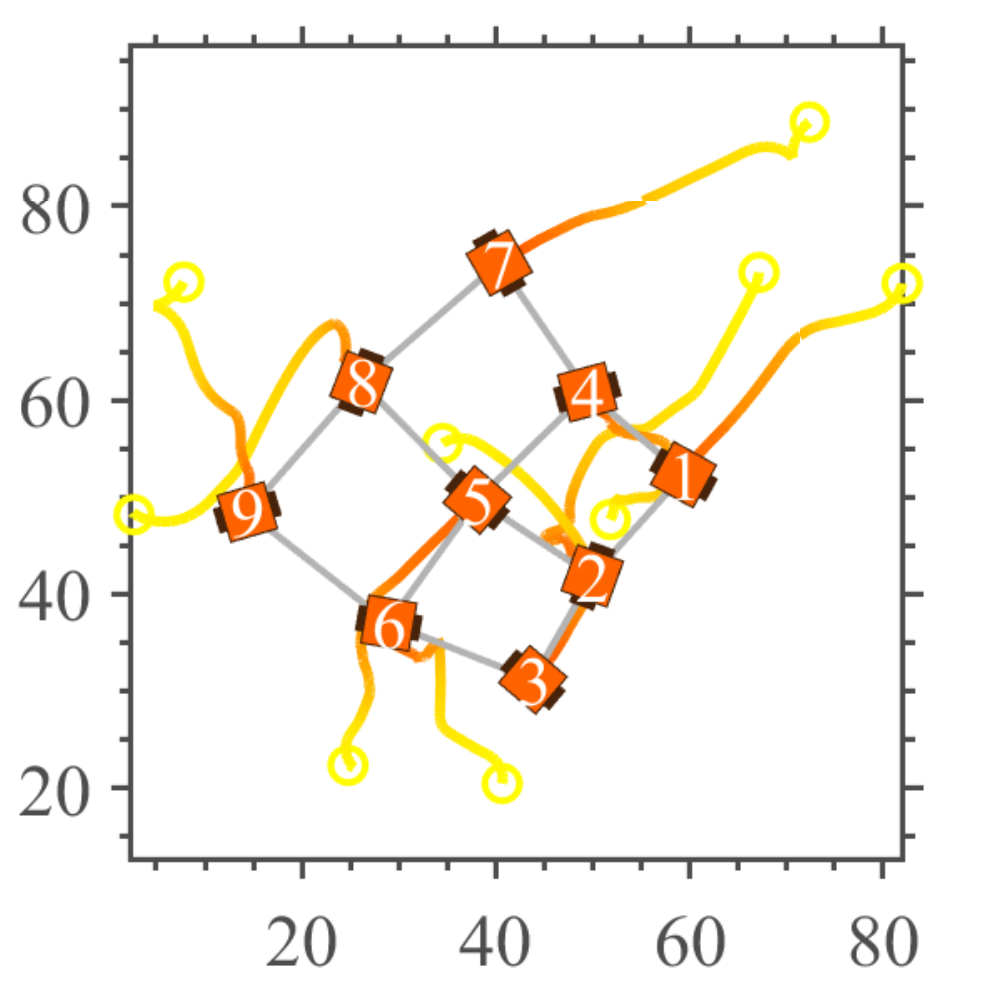}
		\caption{}
	\end{subfigure}%
	\begin{subfigure}[b]{.2\textwidth}
		\includegraphics[trim = 1mm 0mm 1mm 1mm, clip, width=0.97\textwidth] {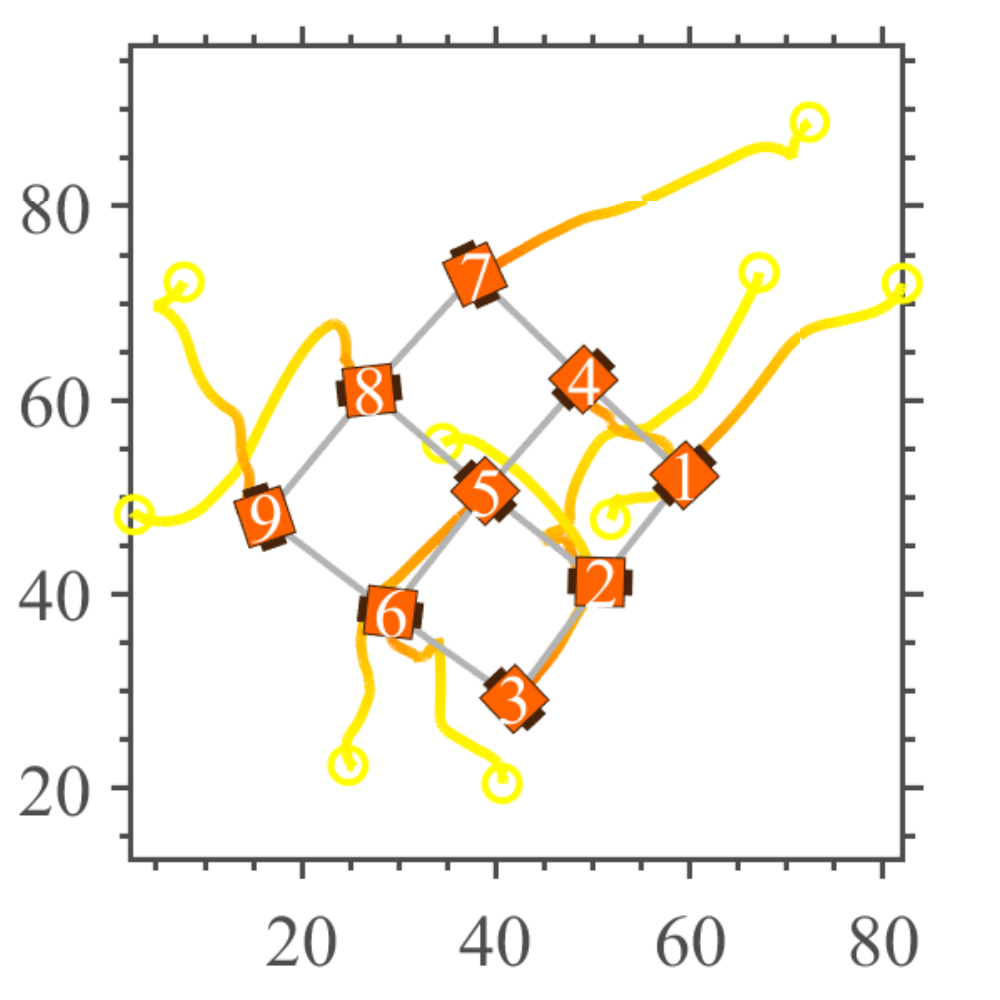}
		\caption{}
	\end{subfigure}%
	\caption{Simulation of 9 unicycles with a square grid desired formation (actual size of vehicles increased by a factor of 1.5 for better visibility). (a) Top view at $t =  0$s. (b) $t = 17$s. (c) $t = 25$s. (d) $t = 45$s. (e) $t = 80$s.}
	\label{fig:UnicycleSimul}	
\end{figure*}

\begin{figure*}[]
	\centering
	\begin{subfigure}[b]{.2\textwidth}
		\includegraphics[trim = 1mm 1mm 1mm 1mm, clip, width=0.97\textwidth] {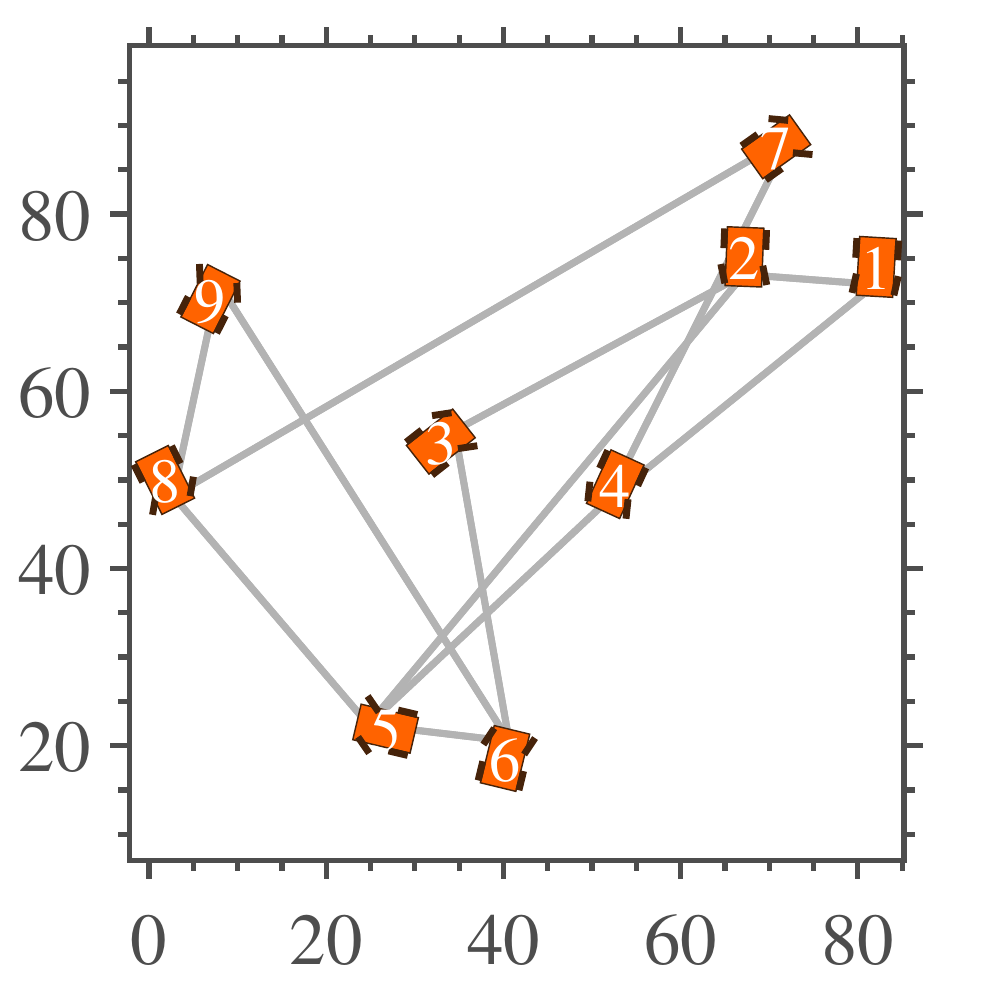}
		\caption{}
	\end{subfigure}%
	\begin{subfigure}[b]{.2\textwidth}
		\includegraphics[trim = 1mm 1mm 1mm 1mm, clip, width=0.97\textwidth] {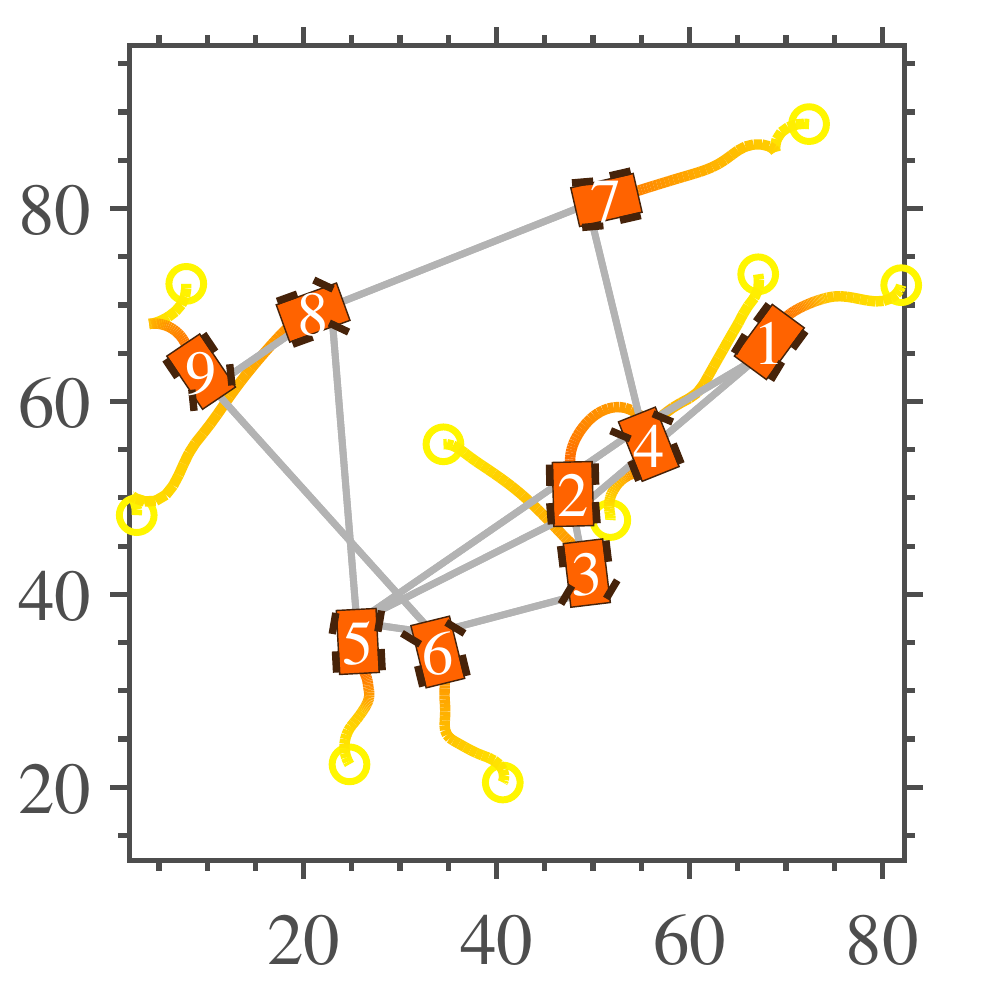}
		\caption{}
	\end{subfigure}%
	\begin{subfigure}[b]{.2\textwidth}
		\includegraphics[trim = 1mm 1mm 1mm 1mm, clip, width=0.97\textwidth] {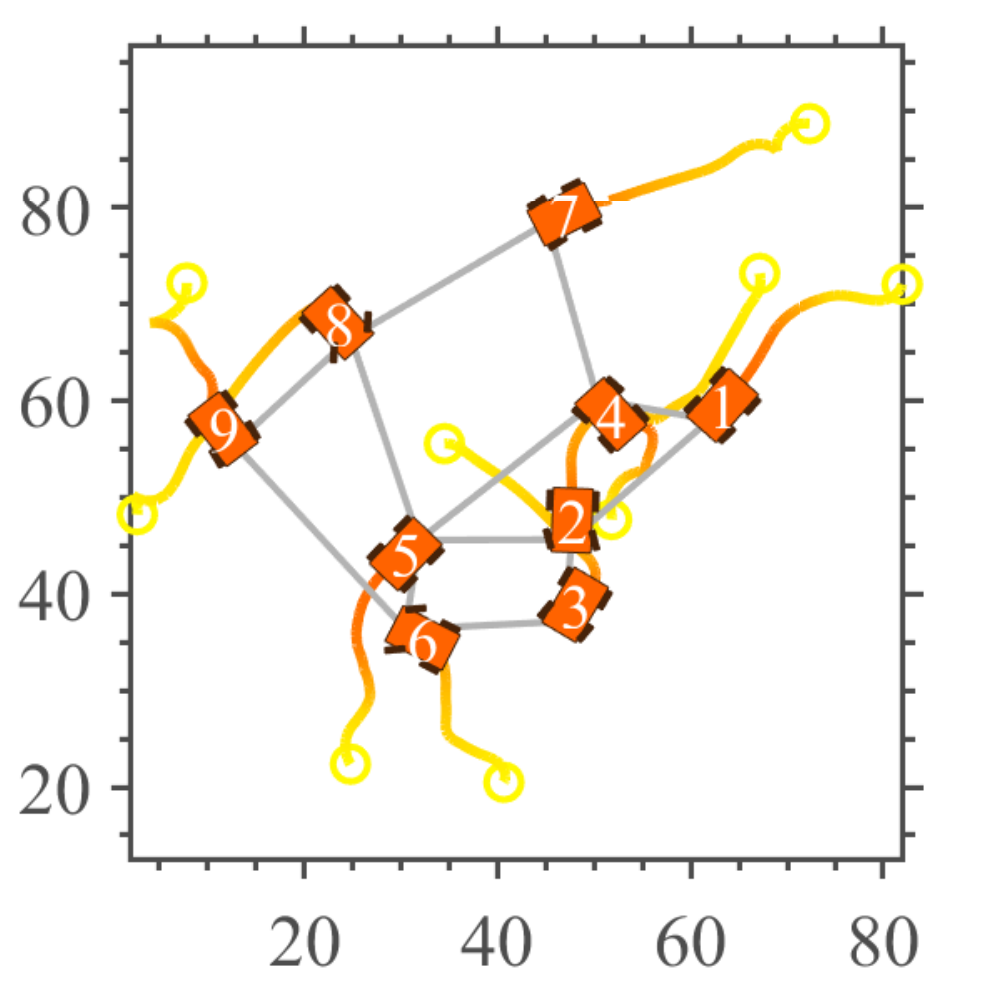}
		\caption{}
	\end{subfigure}%
	\begin{subfigure}[b]{.2\textwidth}
		\includegraphics[trim = 1mm 1mm 1mm 1mm, clip, width=0.97\textwidth] {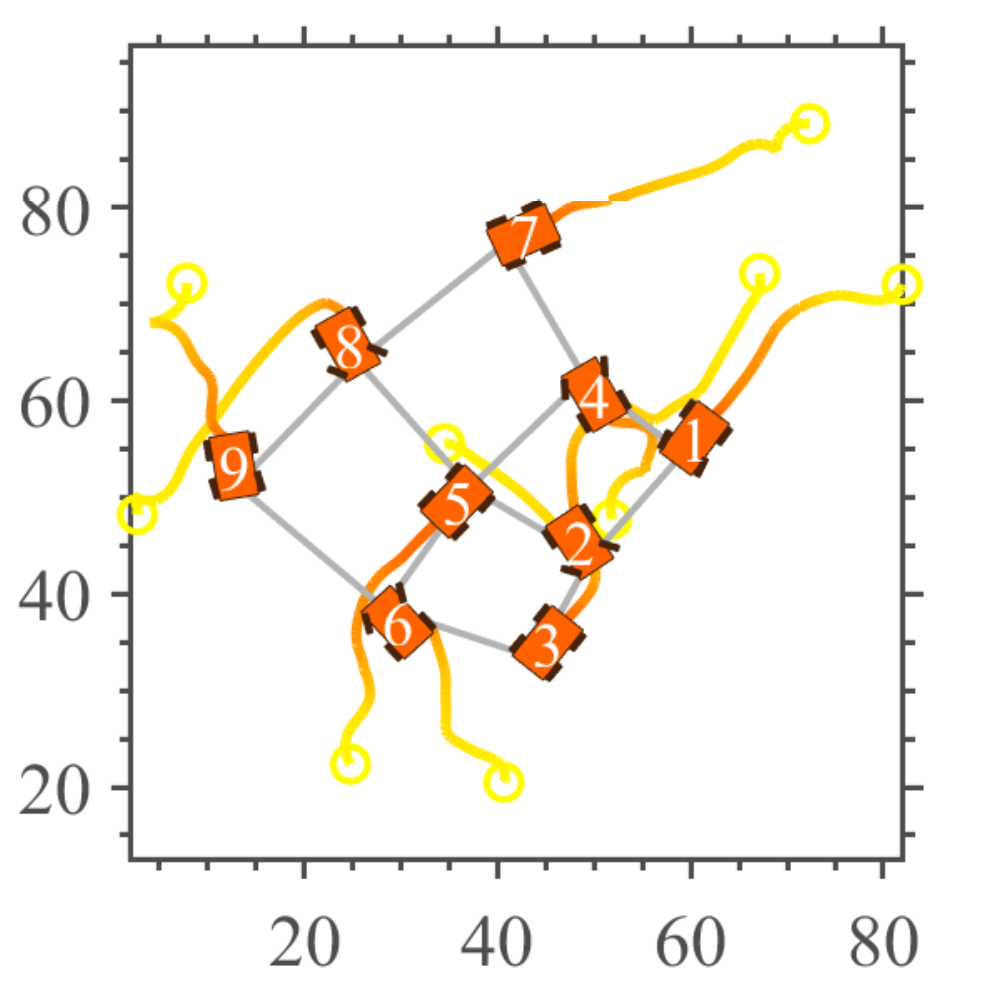}
		\caption{}
	\end{subfigure}%
	\begin{subfigure}[b]{.2\textwidth}
		\includegraphics[trim = 1mm 0mm 1mm 1mm, clip, width=0.97\textwidth] {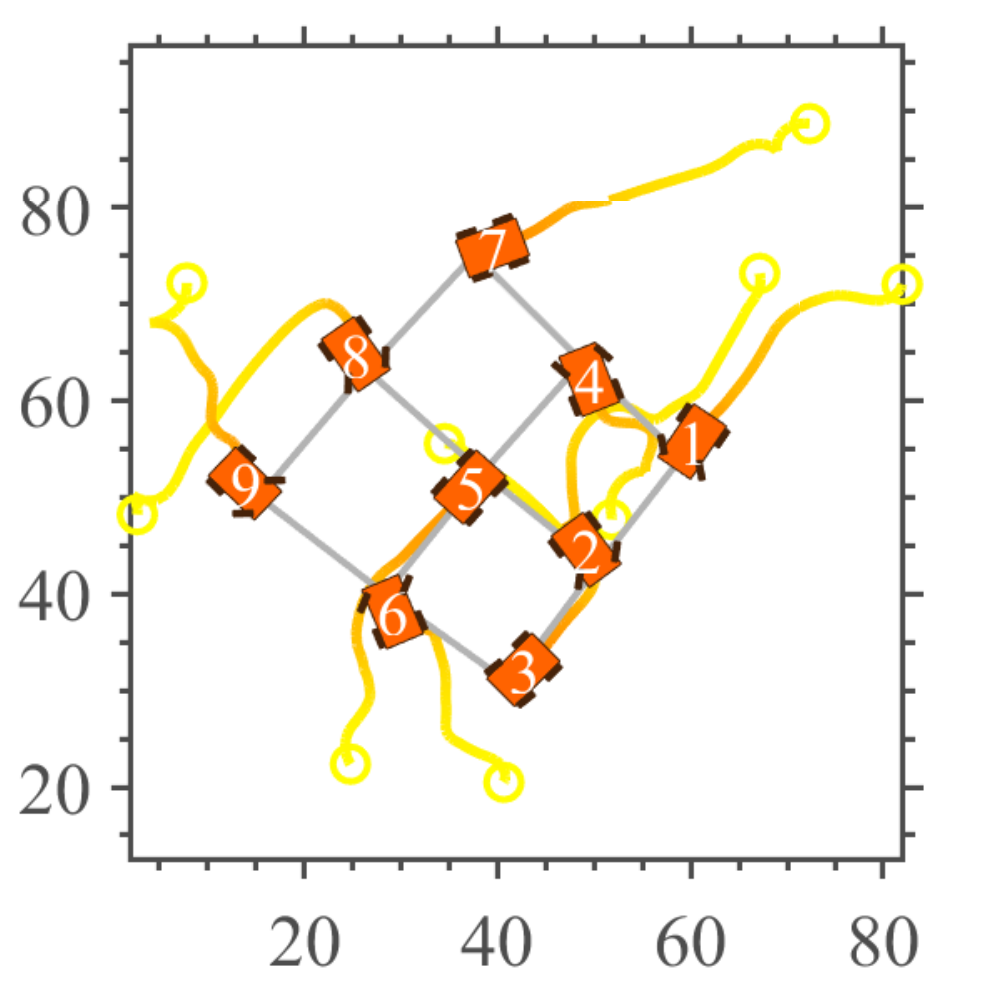}
		\caption{}
	\end{subfigure}%
	\caption{Simulation of 9 cars with a square grid desired formation (actual size of vehicles increased by a factor of 1.5 for better visibility). (a) Top view at $t =  0$s. (b) $t = 18$s. (c) $t = 29$s. (d) $t = 43$s. (e) $t = 80$s.}
	\label{fig:CarSimul}	
\end{figure*}

\subsection{Quadrotors}

Based on the quadrotor dynamics described in Example~\ref{ex:Quadrotor}, a simulation with 9 quadrotors and a  scale-free square grid desired formation is performed. Although the control design is based on the linearized dynamics about the quadrotor's hover point, the original nonlinear quadrotor dynamics given in \eqref{eq:QuadNonlinDynam} is used for the simulation.
To demonstrate robustness to switches in the inter-agent sensing topology, 
the sensing graph is switched among the topologies illustrated in Fig.~\ref{fig:Graphs} based on a randomly generated switching signal shown in the figure. 
\phantomsection \label{p:b13} We further performed simulations in which the topology changes are based on the robots' proximity. Since performance was similar to the results presented here, we do not report the results, however, they can be viewed in the supplemental video available at {\color{black} \href{https://youtu.be/3IcikoWBZJE}{https://youtu.be/3IcikoWBZJE}}.
The control gains associated with the desired formation are computed from Algorithm \ref{alg:GainDesign}, where we used \eqref{eq:OptimCVXJoint} to obtain gains that jointly stabilize all  topologies. The nonzero eigenvalues of computed $A \in \br^{18 \times 18}$ matrices range from $-0.035$ to $-0.497$. The control law used for each quadrotor is chosen according to \eqref{eq:HigherOrderControl2}, where gains are set as $k_0 = 2,\, k_1 = 2,\,  k_2 = 3,\, k_3 = 3$ to make the closed-loop state matrix $\bar{A}$ stable for all topologies. Using these gains, the real part of nonzero eigenvalues of $\bar{A}$ matrices range from $-0.038$ to $-2.0$.
To avoid collision among quadrotors, the distributed collision avoidance strategy in Algorithm \ref{alg:ColAvoidance} with $d_c = 8$ and $r = 4$ units of length is employed.

Fig.~\ref{fig:QuadSimul}(a)-(e) shows the top view of quadrotors at different time instances.  The sensing graph among agents is shown by gray lines connecting the quadrotors. This sensing graph switches throughout the simulation according to Fig.~\ref{fig:Graphs}.
The initial positions of the quadrotors are chosen randomly, and are shown in Fig.~\ref{fig:QuadSimul}(a). As can be seen in Figs.~\ref{fig:QuadSimul}(b)-(e), the proposed control strategy brings the agents to the desired formation.  
Note that when the distance between two quadrotors becomes less than 8 units of length, the collision avoidance strategy is engaged to rotate the control direction outside of the collision cone. Consequently, none of the quadrotors collide during the simulation. 
Further notice that since the control only uses the local relative position measurements, the desired formation is achieved up to a rotation and translation. That is, the orientation of the square formation is not controlled.

\phantomsection \label{p:b14} 
We point out that in the quadrotor Example~\ref{ex:Quadrotor}, the inputs are $u^x,\, u^y$ and the outputs are the $x$-$y$ positions (since we are concerned with planar formations). The input $u^a$ affects the $x$-$y$ positions through $R$ due to the coupled dynamics, and the zero dynamics consists of the state variables $z,\, \psi,\, \omega_z$, which are unobservable from the outputs, however, are asymptotically stable. The theoretical convergence guarantees of the proposed control are based on the assumption of input-to-state feedback linearizability. Nonetheless, as can be seen from the simulation results which are based on the original nonlinear dynamics, the quadrotors achieve the desired formation. This suggests potential applicability of the proposed control to systems with asymptotically stable zero dynamics, which can be expected due to the robustness properties.

\subsection{Unicycles}

The control strategy \eqref{eq:kinemCtrl} for agents with unicycle dynamics  is considered in a simulation with 9 unicycles and a square grid desired formation.
The unicycle dynamics \eqref{eq:DynamUnicycle} are used to test the performance of control in the presence of unmodeled dynamics, where values of parameters $a,\, b,\, c,\, d$ are chosen randomly for each agent with uniform distribution in the interval $[5,\, 10]$. 
All linear and angular velocities are saturated by the maximum allowed velocities of $v_{\max} = 3$ units of length per second and $\omega_{\max} = \pi / 4$ radians per second, respectively. 
The control gain matrices designed for quadrotors in the previous section are used for unicycle agents, showing that the control gains found from Algorithm \ref{alg:GainDesign} can be used for vehicles with a variety of dynamics to achieve the same desired formation.

To allow a better comparison between trajectories of agents with different dynamics, the unicycle agents start from the same initial condition as  quadrotors, as can be seen in Fig.~\ref{fig:UnicycleSimul}(a), and the sensing topology among them switches according to Fig.~\ref{fig:Graphs}.
The position of agents at other time instances are shown in  Figs.~\ref{fig:UnicycleSimul}(b)-(e), where by using the collision avoidance strategy in Algorithm \ref{alg:ColAvoidance} with $d_c = 8$ and $r = 4$ units of length, no collisions occur as the unicycles converge to the desired formation. 
Similar to quadrotors, the desired formation is scale-free and achieved up to a rotation and translation with respect to the global coordinate frame that is unknown to agents.

\subsection{Cars}

The control strategy \eqref{eq:CarKinemControl} for agents with front-wheel drive car dynamics is considered in a simulation with 9 cars and a square grid desired formation.
The car dynamics \eqref{eq:DynamCarFrontOriginal} is used to test the performance of control in the presence of unmodeled dynamics, 
where values of parameters $a,\, b,\, c,\, d$ are chosen as the same values for unicycle agents to allow a better comparison. 
The driving and steering velocities of cars are saturated by the maximum allowed velocities of $v_{\max} = 3$ units of lengths per second and $\omega_{\max} = \pi / 4$ radians per second, respectively. Furthermore, all steering angles are confined to the interval of $[-\pi / 4, \, \pi / 4]$ radians to model the practical bounds on the steering angle of wheels in cars.
The control gain matrices used for quadrotors and unicycles are used in the simulation.

The sensing topology switches according to Fig.~\ref{fig:Graphs}, and the initial position of cars is shown in Fig.~\ref{fig:CarSimul}(a), which is the same as quadrotors and unicycles to allow a better comparison.
The position of cars at other instances of time are shown in  Figs.~\ref{fig:CarSimul}(b)-(e), where by using the collision avoidance strategy with $d_c = 8$ and $r = 4$ units of length no collisions occur as the cars converge to the desired formation. 
Note that the attained square grid formation is with respect to the front axle's center of each car, i.e., the origin of car's local coordinate frame in Fig.~\ref{fig:CarModel}. Furthermore, the heading of the cars at the final formation is not specified and can take an arbitrary value.

\section{Experimental Results} \label{sec:Experiments}

In this section, we validate the proposed control strategies experimentally on a distributed multi-robot platform. 
Our experimental study is limited to the cases of single-integrator and unicycle dynamics, as we do not  have a fleet of autonomous cars. 
Links to the implementation code and technical details is provided in the Supplementary Material section.

\subsection{Experimental Platform}

\begin{figure} 
	\begin{center}	
		\includegraphics[trim = 75mm 82mm 65mm 72mm, clip, width=0.49\textwidth] {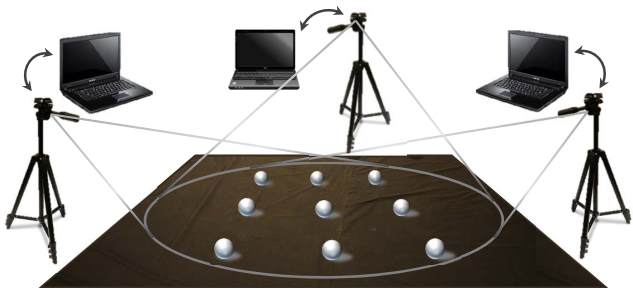}
		\caption{Schematics of the experimental setup.}	
		\label{fig:ExprSetup}			
	\end{center}
\end{figure}

\begin{figure} 
	\begin{center}	
		\includegraphics[trim = 0mm 10mm 0mm 0mm, clip, width=0.13\textwidth] {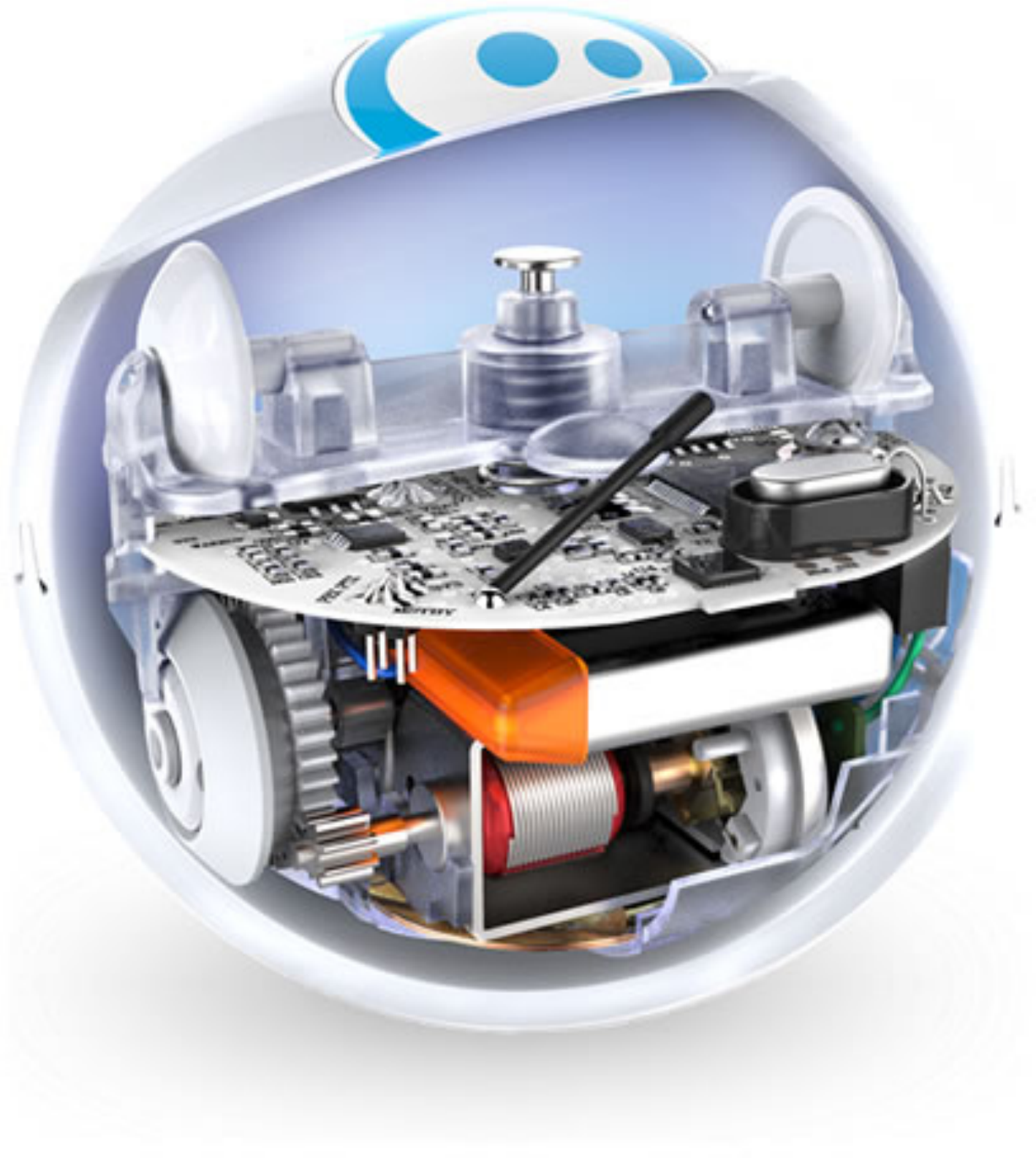}
		\caption{Schematics of a Sphero robot.}	
		\label{fig:Sphero}			
	\end{center}
\end{figure}

Our experimental platform consists of the Sphero 2.0 robots, laptop computers with Bluetooth adapters to control the robots, and Logitech C920 webcams to provide vision feedback. 
As illustrated in Fig.~\ref{fig:ExprSetup}, a group of Sphero robots are placed in an arena that is overseen by webcams. The video stream provided by each webcam is used in an image processing script to detect and track the Spheros via blob detection \cite[Sec.13.1]{Corke2017}. 
The coordinates of each robot are estimated by mapping the pixel position of the robot in the image to the $x$-$y$ Euclidean coordinates on the arena floor. 
This is done by initially placing a checkerboard at an arbitrary location on the floor and using PnP algorithm \cite{Lepetit2009} to estimate the relative orientation of the ground plane in the camera's coordinate frame.
The coordinates are then given by finding the intersection of the ray through the robot image and the ground plane generated by PnP.

The estimated coordinates of Spheros are used by each computer to calculate the control according to the specified distributed formation control strategy.
The desired control action is then communicated to each robot over Bluetooth.
The experimental setup is distributed in the sense that each computer is responsible for controlling a subset of robots, and computers do not communicate during the experiment. 
Furthermore, the control computed by each computer respects the sensing graph specified by the user and does not use any additional information that may be available.

The schematics of a Sphero robot are shown in Fig.~\ref{fig:Sphero}. The robot consists of a differential-wheeled internal platform that is enclosed in a spherical shell. Rotation of the internal wheels induces a roll motion of the outer shell.
To test the control strategy proposed for single-integrator agents, a low-level PID controller is employed to first orient the internal platform along the desired direction, and then roll the robot forward at the desired speed. For the unicycle agents, the low-level PID controller adjusts the wheel velocities such that the internal differential drive platform have the desired angular and linear velocities.

\subsection{Triangle Formation}

\begin{figure}[t]
	\centering
	\begin{subfigure}[b]{.5\linewidth}
		\includegraphics[trim = 1mm 1mm 1mm 1mm, clip, width=0.97\textwidth] {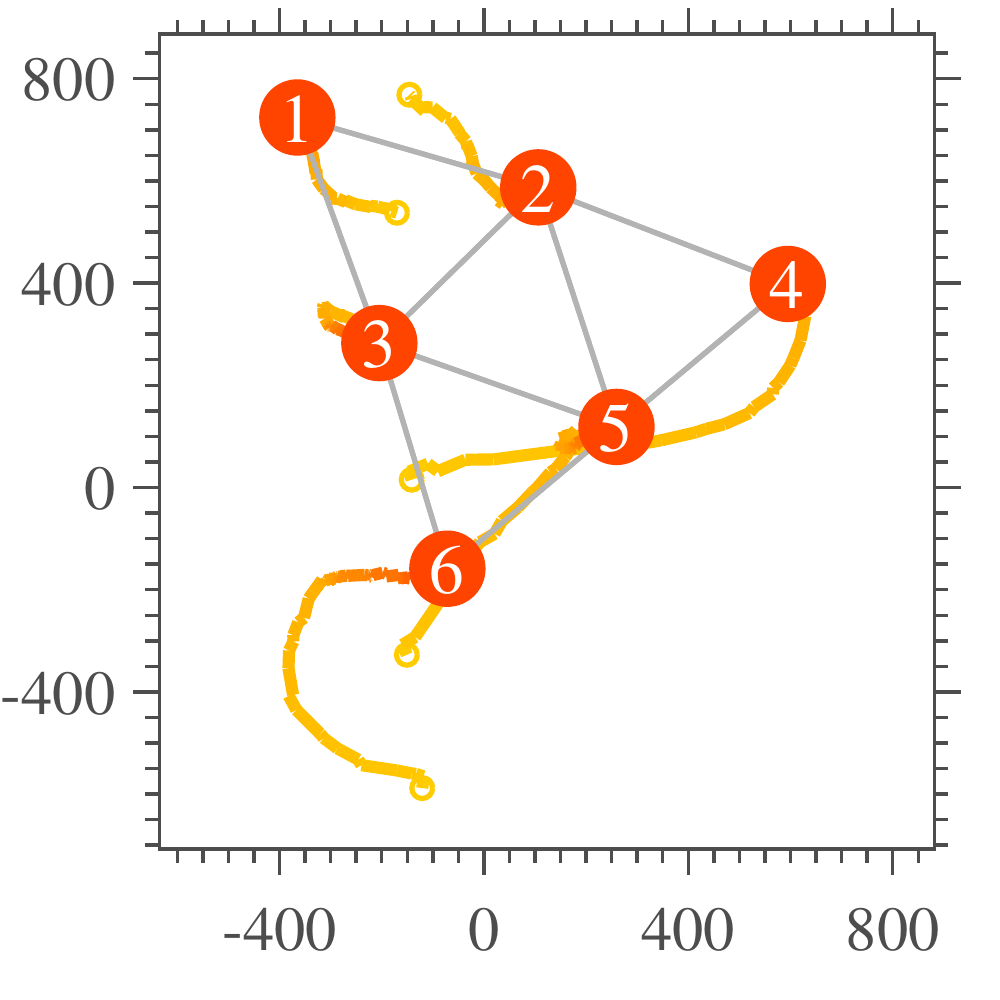}
		\caption{}
	\end{subfigure}%
	\begin{subfigure}[b]{.5\linewidth}
		\includegraphics[trim = 1mm 1mm 1mm 1mm, clip, width=0.97\textwidth] {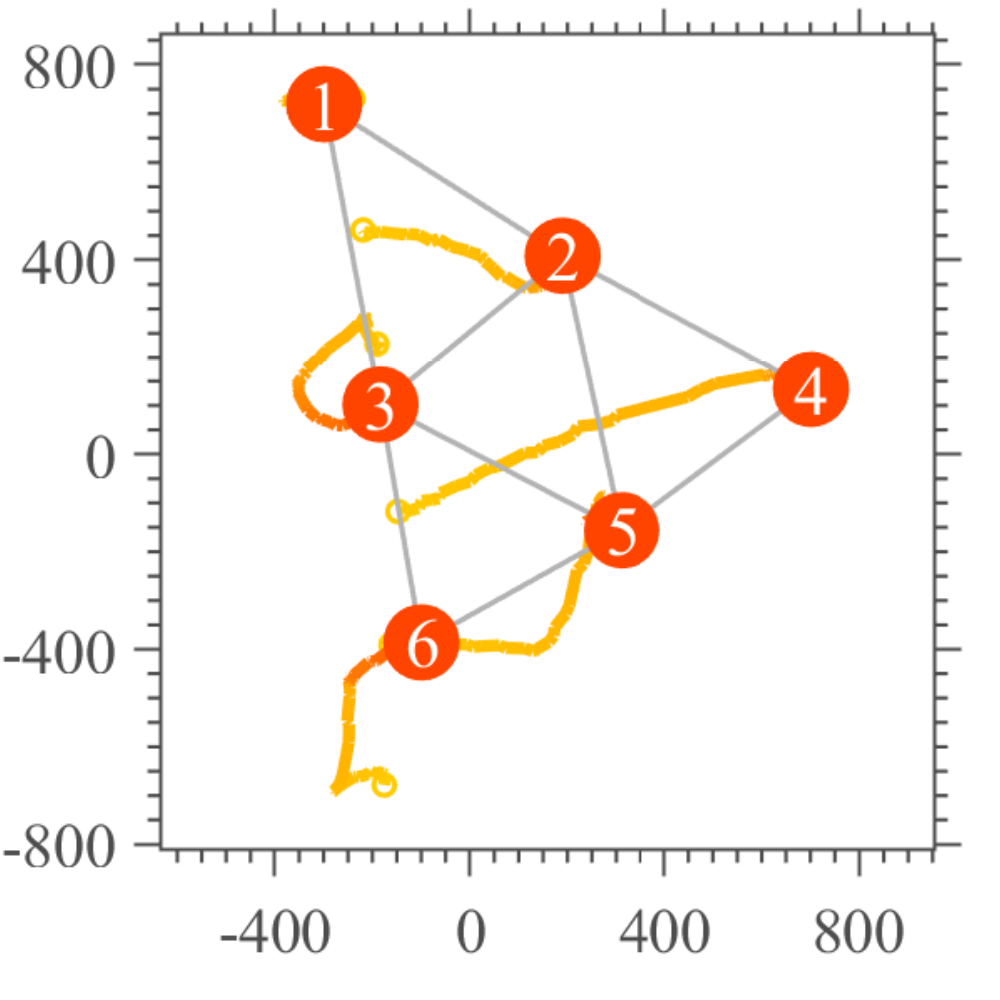}
		\caption{}
	\end{subfigure}%
	\caption{Trajectory of robots estimated from webcam images (a) under single-integrator control, (b) under unicycle control. Units are in millimeter.}
	\label{fig:Triangle_Reconst}	
\end{figure}

\begin{figure}[t]
	\centering
	\begin{subfigure}[b]{.5\linewidth}
		\includegraphics[trim = 1mm 1mm 1mm 1mm, clip, width=0.97\textwidth] {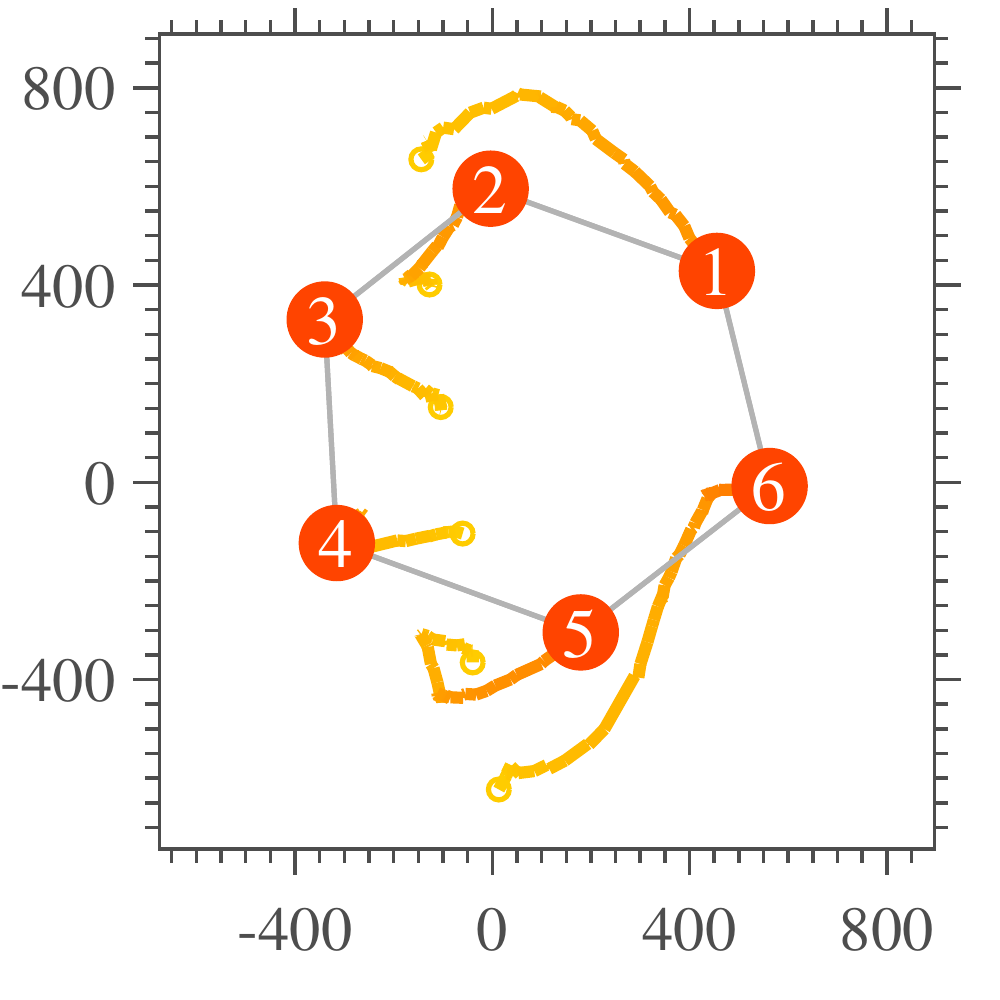}
		\caption{}
	\end{subfigure}%
	\begin{subfigure}[b]{.5\linewidth}
		\includegraphics[trim = 1mm 1mm 1mm 1mm, clip, width=0.97\textwidth] {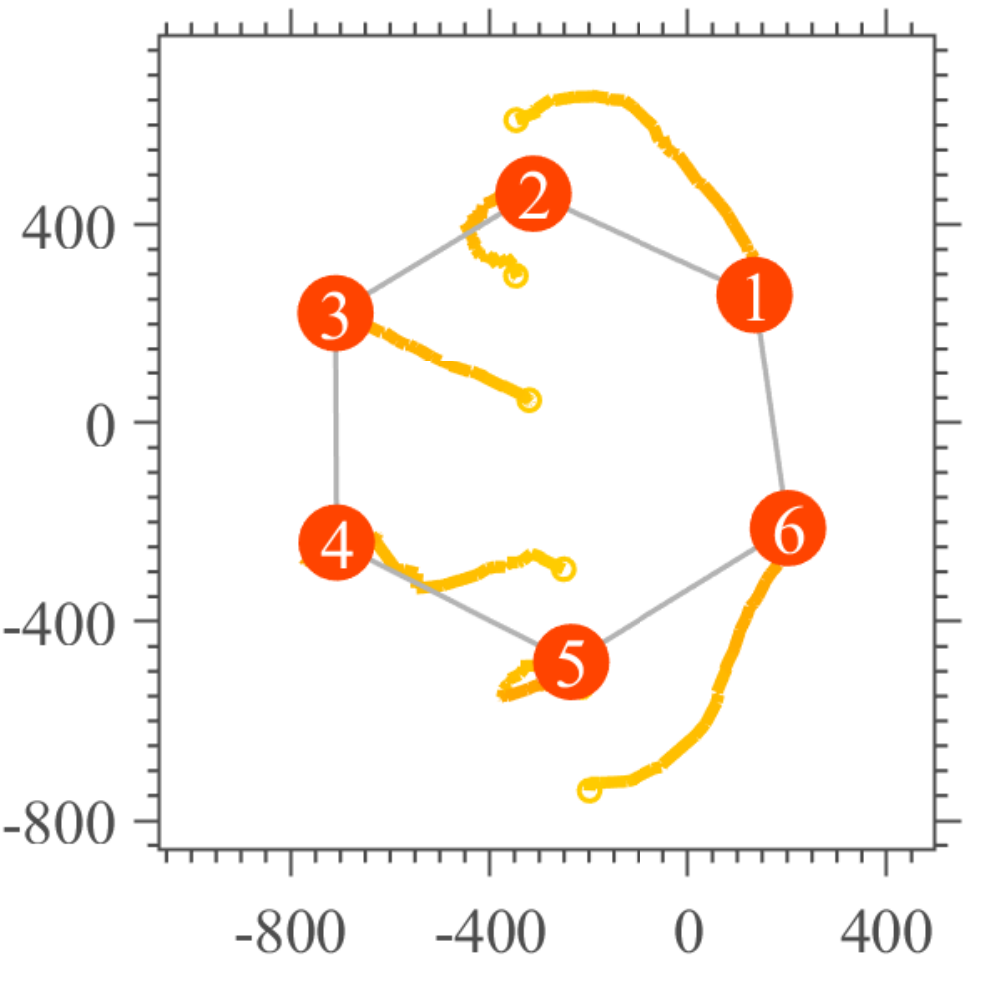}
		\caption{}
	\end{subfigure}%
	\caption{Trajectory of robots estimated from webcam images (a) under single-integrator control, (b) under unicycle control. Units are in millimeter.}
	\label{fig:Hexagon_Reconst}	
\end{figure}

\begin{figure}[t]
	\centering
	\begin{subfigure}[b]{.5\linewidth}
		\includegraphics[trim = 1mm 1mm 1mm 1mm, clip, width=0.97\textwidth] {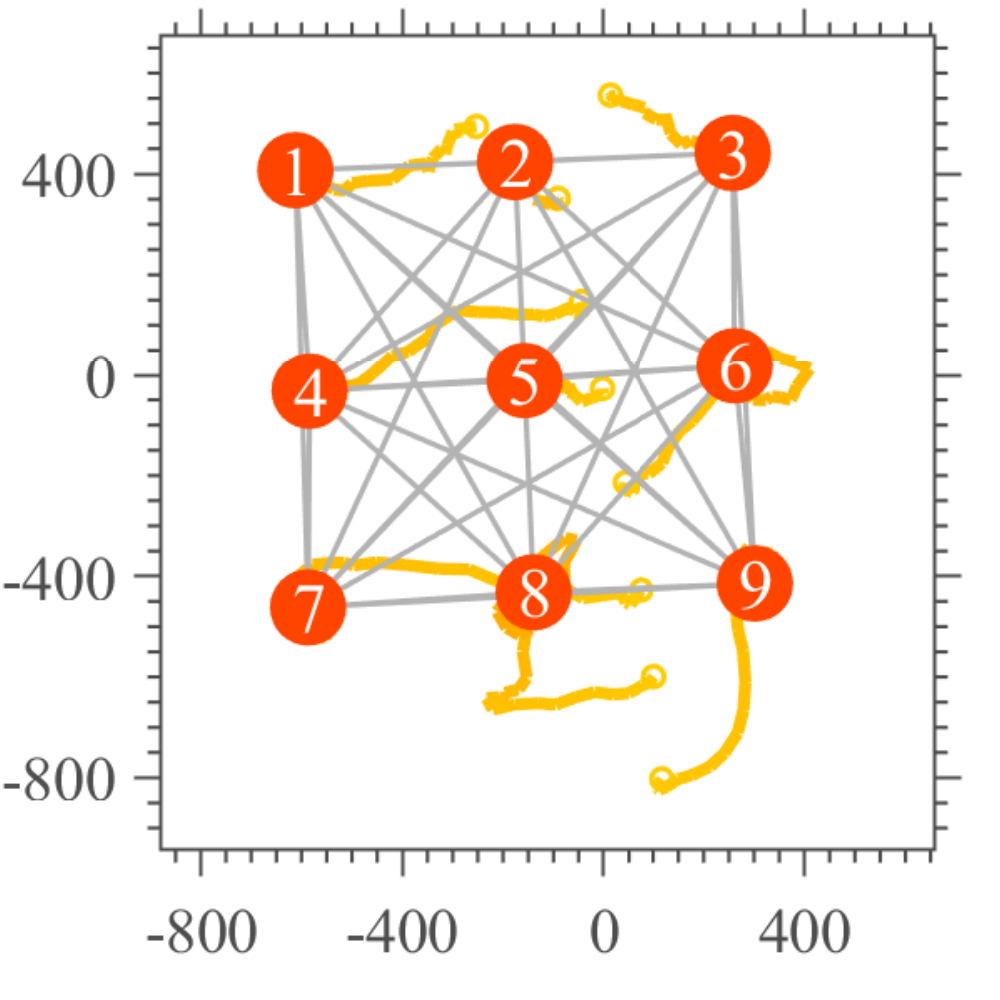}
		\caption{}
	\end{subfigure}%
	\begin{subfigure}[b]{.5\linewidth}
		\includegraphics[trim = 1mm 1mm 1mm 1mm, clip, width=0.97\textwidth] {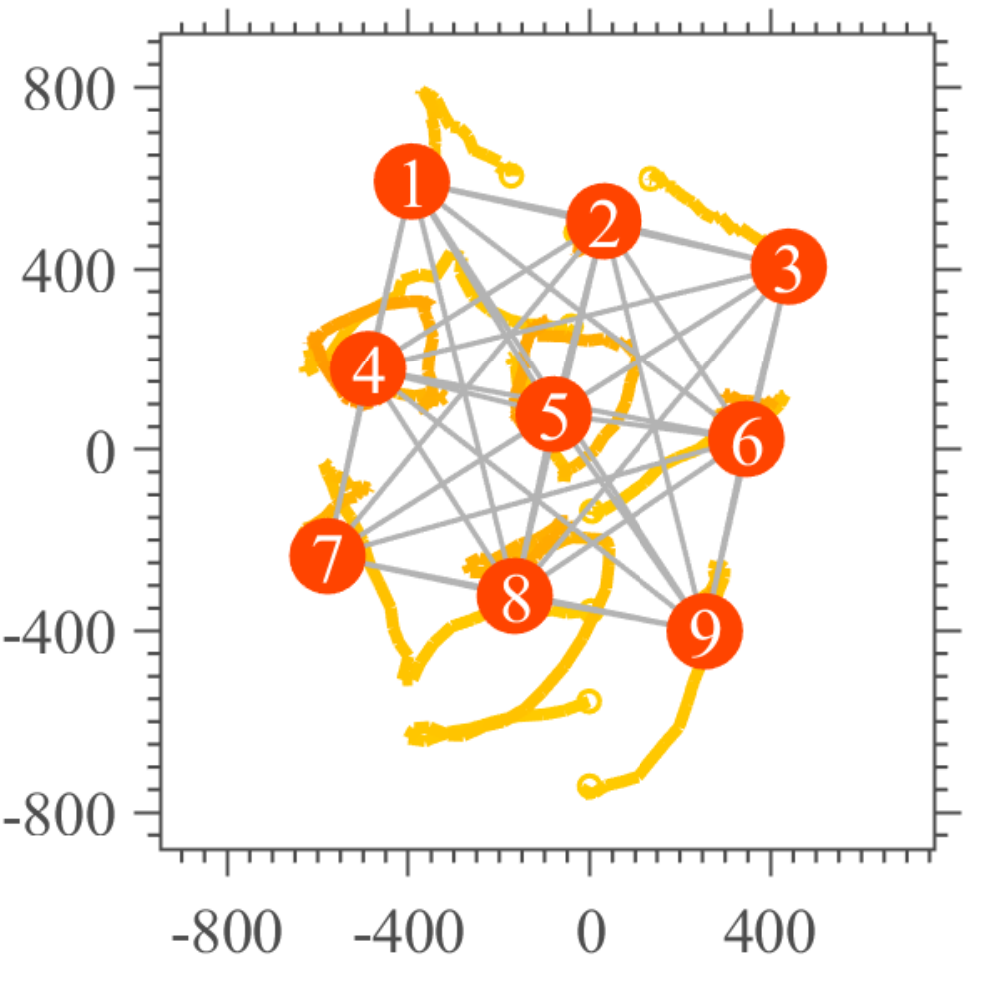}
		\caption{}
	\end{subfigure}%
	\caption{Trajectory of robots estimated from webcam images (a) under single-integrator control, (b) under unicycle control. Units are in millimeter.}
	\label{fig:Grid_Reconst}	
\end{figure}

\begin{figure*}[]
	\centering
	\begin{subfigure}[b]{1.0\textwidth}
		\includegraphics[trim = 25mm 96mm 23mm 93mm, clip, width=1.0\textwidth] {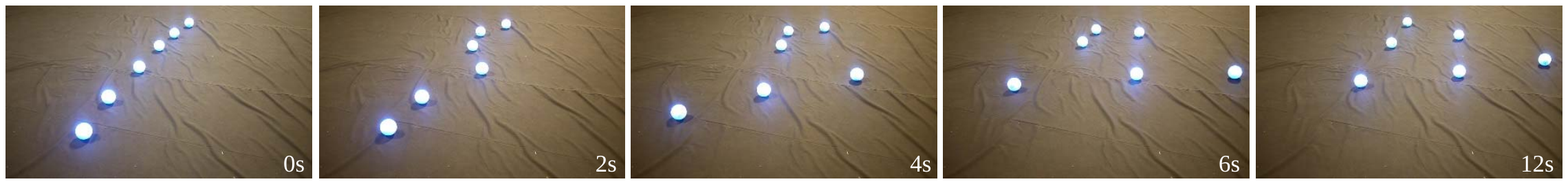}
	\end{subfigure}%
	\caption{Snapshots of experiment for triangle formation at different instances of time.}
	\label{fig:Triangle_Pics}	
\end{figure*}

Our first set of experiments correspond to an equilateral triangle formation with 6 robots.  For this experiment only two computers are used, where the first computer controls robots numbered 1 to 3, and the remaining robots are controlled by the second computer. 
The sensing  topology among the robots is illustrated by gray lines in Fig.~\ref{fig:Triangle_Reconst} and is fixed throughout the experiment. 
Nonzero eigenvalues of the computed gain matrix $A \in  \br^{12 \times 12}$ range from $-1.52$ to $-0.22$. 
Collision avoidance strategy in Algorithm \ref{alg:ColAvoidance}  is used with the activation threshold $d_c = 400$ mm and $r = 100$ mm. The speed of each robot is bounded to $1/3$ of its upper limit, which gives the maximum speed of around $200$ mm/s.

The trajectory of robots under the single-integrator control strategy \eqref{eq:HolonomCtrl} is shown in Fig.~\ref{fig:Triangle_Reconst}(a). These  trajectories are reconstructed from the images provided by the first webcam. The sensing topology among robots is illustrated by gray lines in the figure. 
At their initial position, the robots roughly form a line. Starting from this initial position, they achieve the desired formation as can be further seen from the snapshots of experiment video at different instances of time in Fig.~\ref{fig:Triangle_Pics}.

In a similar experiment with robots starting roughly from the same initial positions, the unicycle control strategy \eqref{eq:kinemCtrl} is used to achieve the desired formation. Estimated trajectory of robots under this control strategy is shown in Fig.~\ref{fig:Triangle_Reconst}(b). 
As the robots get closer to the desired formation, the magnitude of their control vectors become smaller. Once the desired speed is small enough, the floor friction prevents the robots from moving further. This can cause an small steady state error, which can be observed in  Fig.~\ref{fig:Triangle_Reconst}.  
Note that no collisions occur as the robots converge to the desired formation.

\subsection{Hexagon Formation}

\begin{figure*}[]
	\centering
	\begin{subfigure}[b]{1.0\textwidth}
		\includegraphics[trim = 25mm 96mm 23mm 93mm, clip, width=1.0\textwidth] {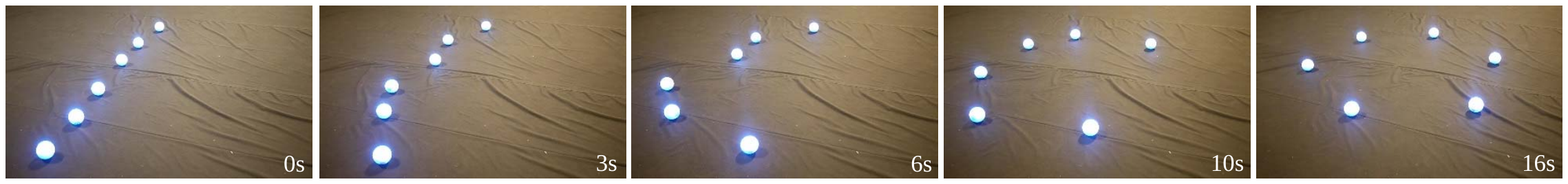}
	\end{subfigure}%
	\caption{Snapshots of experiment for hexagon formation at different instances of time.}
	\label{fig:Hexagon_Pics}	
\end{figure*}

Using the same experimental setup for the triangle formation, we repeat a new set of experiments with the desired formation defined as a hexagon. 
The inter-agent sensing topology is chosen as a cyclic graph, as illustrated by gray lines in Fig.~\ref{fig:Hexagon_Reconst}, and is fixed throughout the experiment. 
The nonzero eigenvalues of matrix $A$ for this desired formation range from $-2$ to $-1$. 
The reconstructed trajectories from webcam images are shown in Fig.~\ref{fig:Hexagon_Reconst}(a) for the single-integrator control, and in Fig.~\ref{fig:Hexagon_Reconst}(b) for the unicycle control. 
Snapshots of the experiment video corresponding to the single-integrator controller are shown in Fig.~\ref{fig:Hexagon_Pics}. As can be seen from the figures, starting from the initial positions, agents converge to the desired formation.

\subsection{Square-Grid Formation}

\begin{figure*}[]
	\centering
	\begin{subfigure}[b]{1.0\textwidth}
		\includegraphics[trim = 25mm 96mm 23mm 93mm, clip, width=1.0\textwidth] {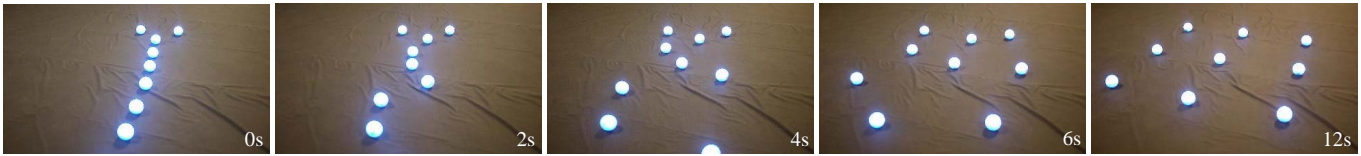}
	\end{subfigure}%
	\caption{Snapshots of experiment for square-grid formation at different instances of time.}
	\label{fig:Grid_Pics}	
\end{figure*}

In our last set of experiments we consider a square-grid desired formation of 9 robots with the sensing topology chosen as a complete graph and fixed throughout the experiment. Here, 3 computers are used to control the robots, where the first computer controls robots numbered 1 to 3, the second computer controls robots 4 to 6, and the third computer controls the remaining robots. 
The parameters used for collision avoidance strategy and the maximum allowed speed of the robots remain the same as previous experiments.

The trajectory of robots reconstructed from the images of the first webcam is shown in Fig.~\ref{fig:Grid_Reconst}(a) for the single-integrator control strategy, and snapshots of experiment video is shown in Fig.~\ref{fig:Grid_Pics}. 
If the distance between two robots is less than the collision avoidance threshold $d_c$, the collision avoidance strategy rotates the control direction outside of the collision cones. However, if the required rotation is more than $\pm 90^\circ$ of the original control direction, the control is set to zero and the robot stops until a feasible direction becomes available. 
The effect of collision avoidance strategy is most notable for robot 2, which is initially surrounded by robots 1, 3, and 4. Consequently, due to the lack of a feasible direction robot 2 remains stationary initially until the surrounding robots move further and a feasible direction becomes available.
Similar experiments are performed by using the unicycle control strategy, where the reconstructed trajectory of robots are shown in Fig.~\ref{fig:Grid_Reconst}(b). 
Due to using different PID gains for the low-level controllers implemented on robots 4 to 9, their trajectories are more distinguished than their corresponding trajectories in Fig.~\ref{fig:Grid_Reconst}(a).

\section{Concluding Remarks and Future Work}
\label{sec:Conclusion}

We presented a distributed formation control strategy for a team of agents with a variety of dynamics to autonomously achieve a desired planar formation. Under the assumption that the sensing graph is undirected and universally rigid, we showed that formation control gains can be designed by solving a SDP problem. This design enjoys several robustness properties, such as robustness to positive scaling and rotation (up to $\pm 90^\circ$) of the control vector, saturations in the input, and switches in the sensing topology.  
The control was extended to agents with higher-order linear (or linearizable) holonomic dynamics, such as quadrotors, followed by further extension to agents with nonholonomic unicycle and car dynamics.
An important outcome of this work was to show that under the proposed control the convergence and robustness guarantees hold for agents with more complex dynamics. Further, a fully distributed collision avoidance algorithm emerged naturally from the robustness properties.
To typify the control, simulations for vehicles with different dynamics were presented, and experiments on a distributed robotic platform where performed.

Future work includes investigating additional requirements, such as inter-agent communication, to guarantee that the collision avoidance algorithm can overcome gridlock scenarios. 
Moreover, inter-agent communication can be exploited in a distributed optimization scheme to solve the SDP problem in a decentralized way. 
Other possible research avenues include formation control of heterogeneous vehicles and time-varying formations.

\section{Acknowledgments}

We would like to thank Giampiero Campa and Danvir Sethi at Mathworks for providing the Matlab's Sphero connectivity package \cite{Sphero} that is used in the proposed robotic platform.


\ifCLASSOPTIONcaptionsoff
  \newpage
\fi

\balance 

\bibliographystyle{IEEEtran}
\bibliography{FromCtrlSphero_Bibs}

\begin{thebibliography}{10}
\providecommand{\url}[1]{#1}
\csname url@samestyle\endcsname
\providecommand{\newblock}{\relax}
\providecommand{\bibinfo}[2]{#2}
\providecommand{\BIBentrySTDinterwordspacing}{\spaceskip=0pt\relax}
\providecommand{\BIBentryALTinterwordstretchfactor}{4}
\providecommand{\BIBentryALTinterwordspacing}{\spaceskip=\fontdimen2\font plus
\BIBentryALTinterwordstretchfactor\fontdimen3\font minus
  \fontdimen4\font\relax}
\providecommand{\BIBforeignlanguage}[2]{{%
\expandafter\ifx\csname l@#1\endcsname\relax
\typeout{** WARNING: IEEEtran.bst: No hyphenation pattern has been}%
\typeout{** loaded for the language `#1'. Using the pattern for}%
\typeout{** the default language instead.}%
\else
\language=\csname l@#1\endcsname
\fi
#2}}
\providecommand{\BIBdecl}{\relax}
\BIBdecl

\bibitem{Keller2017}
J.~Keller, D.~Thakur, M.~Likhachev, J.~Gallier, and V.~Kumar, ``Coordinated
  path planning for fixed-wing {UAS} conducting persistent surveillance
  missions,'' \emph{IEEE Transactions on Automation Science and Engineering},
  vol.~14, no.~1, pp. 17--24, 2017.

\bibitem{Thomas2017}
J.~Thomas, J.~Welde, G.~Loianno, K.~Daniilidis, and V.~Kumar, ``Autonomous
  flight for detection, localization, and tracking of moving targets with a
  small quadrotor,'' \emph{Robotics and Automation Letters}, vol.~2, no.~3, pp.
  1762--1769, 2017.

\bibitem{Dorling2017}
K.~Dorling, J.~Heinrichs, G.~G. Messier, and S.~Magierowski, ``Vehicle routing
  problems for drone delivery,'' \emph{Transactions on Systems, Man, and
  Cybernetics: Systems}, vol.~47, no.~1, pp. 70--85, 2017.

\bibitem{Wang2016a}
Z.~Wang and M.~Schwager, ``Kinematic multi-robot manipulation with no
  communication using force feedback,'' in \emph{International Conference on
  Robotics and Automation}.\hskip 1em plus 0.5em minus 0.4em\relax IEEE, 2016,
  pp. 427--432.

\bibitem{Saldana2017}
D.~Saldana, B.~Gabrich, M.~Whitzer, A.~Prorok, M.~F. Campos, M.~Yim, and
  V.~Kumar, ``A decentralized algorithm for assembling structures with modular
  robots,'' in \emph{International Conference on Intelligent Robots and
  Systems}.\hskip 1em plus 0.5em minus 0.4em\relax IEEE, 2017, pp. 2736--2743.

\bibitem{Baehnemann2017}
R.~Bahnemann, D.~Schindler, M.~Kamel, R.~Siegwart, and J.~Nieto, ``A
  decentralized multi-agent unmanned aerial system to search, pick up, and
  relocate objects,'' in \emph{IEEE International Symposium on Safety, Security
  and Rescue Robotics}, Oct 2017, pp. 123--128.

\bibitem{Barca2013}
J.~C. Barca and Y.~A. Sekercioglu, ``Swarm robotics reviewed,''
  \emph{Robotica}, vol.~31, no.~3, pp. 345--359, 2013.

\bibitem{Yan2013}
Z.~Yan, N.~Jouandeau, and A.~A. Cherif, ``A survey and analysis of multi-robot
  coordination,'' \emph{International Journal of Advanced Robotic Systems},
  vol.~10, no.~12, p. 399, 2013.

\bibitem{Oh2015}
K.-K. Oh, M.-C. Park, and H.-S. Ahn, ``A survey of multi-agent formation
  control,'' \emph{Automatica}, vol.~53, pp. 424--440, 2015.

\bibitem{Michael2008}
N.~Michael, M.~M. Zavlanos, V.~Kumar, and G.~J. Pappas, ``Distributed
  multi-robot task assignment and formation control,'' in \emph{International
  Conference on Robotics and Automation}.\hskip 1em plus 0.5em minus
  0.4em\relax IEEE, 2008, pp. 128--133.

\bibitem{Hyun2016}
N.-s.~P. Hyun, P.~A. Vela, and E.~I. Verriest, ``Collision free and permutation
  invariant formation control using the root locus principle,'' in
  \emph{American Control Conference}.\hskip 1em plus 0.5em minus 0.4em\relax
  IEEE, 2016, pp. 2572--2577.

\bibitem{Motoyama2017}
T.~Motoyama and K.~Cai, ``Top-down synthesis of multi-agent formation control:
  An eigenstructure assignment based approach,'' in \emph{American Control
  Conference}.\hskip 1em plus 0.5em minus 0.4em\relax IEEE, 2017, pp. 259--264.

\bibitem{Lawton2003}
J.~R. Lawton, R.~W. Beard, and B.~J. Young, ``A decentralized approach to
  formation maneuvers,'' \emph{IEEE transactions on robotics and automation},
  vol.~19, no.~6, pp. 933--941, 2003.

\bibitem{Ren2007}
W.~Ren, ``Consensus strategies for cooperative control of vehicle formations,''
  \emph{IET Control Theory \& Applications}, vol.~1, no.~2, pp. 505--512, 2007.

\bibitem{Montijano2014}
E.~Montijano, D.~Zhou, M.~Schwager, and C.~Sagues, ``Distributed formation
  control without a global reference frame,'' in \emph{American Control
  Conference}.\hskip 1em plus 0.5em minus 0.4em\relax IEEE, 2014, pp.
  3862--3867.

\bibitem{Lee2016}
B.-H. Lee and H.-S. Ahn, ``Distributed formation control via global orientation
  estimation,'' \emph{Automatica}, vol.~73, pp. 125--129, 2016.

\bibitem{Oh2012}
K.-K. Oh and H.-S. Ahn, ``Formation control of mobile agents based on
  distributed position estimation,'' \emph{IEEE Transactions on Automatic
  Control}, vol.~58, no.~3, pp. 737--742, 2012.

\bibitem{Olfati-Saber2002}
R.~Olfati-Saber and R.~M. Murray, ``Distributed cooperative control of multiple
  vehicle formations using structural potential functions,'' in \emph{IFAC
  World Congress}, 2002, pp. 346--352.

\bibitem{Krick2009}
L.~Krick, M.~E. Broucke, and B.~A. Francis, ``Stabilisation of infinitesimally
  rigid formations of multi-robot networks,'' \emph{International Journal of
  Control}, vol.~82, no.~3, pp. 423--439, 2009.

\bibitem{Tian2013}
Y.-P. Tian and Q.~Wang, ``Global stabilization of rigid formations in the
  plane,'' \emph{Automatica}, vol.~49, no.~5, pp. 1436--1441, 2013.

\bibitem{Basiri2010}
M.~Basiri, A.~N. Bishop, and P.~Jensfelt, ``Distributed control of triangular
  formations with angle-only constraints,'' \emph{Systems \& Control Letters},
  vol.~59, no.~2, pp. 147--154, 2010.

\bibitem{Deghat2015}
M.~Deghat and A.~N. Bishop, ``Distributed shape control and collision avoidance
  for multi-agent systems with bearing-only constraints,'' in \emph{European
  Control Conference}.\hskip 1em plus 0.5em minus 0.4em\relax IEEE, 2015, pp.
  2342--2347.

\bibitem{Zhao2015}
S.~Zhao and D.~Zelazo, ``Bearing rigidity and almost global bearing-only
  formation stabilization,'' \emph{IEEE Transactions on Automatic Control},
  vol.~PP, no.~99, pp. 1--1, 2015.

\bibitem{Trinh2018}
M.~H. Trinh, S.~Zhao, Z.~Sun, D.~Zelazo, B.~D. Anderson, and H.-S. Ahn,
  ``Bearing-based formation control of a group of agents with leader-first
  follower structure,'' \emph{IEEE Transactions on Automatic Control}, 2018.

\bibitem{Lin2016a}
Z.~Lin, L.~Wang, Z.~Chen, M.~Fu, and Z.~Han, ``Necessary and sufficient
  graphical conditions for affine formation control,'' \emph{IEEE Transactions
  on Automatic Control}, vol.~61, no.~10, pp. 2877--2891, 2016.

\bibitem{Bishop2013}
A.~N. Bishop, T.~H. Summers, and B.~D. Anderson, ``Stabilization of stiff
  formations with a mix of direction and distance constraints,'' in \emph{IEEE
  International Conference on Control Applications}, 2013, pp. 1194--1199.

\bibitem{Bishop2014}
A.~N. Bishop, M.~Deghat, B.~Anderson, and Y.~Hong, ``Distributed formation
  control with relaxed motion requirements,'' \emph{International Journal of
  Robust and Nonlinear Control}, 2014.

\bibitem{Fathian2016}
K.~Fathian, D.~I. Rachinskii, M.~W. Spong, and N.~R. Gans, ``Globally
  asymptotically stable distributed control for distance and bearing based
  multi-agent formations,'' in \emph{American Control Conference}.\hskip 1em
  plus 0.5em minus 0.4em\relax IEEE, 2016, pp. 4642--4648.

\bibitem{Fathian2016a}
K.~Fathian, D.~I. Rachinskii, T.~H. Summers, and N.~R. Gans, ``Distributed
  control of cyclic formations with local relative position measurements,'' in
  \emph{Conference on Decision and Control}.\hskip 1em plus 0.5em minus
  0.4em\relax IEEE, 2016, pp. 49--56.

\bibitem{Marina2017a}
H.~G. de~Marina, B.~Jayawardhana, and M.~Cao, ``Distributed algorithm for
  controlling scale-free polygonal formations,'' \emph{IFAC-PapersOnLine},
  vol.~50, no.~1, pp. 1760--1765, 2017.

\bibitem{Han2018}
T.~Han, Z.~Lin, R.~Zheng, and M.~Fu, ``A barycentric coordinate-based approach
  to formation control under directed and switching sensing graphs,''
  \emph{IEEE Transactions on cybernetics}, vol.~48, no.~4, pp. 1202--1215,
  2018.

\bibitem{Zhao2019}
S.~Zhao and D.~Zelazo, ``Bearing rigidity theory and its applications for
  control and estimation of network systems: {Life} beyond distance rigidity,''
  \emph{IEEE Control Systems Magazine}, vol.~39, no.~2, pp. 66--83, 2019.

\bibitem{Lin2014}
Z.~Lin, L.~Wang, Z.~Han, and M.~Fu, ``Distributed formation control of
  multi-agent systems using complex {Laplacian},'' \emph{IEEE Transactions on
  Automatic Control}, vol.~59, no.~7, pp. 1765--1777, July 2014.

\bibitem{Lin2016}
Z.~Lin, L.~Wang, Z.~Han, and v.~Minyue~Fu, ``A graph {Laplacian} approach to
  coordinate-free formation stabilization for directed networks,'' \emph{IEEE
  Transactions on Automatic Control}, vol.~61, no.~5, pp. 1269--1280, May 2016.

\bibitem{Zhao2017}
S.~Zhao, D.~V. Dimarogonas, Z.~Sun, and D.~Bauso, ``A general approach to
  coordination control of mobile agents with motion constraints,'' \emph{IEEE
  Transactions on Automatic Control}, 2017.

\bibitem{Zhao2018}
S.~Zhao, ``Affine formation maneuver control of multi-agent systems,''
  \emph{IEEE Transactions on Automatic Control}, 2018.

\bibitem{Fathian2017}
K.~Fathian, D.~I. Rachinskii, T.~H. Summers, M.~W. Spong, and N.~R. Gans,
  ``Distributed formation control under arbitrarily changing topology,'' in
  \emph{American Control Conference}.\hskip 1em plus 0.5em minus 0.4em\relax
  IEEE, 2017, pp. 271--278.

\bibitem{Fathian2018}
K.~Fathian, T.~H. Summers, and N.~R. Gans, ``Robust distributed formation
  control of agents with higher-order dynamics,'' \emph{Control Systems
  Letters}, 2018.

\bibitem{Fathian2019}
K.~Fathian, D.~I. Rachinskii, M.~W. Spong, T.~H. Summers, and N.~R. Gans,
  ``Distributed formation control via mixed barycentric coordinate and
  distance-based approach,'' in \emph{American Control Conference}, 2019, pp.
  51--58.

\bibitem{Fathian2018b}
K.~Fathian, T.~H. Summers, and N.~R. Gans, ``Distributed formation control and
  navigation of fixed-wing {UAVs} at constant altitude,'' in
  \emph{International Conference on Unmanned Aircraft Systems}.\hskip 1em plus
  0.5em minus 0.4em\relax IEEE, 2018, pp. 300--307.

\bibitem{Fathian2019a}
K.~Fathian, S.~Safaoui, T.~H. Summers, and N.~R. Gans, ``Robust {3D}
  distributed formation control with collision avoidance and application to
  multirotor aerial vehicles,'' in \emph{International Conference on Robotics
  and Automation}.\hskip 1em plus 0.5em minus 0.4em\relax IEEE, 2019.

\bibitem{Lusk2020}
P.~C. Lusk, X.~Cai, S.~Wadhwania, A.~Paris, K.~Fathian, and J.~P. How, ``A
  distributed pipeline for scalable, deconflicted formation flying,''
  \emph{arXiv preprint arXiv:2003.01851}, 2020.

\bibitem{Wang2013}
L.~Wang, Z.~Han, and Z.~Lin, ``Realizability of similar formation and local
  control of directed multi-agent networks in discrete-time,'' in \emph{IEEE
  Conference on Decision and Control}, 2013, pp. 6037--6042.

\bibitem{Boyd2006}
S.~Boyd, ``Convex optimization of graph {Laplacian} eigenvalues,'' in
  \emph{Proceedings of the International Congress of Mathematicians}, vol.~3,
  no. 1-3, 2006, pp. 1311--1319.

\bibitem{Tuetuencue2003}
R.~H. T{\"u}t{\"u}nc{\"u}, K.-C. Toh, and M.~J. Todd, ``Solving
  semidefinite-quadratic-linear programs using sdpt3,'' \emph{Mathematical
  programming}, vol.~95, no.~2, pp. 189--217, 2003.

\bibitem{cvx}
M.~Grant and S.~Boyd, ``{CVX}: Matlab software for disciplined convex
  programming, version 2.1,'' \url{http://cvxr.com/cvx}, Mar. 2014.

\bibitem{Khalil1996}
H.~K. Khalil, ``Noninear systems,'' \emph{Prentice-Hall, New Jersey}, vol.~2,
  no.~5, pp. 5--1, 1996.

\bibitem{Liberzon2012}
D.~Liberzon, \emph{Switching in systems and control}.\hskip 1em plus 0.5em
  minus 0.4em\relax Springer Science \& Business Media, 2012.

\bibitem{Fathian2018a}
K.~Fathian, E.~Doucette, J.~W. Curtis, and N.~R. Gans, ``Vision-based
  distributed formation control of unmanned aerial vehicles,'' \emph{arXiv
  preprint arXiv:1809.00096}, 2018.

\bibitem{Sabatino2015}
F.~Sabatino, ``Quadrotor control: modeling, nonlinear control design, and
  simulation,'' KTH, School of Electrical Engineering, 2015.

\bibitem{DeLuca1998}
A.~De~Luca, G.~Oriolo, and C.~Samson, ``Feedback control of a nonholonomic
  car-like robot,'' \emph{Robot motion planning and control}, pp. 171--253,
  1998.

\bibitem{Wang2017}
L.~Wang, A.~D. Ames, and M.~Egerstedt, ``Safety barrier certificates for
  collisions-free multirobot systems,'' \emph{IEEE Transactions on Robotics},
  vol.~33, no.~3, pp. 661--674, 2017.

\bibitem{Jin2015}
J.~Jin, Y.-G. Kim, S.-G. Wee, and N.~Gans, ``Decentralized cooperative mean
  approach to collision avoidance for nonholonomic mobile robots,'' in
  \emph{International Conference on Robotics and Automation}.\hskip 1em plus
  0.5em minus 0.4em\relax IEEE, 2015, pp. 35--41.

\bibitem{Boyd2011}
S.~Boyd, N.~Parikh, E.~Chu, B.~Peleato, J.~Eckstein \emph{et~al.},
  ``Distributed optimization and statistical learning via the alternating
  direction method of multipliers,'' \emph{Foundations and Trends in Machine
  learning}, vol.~3, no.~1, pp. 1--122, 2011.

\bibitem{Corke2017}
P.~Corke, \emph{Robotics, Vision and Control: Fundamental Algorithms In
  MATLAB}.\hskip 1em plus 0.5em minus 0.4em\relax Springer, 2017, vol. 118.

\bibitem{Lepetit2009}
V.~Lepetit, F.~Moreno-Noguer, and P.~Fua, ``{EPnP}: An accurate {O}(n) solution
  to the {PnP} problem,'' \emph{International journal of computer vision},
  vol.~81, no.~2, p. 155, 2009.

\bibitem{Sphero}
D.~Sethi and G.~Campa, ``Sphero connectivity package,''
  \url{http://www.mathworks.com/matlabcentral/fileexchange/52481-sphero-connectivity-package},
  Mar. 2018.

\end{thebibliography}

\end{document}